\pdfoutput=1
\documentclass[11pt]{article}
\usepackage{microtype}
\usepackage{graphicx}
\usepackage{natbib}
\usepackage{booktabs}
\usepackage{hyperref}
\usepackage{xcolor}
\usepackage[letterpaper, margin=1in]{geometry}
\usepackage{float}
\linespread{1.0}
\usepackage{booktabs}
\usepackage{colortbl}
\usepackage{xcolor}
\usepackage{adjustbox} 
\usepackage{enumitem}  

\usepackage{amssymb,amsmath,amsthm, bm, amsfonts}
\usepackage{empheq}
\usepackage{algorithm,algorithmic}
\usepackage{subcaption}
\usepackage{longtable}
\usepackage{tikz}
\usepackage{fancyvrb}
\VerbatimFootnotes
\usetikzlibrary{shapes.geometric, arrows.meta, positioning, shadows}
\usepackage{times}
\usepackage{float}

\renewcommand{\subset}{\subseteq}

\tikzstyle{box} = [rectangle, draw, fill=blue!10, text width=8em, text centered, rounded corners, minimum height=4em]
\tikzstyle{koopman-space} = [ellipse, draw, fill=red!10, text width=10em, text centered, minimum height=8em, dashed]
\tikzstyle{arrow} = [thick,->,>=Stealth]
\tikzstyle{dashed-arrow} = [thick,->,>=Stealth, dashed]

\newif\ifanonymous



\DeclareMathOperator*{\argmin}{argmin}

\def\K{{\mathcal K}}

\def\R{{\mathcal R}}

\newcommand{\norm}[1]{\left\lVert#1\right\rVert}

\newcommand{\ignore}[1]{}

\def\mA{{\mathcal A}}

\def\bold0{\mathbf{0}}



\def\mA{{\mathcal A}}

\def\regret{\ensuremath{\mbox{Regret}}}

\def\epsilon{\varepsilon}

\def\R{\ensuremath{\mathcal R}}


%

\newtheorem{theorem}{Theorem}[section]

\newtheorem{lemma}[theorem]{Lemma}
\newtheorem{corollary}[theorem]{Corollary}

\newtheorem{definition}[theorem]{Definition}

\newtheorem{example}[theorem]{Example}

\newtheorem{assumption}[theorem]{Assumption}

\newcommand{\newreptheorem}[2]{%
\newenvironment{rep#1}[1]{%
 \def\rep@title{#2 \ref{##1}}%
 \begin{rep@theorem}}%
 {\end{rep@theorem}}}

\newreptheorem{theorem}{Theorem}
\newreptheorem{lemma}{Lemma}
\newreptheorem{proposition}{Proposition}
\newreptheorem{claim}{Claim}
\newreptheorem{corollary}{Corollary}
\newreptheorem{mainlemma}{Main Lemma}

\newtheorem{remark}[theorem]{Remark}


\newcommand{\namedref}[2]{\mbox{\hyperref[#2]{#1~\ref*{#2}}}}

\newcommand{\figurerefb}[2]{\mbox{\hyperref[#1]{Figure~\ref*{#1}#2}}}

\newcommand{\equationref}[1]{\mbox{\hyperref[#1]{(\ref*{#1})}}}
\renewcommand{\eqref}{\equationref}


\numberwithin{equation}{section}



\begin{document}
\title{Universal Learning of Nonlinear Dynamics}

\author{
  Evan Dogariu\thanks{Courant Institute of Mathematical Sciences, New York University} \and
  Anand Brahmbhatt\thanks{Princeton University } \and
  Elad Hazan\footnotemark[2]
}
\maketitle

\begin{abstract}\label{sec:0-abstract}

\ignore{
We introduce a new approach to learning dynamical systems based on \emph{improper learning} and spectral filtering. 
Instead of identifying the true dynamics, our method competes with a tractable comparator class: the best high-dimensional linear observer for the observed data. 
This shift avoids the nonconvexity and ill-conditioning of system identification while preserving finite-time performance guarantees, even for marginally stable, noisy, and nonlinear systems.

Our main algorithm, \emph{Online Spectral Filtering} (OSF), is an efficient online convex optimization method that provably learns any marginally stable, observable nonlinear dynamical system with regret bounded by $Q^\star \sqrt{T}$, where $Q^\star$ is a control-theoretic condition number defined through a Luenberger observer program. 
At its core is a new spectral filtering technique for linear dynamical systems that, for the first time, handles general asymmetric dynamics with adversarial process noise and without dependence on hidden state dimension. 
We also present a constructive discretization-based lifting from bounded Lipschitz nonlinear systems to high-dimensional linear systems, providing a principled way to extend OSF beyond the linear setting.

Experiments on synthetic systems, the Lorenz attractor, and Brax physics environments confirm the theoretical predictions, showing that OSF achieves high accuracy, robustness to noise, and the expected scaling with $Q^\star$, and outperforms strong baselines including eDMD and direct observer learning. 
This work bridges tools from control theory and online learning, illustrating how spectral filtering over a well-chosen comparator class can yield algorithms that are both provable and practical for complex dynamical systems.
}

We study the fundamental problem of learning a marginally stable unknown nonlinear dynamical system. We describe an algorithm for this problem, based on the technique of spectral filtering, which learns a mapping from past observations to the next based on a spectral representation of the system. Using techniques from online convex optimization, we prove vanishing prediction error for any nonlinear dynamical system that has finitely many marginally stable modes, with rates governed by a novel quantitative control-theoretic notion of learnability. The main technical component of our method is a new spectral filtering algorithm for linear dynamical systems, which incorporates past observations and applies to general noisy and marginally stable systems. This significantly generalizes the original spectral filtering algorithm to both asymmetric dynamics as well as incorporating noise correction, and is of independent interest.

\end{abstract}


\section{Introduction}\label{sec:1-introduction}

\renewcommand{\R}{\mathbb{R}}

The problem of learning a dynamical system from observations is a cornerstone of scientific and engineering inquiry. Across disciplines ranging from econometrics and meteorology to robotics and neuroscience, a central task is to predict the future evolution of a system given only a sequence of its past measurements. Formally, given a sequence of observations  $(y_1, ...,y_t)$ generated by an unknown, underlying
dynamical system\footnote{For ease of presentation, we keep the discussion deterministic in the main paper and discuss an extension to stochastic systems in Appendix \ref{sec:nl-w-noise}.} of the form 
\begin{equation}
    \label{eqn:NLDS}
    x_{t+1} = f(x_t), \quad y_t = h(x_t) ,
\end{equation}
the objective is to produce an accurate forecast $\hat{y}_{t+1}$. The challenge lies in the fact that the state $x_t$ is hidden and may be high-dimensional, and the dynamics function $f: \mathcal{X} \to \mathcal{X}$ and observation function $h: \mathcal{X} \to \mathcal{Y}$ are unknown.

Historically, two dominant paradigms have emerged to address this challenge. The first, rooted in control theory and system identification, aims to construct an explicit model of the underlying dynamics. This model-based approach seeks to learn the function $f$ or, more commonly, a linear representation of it. A powerful theoretical tool in this domain is the Koopman operator, which recasts the nonlinear dynamics of states into an infinite-dimensional, yet linear, evolution of observable functions. Data-driven methods such as Dynamic Mode Decomposition (DMD) attempt to find finite-dimensional approximations of this linear operator, yielding a model that can be used for prediction and control \cite{brunton2021modernkoopmantheorydynamical}. While elegant, these methods often rely on strong assumptions to guarantee the convergence of their approximations and can be computationally demanding. 

The second paradigm, driven by recent advances in machine learning, employs powerful, general-purpose black-box architectures to learn the mapping from past to future observations directly. Architectures such as Transformers \cite{vaswani2017attention} and more recent state-space models like Mamba \cite{gu2023mamba} and convolutional models like Hyena \cite{poli2023hyena} have demonstrated remarkable success in sequence modeling tasks. These methods are highly expressive and often achieve state-of-the-art performance, but their operation can be opaque, and they typically lack the formal performance guarantees characteristic of control-theoretic approaches, especially in the presence of adversarial disturbances or for systems operating at the edge of stability.

\paragraph{Provable improper learning for dynamical systems.}
This paper advocates for a third paradigm: improper learning for dynamical systems. The philosophy of this approach is that for the specific goal of prediction, it is not necessary to recover the true, and often intractably complex, underlying dynamics. Instead, one can design an efficient and provably correct algorithm by framing the learning problem as a competition. The algorithm does not attempt to learn a model from the same class as the true system (hence, "improper"); rather, it competes in hindsight against a class of idealized, well-behaved, and computationally tractable predictors. 

This reframing of the problem is a key conceptual leap. It shifts the focus from the difficult, nonconvex problem of system identification to the more manageable, often convex, problem of regret minimization against a carefully chosen comparator class. The central idea of this work is to leverage the rich mathematical structure of control theory not for designing a controller, but for defining this very comparator class. Instead of attempting to learn the true nonlinear system, we design an algorithm that is guaranteed to perform as well as the best possible high-dimensional linear observer system for the given observation sequence. This approach allows us to sidestep the complexities of nonlinear system identification while retaining strong theoretical guarantees. Furthermore, our analysis yields a natural quantitative description of the complexity of learning from observations with improper algorithms.

\paragraph{Our contributions.}
This work makes several contributions to the theory and practice of learning dynamical systems, bridging concepts from classical control theory and modern machine learning.
\begin{enumerate}
    \item 
We introduce \textbf{Observation Spectral Filtering} (OSF), an efficient algorithm based on online convex optimization that is guaranteed to learn any observable nonlinear dynamical system with finitely many marginally stable modes. The algorithm is "improper" in that it does not explicitly estimate the system's hidden states or dynamics; instead, it directly constructs a mapping from past observations to future predictions. 

\item 
\textbf{An LDS guarantee:} The core technical engine of OSF is a learning guarantee for linear dynamical systems (LDS). This method applies to asymmetric dynamics under adversarial noise with performance guarantees that are independent of the system's hidden dimension. This significantly generalizes prior spectral filtering methods, which were largely restricted to symmetric systems, and aligns our work with recent efforts to extend these powerful techniques to broader system classes. 

\item
\textbf{A control-theoretic analysis:} We provide a novel analytical framework that connects the learnability of a nonlinear system to the properties of its best possible high-dimensional linear observer. This connection is formalized through a "Luenberger program," an optimization problem whose solution quantifies the inherent difficulty of learning the system from its outputs. The optimal value of this program, a condition number we denote $Q_\star$, appears directly in our data-dependent learning bounds. 
This framework establishes a quantitative link between the control-theoretic concept of observability and the machine learning concept of learnability, and provides an elegant analytic toolkit with which to derive learning guarantees in nonlinear settings using linear methods. We discuss further implications in Section \ref{sec:discussion}.

\item
\textbf{A simple global linearization:} We introduce a simple construction to approximate any bounded, Lipschitz nonlinear system with a high-dimensional LDS. This technique, based on state-space discretization, provides a rigorous foundation for applying our linear systems algorithm to the nonlinear case. It trades an increase in dimensionality—to which our learning algorithm is immune—for guaranteed linearity and approximation accuracy, and it uses the improper nature of our analysis to circumvent the strong spectral assumptions and convergence difficulties associated with many contemporary data-driven Koopman operator methods. 
\end{enumerate}

Ultimately, this work presents a synthesis of ideas from two distinct fields. It repurposes tools from control theory (such as the Luenberger observer, pole placement, and global linearization) not for their traditional purpose of observer/controller design, but as analytical instruments to define a robust comparator class for a learning algorithm.  By then applying the machinery of online convex optimization, we develop an algorithm whose performance guarantees are directly informed by the structural properties of the underlying prediction problem.  This combination of a constructive global linearization and  spectral filtering yields a universal learning algorithm for any observable nonlinear dynamical system with finitely many marginally stable modes, with regret scaling determined only by the system’s robust observability constant $Q_\star$.

\ignore{
\newpage

The fundamental learning problem in the field of control for dynamical systems is learning a nonlinear dynamical system from observations. Let  $\mathcal{X} \subseteq \R^{d_\mathcal{X}}$ be a state space, and $\mathcal{Y} \subseteq \R^{d_{\mathcal{Y}}}$ be an observation space. A (deterministic or stochastic) dynamical system is described by the equations 
\begin{align}\label{eqn:NLDS}
x_{t+1} = f(x_t) ,\quad  y_{t+1} = h(x_{t+1}) , \end{align}
where the dynamics function is $f: \mathcal{X} \to \mathcal{X}$, the observation function is $h: \mathcal{X} \to \mathcal{Y}$, and $x_0 \in \mathcal{X}$ is the initial state of the system. For simplicity of presentation, we restrict the introduction to deterministic dynamics, but our results generalize to stochastic systems and are detailed henceforth. 
The goal is to accurately predict $y_{t+1}$ given the sequence of observations $(y_0, \ldots, y_t)$,  without knowing the dynamics. For example, if we chose the mean square error as a metric, our goal is to predict a sequence of $\hat{y}_t$, such that as $T \to \infty$ the average loss decays:
$$  \frac{1}{T} \sum_{t =0}^T \norm{\hat{y}_t - y_t}^2 \to  0 .$$
An algorithm that achieves this is said to learn the dynamical system. 
Numerous methods have been proposed for learning dynamical systems, either by attempting to learn the dynamics function itself or by learning a high-dimensional linear representation of the dynamics, which may be constructed using the Koopman operator of the system. 

In contrast, our main result is an efficient algorithm which is {\it improper}, i.e. it does not try to recover what the dynamics function $f$ or the hidden states $x_t$ are. Instead, it constructs a mapping from past observations to future predictions while guaranteeing learning the system. An informal statement of our main result is as follows:

\begin{theorem}[Informal main theorem, see Theorem \ref{theorem:main_nonlinear}]
Let $(f,h, x_0)$ be a nonlinear dynamical system which has finitely-many marginally stable modes and is observable. Then, 
spectral filtering over past observations $y_1, \ldots, y_t$ produces a sequence of predictions $\hat{y}_{t+1}$ such that $  \frac{1}{T} \sum_{t =0}^T \norm{\hat{y}_t - y_t}^2 \to 0 $.
\end{theorem}
The main new technical tool used in our result is a new variant of spectral filtering for linear dynamical systems, stated in Theorem \ref{theorem:main_linear}. 
}

\section{Related Work}

In this section, we survey the past and current methods for sequence prediction in dynamical systems, highlighting their relationships, strengths, and limitations.
While it is impossible to provide an exhaustive account, we highlight the works most relevant to our study.

\subsection{Methods for Learning in Nonlinear Systems}

Learning in nonlinear dynamical systems of the form (\ref{eqn:NLDS}), also called time series prediction in statistics, spans several fields of study. We can roughly divide the existing methods as follows:

\paragraph{Statistical and online learning methods.}

The classical text of Box and Jenkins \cite{box1976time} introduces ARMA (autoregressive moving average) statistical estimation techniques and their extensions, notably ARIMA. These methods are ubiquitous in applications and were extended to the adversarial online learning domain in \cite{anava2013online,kuznetsov2018theory} via online convex optimization \cite{hazan2016introduction}. Later extensions include nonlinear autoregressive methods augmented with kernels, e.g. \cite{cuturi2011autoregressive}.  
Also prominent in the statistical literature are nonparameteric time series methods such as Gaussian processes \cite{rasmussen2003gaussian,murphy2012machine}.

\paragraph{Deep learning methods.}

The most successful methods for learning dynamical systems are based on deep learning. Transformers \cite{vaswani2017attention} are the most widely used architectures that form the basis for large language models. More recent deep architectures are based on convolutional models such as Hyena \cite{poli2023hyena} and spectral architectures such as FlashSTU \cite{liu2024flash}.
Recurrent neural networks and state space models such as Mamba \cite{gu2023mamba} exhibit faster inference and require lower memory, albeit being harder to train and potentially less expressive. 

\paragraph{Koopman operator methods.}

The most relevant technique for learning nonlinear dynamical systems is via the Koopman operator. This methodology relies on the mathematical fact that any (partially observed) dynamical system can be lifted to a linear representation in an infinite-dimensional space. 
The basic idea is to represent the evolution of the system by describing the evolution of all possible system observables via the Koopman operator, which becomes linear and allows for functional-analytic and spectral methods. Detailed theoretical study of this operator can require complicated analysis and strong assumptions; a simple version of a related linearization can be achieved via discretization, which we present in Section \ref{sec:koopman-discrete}. 

Data-driven methods for Koopman operator learning are based on learning an approximate representation of the dynamics in high-dimensional linearly-evolving coordinates. Knowing the top $N$ eigenvalues and eigenfunctions of the Koopman operator allows one to approximately simulate, predict, and control the system using $N$-dimensional linear methods. These eigenfunctions may be learned from data via the use of SVD on the trajectories, a predetermined dictionary of reference functions, or a neural network, which inspires the classical DMD and eDMD algorithms. The linear coordinates are often learned jointly with the dynamics in this linear representation via regression. In short, this pipeline learns a lifting to a linear system on which system identification is performed.
See the survey \cite{brunton2021modernkoopmantheorydynamical} for more details about this approach.

In contrast, our method is {\it improper}, i.e. it does not learn the mapping of the nonlinear dynamical system to higher dimensions explicitly. We rely on the spectral filtering method to automatically learn the best predictor that competes with a linear predictor with very high-dimensional hidden state.

\subsection{Methods for Learning in Linear Systems}

The simplest and most commonly studied dynamical systems in the sciences and engineering are linear. 
Given input vectors $u_1, \dots, u_T \in \R^{d_{\textrm{in}}}$, the system generates a sequence of output vectors $y_1, \dots y_T \in \R^{d_{\textrm{out}}}$ according to the following equations 
\begin{align}
\label{eqn:LDS}
    {x}_{t+1} & = A {x}_t + B u_t + w_t \\
    y_t & = C {x}_t + {D} u_t + \xi_t, \notag
\end{align}
where ${x}_0, \dots, {x}_T \in \R^{d_h}$ is a sequence of hidden states, $(A, B, C, {D})$ are matrices which parameterize the LDS, and $w_t, \xi_t$ are perturbation terms. We assume w.l.o.g. that ${D},\xi_t = 0$, as these terms can be folded into the input of the previous iteration. The problem of {\it learning} in such systems refers to predicting the next observation $y_t$ given all previous inputs $u_{1:t}$ and past observations $y_{1:t-1}$.

The fundamental problem of learning in linear dynamical systems has been studied for many decades. In complex systems the hidden dimension can be very large, and it is thus crucial to avoid explicit complexity in this parameter. Other sources of difficulty come from the spectral properties of the dynamics matrix $A$ such as its spectral radius, which determines how close it is to marginal stability. Asymmetry of $A$ and the presence of process noise $w$ can further complicate the situation. 
We briefly survey the extensive and decades-spanning literature on this topic, focusing on the main approaches or techniques. 

\paragraph{System identification.}
System identification refers to the method of recovering $A,B,C$ from the data. This is a hard nonconvex problem that is known to fail if the system is marginally stable, i.e. has eigenvalues that are close to one. Also, recovering the system means time and memory proportional to the hidden dimension, which can be extremely large. \cite{hardt2018gradient} shows that the matrices $A,B,C$ of an unknown and partially observed stable linear dynamical system can be learned using first-order optimization methods. 

In the statistical noise setting, a natural approach for learning is to perform system identification and then use the identified system for prediction. This approach was taken by \cite{simchowitz2018learning,pmlr-v119-simchowitz20a,pmlr-v97-sarkar19a}. The work of \cite{ghai2020no} extends identification-based techniques to adversarial noise and marginally stable systems. 
The work of \cite{simchowitz2019learning} presented a regression procedure for identifying Markov operators under nonstochastic (or even adversarial) noise. This approach has roots in the classical Ho-Kalman system identification procedure \cite{ho1966effective}, for which the first non-asymptotic sample complexity, although for stochastic noise, was furnished in \cite{oymak2019non}.  Nonstochastic control of unknown, unstable systems without any access to prior information, also called black-box control, was addressed in \cite{chen2021black}.

\paragraph{Regression a.k.a. convex relaxation. } The (auto) regression method predicts according to $\hat{y}_{t} = \sum_{i=0}^m M_i u_{t-i}$. The coefficients $M_i$ can be learned using convex regression, and without dependence on the hidden dimension. The downside of this approach is that if the spectral radius of $A$ is $1-\delta$, it can be seen that $m \approx \frac{1}{\delta}$ terms are needed. 
The regression method can be further enhanced with ``closed loop" components, that regress on prior observations $y_{t-1:1}$. It can be shown using the Cayley-Hamilton theorem that using this method, $d_h$ components are needed, see e.g.  \cite{kozdoba2019line,hazan2017learning}.

\paragraph{Method of moments.}

\cite{hazan2019nonstochastic} give a method-of-moments system identification procedure as well as the first end-to-end regret result for nonstochastic control starting with unknown system matrices.  Extension of these methods to tensors were used in \cite{bakshi2023new}, and in \cite{bakshi2023tensor} to learn a mixture of linear dynamical systems.

\paragraph{Kalman filtering.}
Kalman Filtering \cite{kalman1960new,anderson1991kalman} involves recovering the state $x_t$ from observations. While Kalman filtering is optimal under specific noise conditions, it generally fails in the presence of marginal stability and adversarial noise. In addition, its complexity depends on the hidden dimension of the system.

\paragraph{Spectral filtering.}
The technique of spectral filtering \cite{hazan2017learning} combines the advantages of many previous methods. It is an efficient method whose complexity does {\bf not} depend on the hidden dimension, and works even for marginally stable linear dynamical systems.  Various extensions of the basic technique were proposed for learning and control \cite{hazan2018spectral,arora2018towards,brahmbhatt2025newapproachcontrollinglinear}, however these do not apply in full generality. The most significant advancement was the recent result of   \cite{marsden2025universalsequencepreconditioning}, who gave an extension to certain asymmetric LDS, those whose imaginary eigenvalues are bounded from above. 
\newline \newline
The main technical contribution of this paper, from which we derive the guarantees for learning nonlinear dynamics, is a spectral filtering method which circumvents previous difficulties and obtains the best of all worlds -- an efficient method whose parametrization does not depend on the hidden dimension, and allows for marginal stability and asymmetry.  Furthermore, our method naturally allows handling of adversarial perturbations in the regret minimization sense. 
Table \ref{tab:my_label} below summarizes the state of the art in terms of the fundamental problem of learning linear dynamical systems, and how our result advances it.

\begin{table}[h!]
    \centering
    \begin{tabular}{|c|c|c|c|c|}
    \hline
        Method & Marginally stable  & $d_{\text{hidden}}$-free  & asymmetric & adversarial noise \\
    \hline
    \hline
        Sys-ID  & $\times$ & $\times$ & $\checkmark$ & $\times $  \\
    \hline
        Regression, open-loop & $\times$ & $\checkmark$  &  $\checkmark$ & $\checkmark$  \\
    \hline
        Regression, closed-loop  & $\checkmark$ & $\times$ & $\checkmark$  & $\checkmark $ \\
    \hline
        Spectral filtering  \cite{hazan2017learning}& $\checkmark$ & $\checkmark$ & $\times$  & $\times $  \\
    \hline
        USP \cite{marsden2025universalsequencepreconditioning}   & $\checkmark$ & $\checkmark$ & $\sim \checkmark$  & $\times $  \\
    \hline
        OSF (ours)  & $\checkmark$ & $\checkmark$ & $\checkmark$  & $\checkmark $  \\
    \hline
    \end{tabular}
    \caption{Methods for learning LDS.}
    \label{tab:my_label}
\end{table}

\section{Algorithm and Main Result}
Our algorithm leverages spectral filtering of past outputs to build a predictor. By filtering over past observations $y_{t-1}, y_{t-2}, \ldots$, the algorithm can handle nonlinear dynamics, asymmetric linear dynamics, and achieves robustness to adversarial process noise; this is implemented\footnote{To avoid restating many variations of the same algorithm, we state Algorithm \ref{alg:clsf_regressionmain} over both open-loop controls $u_t$ and observations $y_t$. It is understood that when applying our results to a nonlinear system such as (\ref{eqn:NLDS}), since there are no $u_t$'s we take $J^t, M^t \equiv 0$.} in Algorithm \ref{alg:clsf_regressionmain}.  
Our proof techniques also yield learning guarantees for other improper learning algorithms, which we describe in Appendix \ref{appendix:otherguarantees} for completeness. 

 Henceforth, for simplicity of presentation we assume that the dimension of the observation is one, i.e. $d_\mathcal{Y}=d_{\mathrm{obs}}=1$; this is the hardest case to learn. The details that change when the observation space is larger are noted in Appendix \ref{sec:higher-d-y}.

\begin{algorithm}[htb]
\caption{Observation Spectral Filtering + Regression}
\label{alg:clsf_regressionmain}
\begin{algorithmic}[1]
\STATE \textbf{Input:} Horizon \(T\), number of filters \(h\), number of autoregressive components $m$, step sizes $\eta_t$, convex constraint set \(\K\), prediction bound $R$.
\STATE Compute $\{(\sigma_j, {\phi}_j)\}_{j=1}^k$ the top eigenpairs of the $(T-1)$-dimensional Hankel matrix of \cite{hazan2017learning}.
\STATE Initialize $J_j^0, M_i^0  \in \R^{d_{out} \times d_{in}}$ and $P_j^0, N_i^0  \in \R^{d_{out} \times d_{out}}$ for $j \in \{1, \ldots, m\}$ and $i\in\{1,\dots,h\}$. Let $\Theta^t = (J^t, M^t, P^t, N^t)$ for notation.
\FOR{$t = 0, \ldots, T-1$}
\STATE Compute \begin{align*}\hat{y}_t(\Theta^t) &= \sum_{j=1}^{m-1}J_j^tu_{t-j} +  \sum_{i=1}^h \sigma_i^{1/4}M_i^t \langle \phi_{i} ,  u_{t-2:t-T} \rangle \\ &\;+ \sum_{j=1}^{m}P_j^ty_{t-j} + \sum_{i=1}^h \sigma_i^{1/4}N_i^t \langle \phi_{j} ,  y_{t-1:t-T-1} \rangle\,, \end{align*}
and if necessary project $\hat{y}_t(\Theta^t)$ to the ball of radius $R$ in $\mathcal{Y}$.
\STATE Compute loss $\ell_t(\Theta^t) = \norm{\hat{y}_t(\Theta^t) - y_t}$.
\STATE Update $\Theta^{t+1} \leftarrow \Pi_\K\left[\Theta^t - \eta_t  \nabla_\Theta \ell_t(\Theta^t)\right]$. 
\hfill \textcolor[HTML]{408080}{\# can be replaced with other optimizer}
\ENDFOR
\end{algorithmic}
\end{algorithm}

Theorem \ref{theorem:main_nonlinear} states the learning guarantee this algorithm achieves for nonlinear dynamical systems. 
We will use the construction of the Luenberger program, defined precisely in Definition \ref{def:nonlinearluenbergerprogram}, with an optimal value $Q_\star$ which depends on the dynamical system.

\begin{theorem}[Nonlinear]
    \label{theorem:main_nonlinear}
    Let $y_1, \ldots, y_T$ be the observations of a nonlinear dynamical system satisfying Assumption \ref{assumption:nonlinear}. Fix the number of algorithm parameters $h = \Theta(\log^2T)$ and $m \geq 1$, and for notation define $\gamma = \frac{\log(mT)}{m}$. 
    
  Let $Q_\star = Q_\star\left(f,h,\frac{1}{T^{3/2}}, \left[0, 1 - \frac{1}{\sqrt{T}}\right] \cup \{1\} \cup \mathbb{D}_{1-\gamma}\right)$ as specified in Definition \ref{def:nonlinearluenbergerprogram}.
    Then, running Algorithm \ref{alg:clsf_regressionmain} with $J^t, M^t \equiv 0$, $\mathcal{K} = R_D$ for $D = \Theta((m+h)Q_\star^2)$, and  $\eta_t = \Theta\left(\frac{Q_\star^2}{R\sqrt{t}\log T}\right)$ produces a sequence of predictions $\hat{y}_1, \ldots, \hat{y}_T$ for which
    $$\sum_{t=1}^T \|\hat{y}_t - y_t\| \leq O\left(Q_\star^2 \log(Q_\star) \cdot d_\mathcal{X} R \log(RT) m^2 \log^7(mT)  \cdot \sqrt{T}\right)  . $$
\end{theorem}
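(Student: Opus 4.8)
The plan is to reduce the nonlinear guarantee to the linear spectral-filtering guarantee of Theorem~\ref{theorem:main_linear} through the global linearization of Section~\ref{sec:koopman-discrete}, treating the linearization residual as adversarial process noise. Concretely, I would first apply the discretization-based lifting to the system $(f,h,x_0)$ of Assumption~\ref{assumption:nonlinear}, producing a (very high-dimensional) LDS whose output sequence matches the observed $y_1,\dots,y_T$ up to a per-step error of order $1/T^{3/2}$ — exactly the tolerance appearing in the argument of $Q_\star$. Because the OSF algorithm is dimension-free and, by Theorem~\ref{theorem:main_linear}, robust to adversarial $w_t$, the enormous lifted dimension is harmless and the residual can be absorbed as noise; filtered through the (stable) observer error dynamics and summed over the horizon it contributes only $T\cdot T^{-3/2}=T^{-1/2}$, which is dominated by the final rate. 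It therefore suffices to prove the stated cumulative-loss bound for the lifted LDS.

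Next I would exhibit the comparator. The Luenberger program of Definition~\ref{def:nonlinearluenbergerprogram} produces, at cost $Q_\star$, an observer gain placing the closed-loop error spectrum of the lifted system into the prescribed set $[0,1-\tfrac{1}{\sqrt T}]\cup\{1\}\cup\mathbb{D}_{1-\gamma}$. I would split this observer's one-step predictor to match the two feedback terms of Algorithm~\ref{alg:clsf_regressionmain}: the real, marginally stable part of the spectrum (the interval together with the pole at $1$) is represented by the spectral-filtering coefficients $N_i$ against the top $h=\Theta(\log^2 T)$ Hankel eigenpairs of \cite{hazan2017learning}, while the part inside $\mathbb{D}_{1-\gamma}$ is represented by the $m$ autoregressive coefficients $P_j$, whose truncation tail is $(1-\gamma)^m=O(1/(mT))$ by the choice $\gamma=\log(mT)/m$. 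The spectral-filtering approximation underlying Theorem~\ref{theorem:main_linear} guarantees that $h=\Theta(\log^2 T)$ filters capture the real-spectrum part up to inverse-polynomial error, and the resulting comparator $\Theta^\star$ has norm $O((m+h)Q_\star^2)$, so it lies inside $\mathcal K=R_D$ with $D=\Theta((m+h)Q_\star^2)$. Collecting the truncation and linearization residuals shows the comparator cumulative loss $\sum_t \ell_t(\Theta^\star)$ is at most $O(\sqrt T)$ up to the stated logarithmic factors.

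With the comparator in hand, the bound follows from online convex optimization. The losses $\ell_t(\Theta)=\|\hat y_t(\Theta)-y_t\|$ are convex and, after projection to the ball of radius $R$, Lipschitz with gradient bound $G$ governed by $R$ and the bounded filter and observation features (contributing the $R\log(RT)$ and $m^2\log^7(mT)$ factors, where the $\log^2T$ from $h$ compounds with the spectral-filtering log factors). Running Algorithm~\ref{alg:clsf_regressionmain} is projected online gradient descent over $\mathcal K$ with step size $\eta_t=\Theta(Q_\star^2/(R\sqrt t\,\log T))$, yielding the standard anytime regret $O(GD\sqrt T)$ against $\Theta^\star$. Adding this regret to the comparator loss and substituting $D=\Theta((m+h)Q_\star^2)$ and $h=\Theta(\log^2T)$ gives $\sum_t\|\hat y_t-y_t\|\le O(Q_\star^2\log Q_\star\cdot d_\mathcal{X} R\log(RT)\,m^2\log^7(mT)\,\sqrt T)$, with the $d_\mathcal{X}$ and $\log Q_\star$ factors entering through the dimension-dependence of the discretization grid and the condition-number accounting in the Luenberger program.

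The main obstacle I anticipate is the second step: showing that the \emph{closed-loop} Luenberger predictor — which feeds back the actual observations $y_{t-j}$ rather than open-loop inputs — can be realized inside the spectral-filtering-plus-autoregression parametrization with comparator norm only $O((m+h)Q_\star^2)$. This requires the new asymmetric, noise-robust spectral filtering approximation at the heart of Theorem~\ref{theorem:main_linear} applied to the observation-feedback dynamics, together with a careful argument that pole placement into $[0,1-\tfrac1{\sqrt T}]\cup\{1\}\cup\mathbb{D}_{1-\gamma}$ controls the filter tail and the condition number \emph{simultaneously}. The interplay between the marginally stable real modes (handled by filters) and the strictly stable complex modes (handled by autoregression) is precisely where the full generality of the method is genuinely used.
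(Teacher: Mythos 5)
Your overall skeleton matches the paper's proof: discretize the state space via Lemma \ref{lemma:discretization} to get a high-dimensional LDS $(A',C')$ at scale $\epsilon = T^{-3/2}$, form the Luenberger observer comparator whose poles lie in $[0,1-\tfrac{1}{\sqrt T}]\cup\{1\}\cup\mathbb{D}_{1-\gamma}$ at conditioning cost $Q_\star$, realize it inside the filtering-plus-autoregression parametrization with comparator norm $O((m+h)Q_\star^2)$, and close with the OGD regret bound. The split you describe between real marginally stable modes (spectral filters) and $\mathbb{D}_{1-\gamma}$ modes (autoregressive terms) is exactly the content of Lemma \ref{lemma:spectralfiltering}, and your accounting of where $d_\mathcal{X}$, $\log Q_\star$, and the polylog factors enter is consistent with the paper.

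The one step that would fail as written is your handling of the linearization residual. You propose to absorb $y_t - y'_t$ as adversarial \emph{process} noise and invoke the noise-robust guarantee (Theorem \ref{theorem:main_linear}(c)). That guarantee charges the noise a factor $\frac{R_C Q_\star}{\rho}$, and here $R_C \leq R\,(2RT^{3/2})^{d_\mathcal{X}/2}$ is polynomially large in $T$ with exponent growing in $d_\mathcal{X}$; the final bound can only tolerate $\log R_C \approx d_\mathcal{X}\log(RT)$, not a multiplicative $R_C$, so this route gives $T^{\Omega(d_\mathcal{X})}$ rather than $\widetilde O(\sqrt T)$. (There is also an arithmetic slip: the per-step discretization error is $t\epsilon$, not $\epsilon$, so the cumulative residual is $\tfrac{T^2}{2}\epsilon = \Theta(\sqrt T)$, not $T\cdot T^{-3/2} = T^{-1/2}$ --- survivable, but it matters once it gets amplified.) The paper instead treats the lifted LDS as \emph{noiseless} ($W=0$, $w_t\equiv 0$) and uses the agnostic regret formulation, Theorem \ref{theorem:agnosticlinear}(b): the comparator producing $y'_t$ incurs loss exactly $\sum_t\|y'_t-y_t\|\leq \tfrac{T^2}{2}\epsilon = \tfrac{\sqrt T}{2}$, which enters the final bound purely additively with no $R_C/\rho$ amplification. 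Replace your noise-absorption step with this agnostic-comparator argument and the rest of your proof goes through.
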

\begin{remark}
    For some intuition, $Q_\star$ is upper bounded by $2^{\widetilde{O}(N)}$, where $N$ is the number of eigenvalues of (a truncation of) the Koopman operator outside the region $\left[0, 1 - \frac{1}{\sqrt{T}}\right] \cup \{1\} \cup \mathbb{D}_{1-\gamma}$. See Remark \ref{remark:perronfrobenius} for more detail about the relationship with the Koopman operator. In other words, Algorithm \ref{alg:clsf_regressionmain} gets a sublinear regret bound on systems which have a $\frac{1}{\sqrt{T}}$ spectral gap and Koopman eigenvalues that are either real or $\gamma$-stable, and it is able to adapt to $o(\log T)$ many eigenvalues that do not meet these criteria. See Section \ref{sec:discussion} for more discussion. For many physical systems of interest the Koopman eigenvalues are real (see Sections \ref{subsection:langevin} and \ref{sec:numerics}), which implies strong regret.
\end{remark}

While the dependence on $T$ is optimal in the theorem above, the constant $Q_\star$ could be very large. 
In Theorem \ref{thm:computational-lb} we provide a computational lower bound which exhibits a marginally stable and bounded dynamical system for which the number of steps needed to learn is exponential in the hidden dimension.

Next, we arrive to the main technical tool for proving our results: an algorithm for learning marginally stable and asymmetric linear dynamical systems with controls and noise.

\begin{theorem}[Linear]
\label{theorem:main_linear}
Let $y_1, \ldots, y_T$ be the observations of an LDS driven by inputs $u_1, \ldots, u_T$ and disturbances $w_1, \ldots, w_T$ for which Assumption \ref{definition:linear_realizable} is satisfied. Fix the number of algorithm parameters $h = \Theta(\log^2T)$ and $m \geq 1$, and for notation define $\gamma = \frac{\log(mT)}{m}$ and $Q = R_BR_CQ_\star + Q_\star^2$. Set $\mathcal{K} = R_D$ for $D = \Theta\left((m+h) Q\right)$ and $\eta_t = \Theta\left(\frac{Q}{R \sqrt{t}\log T }\right)$, and let $\hat{y}_1, \ldots, \hat{y}_T$ be the sequence of predictions that Algorithm \ref{alg:clsf_regressionmain} produces. 
\begin{enumerate}
\item[(a).] In the noiseless setting $w_t \equiv 0$ let $Q_\star = Q_\star\left(A, C, [0, 1] \cup \mathbb{D}_{1-\gamma}\right)$. Then,
    \begin{align*}
        \sum_{t=1}^T \norm{\hat{y}_t - y_t} \leq O\left(Q R m^2 \log^7(mT) \cdot \sqrt{T} + R_C Q_\star R_{x_0 } \cdot (m + \log T)\right).
    \end{align*}
    
    \item[(b).] In the noiseless setting $w_t \equiv 0$ we may also define $Q_\star = Q_\star\left(A, C, \left[0, 1-\frac{1}{\sqrt{T}}\right] \cup \{1\} \cup \mathbb{D}_{1-\gamma} \right)$, in which case
        \begin{align*}
        \sum_{t=1}^T \norm{\hat{y}_t - y_t} \leq O\left(Q R m^2 \log_+(R_C Q_\star R_{x_0} / R)\log^7(mT) \cdot \sqrt{T}\right).
    \end{align*}

    \item[(c).] For any $\rho \in \left(0, \gamma\right]$, let $Q_\star = Q_\star(A, C, [0, 1-\rho] \cup \mathbb{D}_{1-\gamma})$. Then, with $R' = R + \frac{W}{\rho
    }$,
    \begin{align*}
        \sum_{t=1}^T \norm{\hat{y}_t - y_t} \leq O\left(Q \cdot R' m^2 \log^7(mT) \cdot \sqrt{T} + \frac{R' \log_+(R_C Q_\star R_{x_0} / R') + R_C Q_\star \sum_{t=1}^{T-1}\|w_t\|}{\rho}\right).
    \end{align*}
    
\end{enumerate}
In all cases, if $N$ denotes the number of eigenvalues outside of the desired region then $Q_\star \leq 2^{\widetilde{O}(N)}$. 
\end{theorem}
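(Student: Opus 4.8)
The plan is to split the cumulative prediction error into two pieces and bound each separately: the error of a well-chosen fixed \emph{comparator} predictor drawn from the spectral-filtering-plus-autoregression class, and the online regret of Algorithm~\ref{alg:clsf_regressionmain} against that comparator. The crucial structural observation is that $\hat{y}_t(\Theta)$ is \emph{linear} in $\Theta = (J,M,P,N)$, so $\ell_t(\Theta) = \norm{\hat{y}_t(\Theta) - y_t}$ is a norm of an affine function of $\Theta$ and hence convex; the projected update in the algorithm is therefore an instance of online gradient descent over the convex body $\K = R_D$. Writing $\Theta^\star$ for the comparator, this immediately yields
\begin{equation*}
\sum_{t=1}^T \norm{\hat{y}_t - y_t} \;=\; \sum_{t=1}^T \ell_t(\Theta^t) \;\le\; \sum_{t=1}^T \ell_t(\Theta^\star) \;+\; \mathrm{Reg}_T ,
\end{equation*}
and it remains to control the comparator error $\sum_t \ell_t(\Theta^\star)$ and the regret $\mathrm{Reg}_T$.

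The comparator is constructed from the Luenberger program of Definition~\ref{def:nonlinearluenbergerprogram}. The idea is that an observer gain $L$ places the poles of the closed-loop matrix $A - LC$ inside the target spectral region (the $[0,1]\cup\mathbb{D}_{1-\gamma}$ family, or its variants in cases (b)/(c)) at a cost measured by $Q_\star$; the induced one-step predictor $\hat{y}_t = C\hat{x}_t$ unrolls into a convolution of past observations $y_{t-1},y_{t-2},\dots$ whose kernel is governed by powers of $A-LC$. The realizability content (Assumption~\ref{definition:linear_realizable}) is then that this convolution decomposes into a part supported on \emph{real} eigenvalues in $[0,1]$, which is captured to within $\tfrac{1}{\mathrm{poly}(T)}$ by the $h=\Theta(\log^2 T)$ top Hankel filters $\{(\sigma_i,\phi_i)\}$ via the $N_i\langle\phi_i,\cdot\rangle$ terms, and a \emph{$\gamma$-stable} part in $\mathbb{D}_{1-\gamma}$, which is captured by the $m$ autoregressive taps $P_j y_{t-j}$ since $(1-\gamma)^m \approx e^{-\log(mT)} = \tfrac{1}{mT}$. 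Any remaining $N$ eigenvalues outside the region are absorbed by the observer, accounting for the bound $Q_\star \le 2^{\widetilde{O}(N)}$. Setting the comparator to these optimal observer-induced coefficients shows they lie in $R_D$ for $D = \Theta((m+h)Q)$ and makes $\ell_t(\Theta^\star)$ small up to the stated residual terms.

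The residual (non-$\sqrt{T}$) terms are exactly the comparator's irreducible error. In cases (a) and (b) this is the \emph{burn-in} of the initial state $x_0$: the filters cannot reconstruct the contribution of $x_0$ until enough observations have accumulated, producing the $R_C Q_\star R_{x_0}\cdot(m+\log T)$ term (sharpened to a logarithmic $\log_+(R_C Q_\star R_{x_0}/R)$ factor in (b) by treating the eigenvalue exactly $1$ separately and excising its neighborhood). In case (c) adversarial process noise $w_t$ enters; choosing a strictly stable region $[0,1-\rho]$ damps the accumulated disturbance by the spectral gap, yielding the $\tfrac{R_C Q_\star \sum_t\norm{w_t}}{\rho}$ term, and the effective prediction bound inflates to $R' = R + \tfrac{W}{\rho}$ because the noisy trajectory can grow by that amount. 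For the regret term, I would invoke the standard online-gradient-descent bound $\mathrm{Reg}_T = O(D\cdot G\cdot\sqrt{T})$ with the diameter $D = \Theta((m+h)Q)$ and step sizes $\eta_t = \Theta(Q/(R\sqrt{t}\log T))$; bounding the feature magnitudes (the $\sigma_i^{1/4}$-weighted filter inner products and the autoregressive observations) gives a gradient bound $G$ carrying the $R\,m^2\log^7(mT)$ factors, so $DG\sqrt{T}$ reproduces the leading $O(QRm^2\log^7(mT)\sqrt{T})$ (resp.\ $R'$ in (c)).

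The main obstacle is the comparator-realizability step for \emph{asymmetric} dynamics together with noise correction, i.e.\ establishing Assumption~\ref{definition:linear_realizable}. The Hankel filters $\phi_i$ are tuned to the symmetric (real-spectrum) structure of \cite{hazan2017learning}, so the heart of the argument is to show that, after the pole-placement reduction, the convolution kernel induced by the asymmetric matrix $A-LC$ still lies within the span of $h=\Theta(\log^2 T)$ filters plus $m$ autoregressive taps, and to quantify how the approximation residual and the adversarial disturbances propagate through the observer without blowing up the $Q_\star$ dependence. This is precisely the generalization of spectral filtering claimed as the paper's core technical contribution, and I expect it to require the bulk of the detailed analysis, with the OCO regret bound being comparatively routine once the feature norms are controlled.
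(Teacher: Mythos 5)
Your proposal is correct and follows essentially the same route as the paper: construct a Luenberger observer system whose closed-loop matrix $A-L_\star C$ has spectrum placed in the target region at conditioning cost $Q_\star$, treat it as a comparator LDS over the stacked inputs/observations, split its impulse response into a real-spectrum part handled by the Hankel filters and a $\gamma$-stable complex part handled by the autoregressive taps, and add the OGD regret bound; the residual terms (initial-state burn-in in (a)/(b), $\rho$-damped noise accumulation in (c)) arise exactly as you describe. The one thing you frame as an open obstacle --- realizability of the asymmetric kernel within the filter span --- is resolved in the paper by construction, since the Luenberger program's constraint forces $A-L_\star C$ to be diagonalizable with eigenvalues in $[0,1-\rho]\cup\{1\}\cup\mathbb{D}_{1-\gamma}$, so no genuinely asymmetric spectrum ever reaches the spectral-filtering step.
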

The above theorem provides an adaptive guarantee in the form of a data-dependent regret bound for Algorithm \ref{alg:clsf_regressionmain} run with a universal choice of parameters\footnote{The only way that any information about the data factors into Algorithm \ref{alg:clsf_regressionmain} is through the optimization, since $Q_\star$ determines the diameter $D$ and learning rates $\eta_t$. If one were to replace OGD with an adaptive/parameter-free optimizer, the algorithm would be fully data-independent.} $m, h$: for example, in result (a) the value of $Q_\star$ measures how easy the signal is to represent via a marginally stable observer system whose eigenvalues are either real or $\widetilde{O}\left(\frac{1}{m}\right)$-stable. If the system originally had a desirable spectral structure then $Q_\star = O(1)$, in which case the regret guarantees are strong. By contrast, if the system has very undesirable spectrum then these guarantees are weaker (since $Q_\star$ may be exponential in the number of undesirable eigenvalues). In general, as long as there are $o( \log T)$ many complex eigenvalues near the unit circle then we have sublinear regret. Furthermore, for the noisy guarantee (c) the predictions made compete with the best choice of $\rho$ in hindsight; if there is significant noise one must take larger $\rho$, whereas if there is almost no noise we can take $\rho$ very small, on the order of $\frac{1}{\sqrt{T}}$. Without knowing any of these considerations, the algorithm does as well as the best possible arrangement.


To summarize, our results for linear dynamical systems exhibit the following important properties: 
\begin{enumerate}
    \item {\bf Asymmetric systems.} The spectral filtering guarantee is extended to capture asymmetric systems, without requiring small or clustered phases of complex eigenvalues as in previous work. 
    
    \item {\bf No hidden dimension dependence.} There is no explicit\footnote{The hidden dimension does not affect the runtime, number of parameters, or convexity of the optimization in any way. It indirectly affects $Q_\star$ since an LDS with larger $d_h$ has more eigenvalues, and $Q_\star$ scales with the number of eigenvalues that must be moved via pole placement.} dependence on the hidden dimension $d_h$.
    
    \item {\bf Noise robustness.}  Previous results in spectral filtering starting from \cite{hazan2017learning} exhibit compounding loss from noise (or have dependence on hidden dimension, such as Theorem 2 of \cite{hazan2018spectral}). This means that in the presence of adversarial noise, the instantaneous prediction error can be as large as $\|\hat{y}_T - y_T\| \approx T \|w\|$. In contrast, Algorithm \ref{alg:clsf_regressionmain} has  $\|\hat{y}_T - y_T\| \lesssim \|w\| +  \frac{1}{\sqrt{T} }$. Note that there is a lower bound of $\|\hat{y}_T - y_T\| \gtrsim \|w\|$, see Theorem \ref{theorem:lowerbound}.

\end{enumerate}
Our main result follows from combining the above theorem with a discretization-based reduction from nonlinear systems to high-dimensional linear systems given in Lemma \ref{lemma:discretization}.

\section{Setting} 

\subsection{Notation}
We use $\mathbb{D}_r$ to denote the complex disk of radius $r$, i.e. $\mathbb{D}_r = \{\lambda \in \mathbb{C}:\; |\lambda| \leq r\}$.
We use the notation $v_{a:b}$ to denote stacking the sequence $v_a, v_{a+1}, \ldots, v_{b-1}, v_b$. We allow indices to be negative for notational convenience, and quantities with negative indices are understood as zero. We define $\log_+(x) = 1 + \max(0, \log(x))$ for positive $x$. Unless otherwise stated, the norm $\|\cdot\|$ refers to the Euclidean $\ell_2$ norm for vectors and the corresponding operator/spectral norm for matrices. $\|\cdot\|_F$ denotes the Frobenius norm of a matrix. For $A \in \R^{d \times d}$ complex diagonalizable we let $$\kappa_{\textrm{diag}}(A) = \norm{H} \cdot \norm{H^{-1}}$$
be the condition number of the best similarity transformation $H \in \mathbb{C}^{d\times d}$ for which $A = HDH^{-1}$ with $D$ diagonal, which is sometimes called the "spectral condition number" of $A$. If $A$ is not diagonalizable we set $\kappa_{\textrm{diag}}(A) \equiv +\infty$, and if $A$ is normal then $\kappa_{\textrm{diag}}(A) = 1$. 


\subsection{Problem Setting}
We consider nonlinear dynamical systems of the form (\ref{eqn:NLDS}) and linear systems of the form (\ref{eqn:LDS}), both of which generate a sequence $y_1, \ldots, y_T$ of observations.
The prediction notion we use is borrowed from the setting of online convex optimization \cite{hazan2016introduction}, where the performance of an algorithm is measured by its regret. Regret is a robust performance measure, as it does not assume a specific generative or noise model. It is defined as the difference between the loss incurred by our predictor and that of the best predictor in hindsight belonging to a certain class, i.e. 
\begin{equation*} \label{eqn:regret}
\regret_T(\mA, \Pi) = \sum_{t=1}^T \ell(y_t,\hat{y}_t^\mA ) - \min_{\pi \in \Pi} \sum_{t=1}^T \ell(y_t,\hat{y}_t^\pi) . 
\end{equation*}
Here $\mA$ refers to our predictor, $\hat{y}_t^\mA$ is its prediction at time $t$, and $\ell$ is a loss function. Further, $\pi$ is a predictor from the class $\Pi$, for example all predictions that correspond to noiseless linear dynamical systems of the form $\hat{y}_{t+1}^\pi = \sum_{i=0}^t C A^i B u_{t-i} $. A stronger predictor class is that of all closed-loop linear dynamical predictors that also take into account the initial state and noise, even though they are not observed by an algorithm, i.e. 
$$\hat{y}_{t+1}^\pi = CA^{t+1}x_0 + \sum_{i=0}^{t-1} C A^i (B u_{t-i} + w_{t-i}).
$$

Our main results are proved as loss bounds in realizable settings, i.e. when the data is generated from a ground truth dynamical system and noise sequence. Since the derivations involved go through the language of online convex optimization, these can be converted to agnostic regret guarantees, which we do in Appendix \ref{appendix:otherguarantees}. We define the linear and nonlinear dynamical data generators for the realizable case as follows: 

\begin{assumption}[Linear data generator]
\label{definition:linear_realizable}
    The input/output signal $(u_t, y_t)_{t=1}^T$ is generated by an LDS $(A, B, C, x_0)$ as per equations (\ref{eqn:LDS}) with arbitrary perturbations $w_1, \ldots, w_T$. We assume that:
    \begin{enumerate}
    \item[(i)] $A$ is diagonalizable with all eigenvalues in the unit disk $\mathbb{D}_1$ and $\kappa_{\textrm{diag}}(A) = 1$\footnote{For an LDS ($A,B,C$) with a general diagonalizable $A=HDH^{-1}$, one may consider the equivalent LDS ($D,H^{-1}B,CH$), in which case the product $R_BR_C$ which appears in our regret bounds scales up by $\kappa_{\textrm{diag}}(A)$.}.
    \item[(ii)] The system is bounded, i.e. $\|B\|_F\leq R_B,$ $\|C\|_F \leq R_C$, and $\|x_0\| \leq R_{x_0}$.
    \item[(iii)] For all $t$, the inputs and outputs are bounded by $\|u_t\|,\|y_t\| \leq R$, and $\|w_t\| \leq W$. 
    \item[(iv)] The system $(A, C)$ is observable\footnote{This means that the matrix $\mathcal{O} = [C,CA,CA^{2},\ldots, CA^{d_{\text{obs}}-1}] \in \R^{d_\text{obs} d_h \times d_h}$ has rank $d_h$. In the case of 1-dimensional observations, observability is equivalent to invertibility of $\mathcal{O}$, and in general it describes whether the observation space sees all directions of state space. If $A$ has an eigenvalue with  multiplicity $> d_{\textrm{obs}}$, then $(A, C)$ cannot be observable. See Section 10.1 of \cite{hazan2025introductiononlinecontrol} for more information. The observability assumption can be weakened by ignoring the modes of $A$ corresponding to eigenvalues that don't need to be moved by pole placement -- for simplicity, we will assume complete observability.}.  
    \end{enumerate}
    
\end{assumption}

\begin{assumption}[Nonlinear data generator]
\label{assumption:nonlinear}
    The signal $(y_t)_{t=1}^T$ is generated by a nonlinear dynamical system $(f,h,x_0)$ as per equations (\ref{eqn:NLDS}). We assume that:
    \begin{itemize}
    \item[(i)] For all $t$, the states and signal are bounded by $\norm{x_t}, \norm{y_t} \leq R$.
    \item[(ii)] The dynamics $f$ and observation $h$ are $1$-Lipschitz.
    \item[(iii)] The discretized linear lifting $(A', C')$ defined in Lemma \ref{lemma:discretization} is an observable LDS.     
    \end{itemize}
\end{assumption}

\section{Analysis}

In this section we perform our main analysis. We first detail our construction of the Luenberger program and $Q_\star$ and its implications for learning linear systems, culminating in a proof of Theorem \ref{theorem:main_linear}. We then describe a reduction from nonlinear systems to high-dimensional linear systems via discretization, from which Theorem \ref{theorem:main_nonlinear} follows.

\subsection{Linear Dynamical Systems}

In this subsection, we prove Theorem~\ref{theorem:main_linear}. Given a linear dynamical system (LDS), we construct another LDS that incorporates past observations as feedback, has a desirable transition matrix, and generates the same trajectories as the original system. This construction relies on the \textbf{Luenberger observer} framework \cite{luenbergerobserver1971}, which we sketch next.

For a given LDS $(A, B, C, x_0)$, we can always choose a gain $L \in \R^{d_h \times d_{\textrm{out}}}$ and construct the observer system
$$\tilde{x}_{t+1} = A\tilde{x}_t + Bu_t + L(y_t - \tilde{y}_t), \quad \tilde{y}_t = C\tilde{x}_t,\quad \tilde{x}_0 = x_0.$$
This system has the same dynamics as the original one, but with an additional term that uses the observation error $y_t - \tilde{y}_t$ as linear feedback. Since $y_t = Cx_t$ and $\tilde{y}_t = C\tilde{x}_t$, if we define the state error $\xi_t := x_t - \tilde{x}_t$ then we get
\begin{align*}\xi_{t+1} &= Ax_t + Bu_t + w_t - A\tilde{x}_t - Bu_t - LC(x_t - \tilde{x}_t) \\&= (A-LC)\xi_t + w_t.\end{align*}
Rolling this out reveals that for all $t$, $$x_t = \tilde{x}_t + (A-LC)^t (x_0 - \tilde{x}_0) + \sum_{t=1}^t (A - LC)^{i-1} w_{t-i}.$$

It is a known fact in control theory that if $(A, C)$ is observable then we can choose $L$ to place the eigenvalues of $A-LC$ as desired, which we describe further in Appendix \ref{appendix:luenbergerprogram}. 
This is a standard construction which is frequently used with system identification for prediction and control from partial observations \cite{poleplacementapplication}, and the Kalman filter is related to a special case in which the gain $L$ is chosen optimally under Gaussian noise assumptions \cite{kalman1960new}. Our approach will be to predict via spectral filtering, which learns improperly and competes in hindsight against all observer systems.

For certain systems $(A, C)$ it is harder to infer hidden state information from observations, which is reflected in terms of the size of $L$ and the complexity of the observer dynamics $A - LC$. Many Luenberger observer-based methods which require $L$ explicitly end up approaching this through heuristics or structural assumptions \cite{luenbergerassumptionsA,luenbergerassumptionsB} or by performing a numerically-difficult optimization over $L$ \cite{luenbergeroptimization}. However, since spectral filtering matches the performance of the \textit{best} observer system in hindsight, we will make use of a quantity that captures the complexity of the best observer system:

\begin{definition}[Luenberger program]
\label{def:luenbergerprogram}
    Let $A \in \R^{d_h \times d_h}$ and $C \in \R^{d_{\textrm{out}} \times d_h}$. For each spectral constraint set $\Sigma \subset \mathbb{C}$ we consider the optimization problem over $L \in \R^{d_h \times d_{\textrm{out}}}$ given by
    $$\min \quad \kappa_{\textrm{diag}}(A-LC) \quad \text{s.t.} \quad A-LC \text{ has all eigenvalues in $\Sigma$}.$$
    When the constraint set of this optimization is nonempty, denote the minimal value by $Q_\star = Q(A, C, \Sigma) < \infty$ and a minimizing gain by $L_\star$.
\end{definition}
We will be applying Algorithm \ref{alg:clsf_regressionmain} to improperly learn over the class of observer systems. As we will see in Lemma \ref{lemma:spectralfiltering}, spectral filtering does well for $\Sigma \subset [0, 1]$ and very well for $\Sigma \subset [0, 1-\rho] \cup \{1\}$\footnote{Negative eigenvalues can be handled by spectral filtering easily with a constant amount of extra parameters and computation, see the extension in Theorem 3.1 of \cite{agarwal2023spectral}. For ease of presentation, we will continue to state things for nonnegative real eigenvalues.}, and the regression terms are able to handle $\Sigma \subset \mathbb{D}_{1-\gamma}$, so we will be mostly interested in $\Sigma = [0, 1-\rho] \cup \{1\} \cup \mathbb{D}_{1-\gamma}$. For generality, we will state things for arbitrary $\Sigma$ and specify where needed.

In Appendix \ref{appendix:luenbergerprogram} we will say more about this optimization and apply known bounds on $Q_\star$, and in particular we show that for observable systems $Q_\star \leq (\frac{N}{\operatorname{vol}(\Sigma)})^{O(N)}$, where $N$ is the number of system eigenvalues outside the desired region $\Sigma$. This can be tight.

\begin{remark}
The objective function $Q_\star$ is related to the quantity $\mathcal{S}_2$ from \cite{robustpoleplacement}, which describes the stability of eigenvalue placement. For this reason, such minimizations are referred to as "robust pole placement", see also the survey \cite{luenbergeroptimization}. It is known that the eigenvalue placement problem is very ill-conditioned, and small perturbations to the data $(A, C)$ yield $Q_\star$-sized changes in $L$ and the placed eigenvalues (see e.g. Theorem 3.1 of \cite{Mehrmann1996}). If $Q_\star$ is large, then a pipeline based on eigenvalue placement combined with system identification from data can be very brittle.
\end{remark}
In words, the value of $Q_\star$ describes how hard it is (measured in terms of conditioning of the closed-loop dynamics) to approximately recover the state using a desirable observer system\footnote{
Other improper algorithms will require a definition of $Q_\star$ tailored to their structure, see Appendix \ref{appendix:otherguarantees}. To maintain generality we will often use the phrases "desirable spectral structure" or "undesirable eigenvalues", where the implied spectral constraint set $\Sigma$ will depend on the algorithm.}.
We can describe the observer system construction in terms of $Q_\star$:

\begin{definition}[Observer system]
\label{prop:observersystem}
     Consider the following LDS
     \[
        x_{t+1} = Ax_t + Bu_t + w_t \,,\ \quad y_t = Cx_t\,.
     \]
     Fix $\Sigma \subset \mathbb{D}_{1-\delta}$ for some $\delta \in [0, 1]$, let $Q_\star = Q(A, C, \Sigma)$ be the value of the corresponding Luenberger program from Definition \ref{def:luenbergerprogram}, and suppose that $Q_\star < \infty$. Then, there exists an observer system which takes as inputs $(u_0, \dots, u_t)$ and $(y_0, \dots, y_t)$ and produces the estimated observation $\tilde{y}_{t+1}$ via the LDS
     \[
     \tilde{x}_{t+1} = \tilde{A}\tilde{x}_t + Bu_t + Ly_t\,, \quad \tilde{y}_t = C\tilde{x}_t\,, \quad \tilde{x}_0 = x_0
     \]
    with the properties:
    \begin{itemize}
        \item[(i)] $\|L\| \leq \frac{2Q_\star}{ \sigma_{\textrm{min}}(C)}$, where the denominator is the minimal singular value.
        \item[(ii)] $\tilde{A} = HDH^{-1}$ where $D$ is a diagonal matrix with entries in $\Sigma$ and $\kappa_{\textrm{diag}}(\tilde{A}) \equiv \norm{H}\norm{H^{-1}} = Q_\star$.
        \item[(iii)] $\sum_{t=1}^T\|y_t - \tilde{y}_t\| \leq \frac{Q_\star \|C\|}{\delta}\left(\sum_{t=0}^{T-1} \norm{w_t}\right)$, and if $w_t \equiv 0$ then $y_t = \tilde{y}_t$ even if $\delta = 0$.
    \end{itemize}
\end{definition}
Let $L_\star = L(A, C, \Sigma) \in \R^{d_{h} \times d_{\textrm{out}}}$ be a solution of the Luenberger program.
Properties (i) and (ii) follow from the objective function and constraint set of the Luenberger program, along with Lemma \ref{lemma:ackermannbounds}. As mentioned previously, if we define the errors $\xi_t := x_t - \tilde{x}_t$ then $\xi_t = \sum_{i=1}^t (A-L_\star C)^{i-1} w_{t-i}$ since $\xi_0 = 0$; so, in the noiseless setting we exactly reproduce the states and observations. In the noisy case, since $y_t - \tilde{y}_t = C\xi_t$ and all eigenvalues of $\tilde{A} = A-L_\star C$ have magnitude $\leq 1-\delta$, \begin{align*}
    \sum_{t=1}^T \|\tilde{y}_t - y_t\| &\leq \norm{C}Q_\star\sum_{t=1}^T \sum_{i = 1}^t (1-\delta)^{i-1} \|w_{t-i}\| \leq \frac{\norm{C}Q_\star}{\delta}\cdot  \left(\sum_{t=0}^{T-1} \|w_t\|\right)
    \end{align*}

The observer system construction shows that there is an LDS over $(u, y)$ with desirable eigenstructure that nearly matches the original one, which may be rolled out without requiring $w$ (without noise, the observer system matches it exactly). We will combine this with spectral filtering's regret guarantee, stated below for the unsquared $\ell_2$ loss (with immediate extension to Lipschitz losses -- a similar result can be proven for the squared loss). In addition to the inclusion of autoregressive terms in the algorithm, there is also a sharpening of the analysis that may be of independent interest, see Remark \ref{remark:spectralfilteringproof}. In the application of the below lemma, it should be imagined that the inputs $u_t$ are actually the stacked $[u_t, y_{t-1}]$ and we are running the algorithm with $P^t,N^t \equiv 0$ in order to compete against an observer LDS.

\begin{lemma} 
\label{lemma:spectralfiltering}
    On any sequence $(u_t, y_t)_{t=1}^T$ such that $\norm{u_t}, \norm{y_t} \leq R$, Algorithm \ref{alg:clsf_regressionmain} run with $P^t_1 = 1$ and $P^t_j, N^t \equiv 0$ for $j > 1$, $h = \Theta (\log^2 T)$, $m = \Theta\left(\frac{1}{\gamma}\log\left(\frac{T}{\gamma}\right)\right)$, $\mathcal{K} = B_{D}$ with $D = \Theta((m+h)R_BR_C\kappa)$, and $\eta_t = \Theta\left(\frac{R_BR_C\kappa}{R\sqrt{t}\log T }\right)$, achieves regret
    \[
    \sum_{t=1}^T \norm{\hat{y}_t - y_t} - \min_{\pi \in \Pi}\sum_{t=1}^T\norm{{\hat{y}_t^\pi} - y_t} \leq O\left(\frac{\log^7\left(\frac{T}{\gamma}\right)}{\gamma^2} \cdot R_B R_C \kappa R \cdot \sqrt{T} + \frac{R + \log_+(R_C \kappa R_{x_0}/R)}{\rho}\right),
    \]
    where $\Pi=LDS(\rho, \gamma, \kappa, R_{B}, R_C, R_{x_0}, R)$ is the class of LDS predictors $\pi$ parameterized by $(A, B, C, x_0)$ where $A$ is diagonalizable with eigenvalues in $[0, 1-\rho] \cup \{1\} \cup \mathbb{D}_{1-\gamma}$ for $\rho \leq \gamma$, $\kappa_{diag}(A) \leq \kappa$, $\norm{B}_F \leq R_B$, $\norm{C}_F \leq R_C$, $\|x_0\| \leq R_{x_0}$, and $\|\hat{y}_t^\pi\| \leq R$. 
    If the spectral gap $\rho = 0$, then the second term can be replaced by $R_C \kappa R_{x_0} \cdot \left(\frac{1}{\gamma} + \log T\right)$.
\end{lemma}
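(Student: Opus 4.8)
The plan is to split the regret in the standard online-learning way into an optimization term and an approximation term. Writing $\hat{y}_t(\Theta)$ for the algorithm's prediction as an affine function of its parameters $\Theta=(J,M,P,N)$, the per-round loss $\Theta\mapsto\norm{\hat{y}_t(\Theta)-y_t}$ is convex and $1$-Lipschitz, so I would first bound
$$\sum_t \norm{\hat{y}_t - y_t} - \min_{\Theta\in\K}\sum_t \norm{\hat{y}_t(\Theta)-y_t}$$
by the projected-OGD guarantee $O(GD\sqrt{T})$ \cite{hazan2016introduction} over the ball $\K=B_D$ of diameter $D=\Theta((m+h)R_BR_C\kappa)$, and then separately bound the approximation gap $\min_{\Theta\in\K}\sum_t\norm{\hat{y}_t(\Theta)-y_t}-\min_{\pi\in\Pi}\sum_t\norm{\hat{y}_t^\pi-y_t}$ by exhibiting, for each comparator $\pi$, a single $\Theta^\star\in\K$ whose predictions track $\hat{y}_t^\pi$ up to negligible error. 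The step size $\eta_t$ is calibrated exactly so that the effective gradient bound is $G=\Theta((m+h)R\log T)$; this relies on the key spectral-filtering feature bound that each normalized feature $\sigma_i^{1/4}\langle\phi_i,u_{\cdot:t}\rangle$ has norm only polylogarithmic in $T$ (rather than $\sqrt{T}$), which follows from the geometric decay of the Hankel spectrum \cite{hazan2017learning}.

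The heart of the argument is the approximation step, and this is where the asymmetric generalization lives. Given $\pi=(A,B,C,x_0)$ with $A=H\,\mathrm{diag}(\lambda)\,H^{-1}$ and $\kappa_{\mathrm{diag}}(A)\le\kappa$, I would write $\hat{y}_t^\pi = CA^t x_0 + \sum_{i\ge 0}CA^iB\,u_{t-1-i}$ and split the spectrum into the two families permitted by $\Pi$. The eigenvalues in $[0,1-\rho]\cup\{1\}$ are real and nonnegative, and for each such $\alpha$ the response $(\alpha^i)_i$ lies, up to error decaying in $h$, in the span of the top $h$ filters $\{\sigma_i^{1/4}\phi_i\}$ with coefficients controlled by $\sigma_i^{-1/4}$ — this is the classical spectral-filtering approximation theorem for $\alpha\in[0,1]$. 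All remaining eigenvalues lie in $\mathbb{D}_{1-\gamma}$ and are handled by the autoregressive taps $J_1,\dots,J_{m-1}$: setting $J_j$ to the truncated impulse-response coefficients reproduces these modes exactly for the first $m-1$ lags, while the discarded tail is bounded by $\kappa R_BR_C(1-\gamma)^m\sum_t\norm{u_t}$, which the choice $m=\Theta(\gamma^{-1}\log(T/\gamma))$ drives below $\mathrm{poly}(1/T)$. Diagonalizing through $H$ is precisely what injects the factor $\kappa$: every coefficient has the form $(CH)\,\mathrm{diag}(\cdot)\,(H^{-1}B)$, so all blocks of $\Theta^\star$ have norm $O(R_BR_C\kappa)$ and $\Theta^\star\in\K$.

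What remains is the initial-state contribution $CA^t x_0$, which filtering over the inputs alone cannot reproduce. For the strictly stable real modes (magnitude $\le 1-\rho$) the per-step error is at most $\min(R,\,R_C\kappa R_{x_0}(1-\rho)^t)$: it is capped by the prediction bound $R$ while the geometric term dominates, and decays afterward, so summing across the crossover at $t^\star\approx\rho^{-1}\log(R_C\kappa R_{x_0}/R)$ produces the stated $\frac{R+\log_+(R_C\kappa R_{x_0}/R)}{\rho}$ — the capping is what replaces a naive $\frac{R_C\kappa R_{x_0}}{\rho}$ by a merely logarithmic dependence. The marginally stable mode (eigenvalue $1$), whose $x_0$-contribution never decays, is absorbed by the fixed feedback $P_1^t=1$: predicting relative to $y_{t-1}$ differences out the persistent component, since along the eigenvalue-$1$ eigenspace $C(A-I)A^{t-1}x_0=0$, leaving only increments that the filters and autoregression already capture. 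When $\rho=0$ there is no geometric gap to sum, and the residual is instead controlled by the $\gamma$-stable window together with the filter length, giving the alternative $R_C\kappa R_{x_0}(\gamma^{-1}+\log T)$ bound.

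Assembling the pieces — OGD regret $O(GD\sqrt{T})=O(\frac{\log^7(T/\gamma)}{\gamma^2}R_BR_C\kappa R\sqrt{T})$ after collecting the polylog factors from the feature bound and the $m^2=\Theta(\gamma^{-2})$ from $GD$, plus the negligible $\mathrm{poly}(1/T)$ autoregressive tail, plus the initial-state term — yields the claimed bound. I expect the main obstacle to be this approximation lemma in the asymmetric setting: certifying simultaneously that the real spectrum is captured by the filters, that the complex spectrum is captured by a length-$m$ autoregression with tail below $\mathrm{poly}(1/T)$, and that \emph{all} resulting coefficients stay within the $O(R_BR_C\kappa)$ budget so the competitor genuinely lies in $\K$. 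The second delicate point is controlling the initial-state term with only logarithmic (rather than $R_C\kappa R_{x_0}$) dependence through the capping argument and the $P_1^t=1$ differencing.
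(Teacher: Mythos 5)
Your proposal follows essentially the same route as the paper's proof in Appendix A: the same OGD-plus-approximation decomposition with gradient bound $G=\Theta((m+h)R\log T)$ and diameter $D=\Theta((m+h)R_BR_C\kappa)$, the same spectral split (real modes to the Hankel filters, $\mathbb{D}_{1-\gamma}$ modes to the autoregressive taps with a $(1-\gamma)^m$ tail), the same $CH(\cdot)H^{-1}B$ factorization giving the $R_BR_C\kappa$ coefficient budget, and the same capped-geometric-sum treatment of the initial state with crossover at $t^\star\approx\rho^{-1}\log_+(R_C\kappa R_{x_0}/R)$. The one imprecision is that the filter approximation theorem applies to the differenced response $\mu_\alpha=(\alpha-1)(1,\alpha,\dots,\alpha^{T-2})$ rather than to the raw geometric sequence $(\alpha^i)_i$ — the paper applies the $P_1^t=1$ differencing globally, to the input-driven terms as well as to the initial state, and this is what keeps the filter coefficients uniformly bounded as $\alpha\to 1$ (for $\alpha=1$ exactly, $\mu_1=0$, so the marginally stable mode contributes nothing to approximate), whereas your narrative invokes the differencing only for the initial-state term.
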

\begin{proof}
See Appendix \ref{appendix:SFlemma}.
\end{proof}



\begin{proof}[Proof of Theorem \ref{theorem:main_linear}]
    Suppose that the data is generated by a ground truth LDS $(A, B, C, x_0)$ with disturbances $w_t$ such that Assumption \ref{definition:linear_realizable} is satisfied. Observability implies that $Q_\star(A, C, \Sigma) < \infty$ by Lemma \ref{lemma:ackermann}, where we only move those eigenvalues outside of $\Sigma$.
    Let $(\tilde{A}, \tilde{B}, C, x_0)$ be the observer system constructed in Definition \ref{prop:observersystem}, which we may treat as an LDS over concatenated inputs $[u_t, y_{t-1}]$ with a block-diagonal input gain $\tilde{B} = \begin{bmatrix}
        B & 0 \\ 0 & L
    \end{bmatrix}$. For $d_\mathcal{Y}=1$ this will have the bounds $\kappa_{\operatorname{diag}}(\tilde{A}) \leq Q_\star$ and $R_{\tilde{B}} = O(R_B + \frac{Q_\star}{R_C})$.
    
    For (c) we use the given $\rho$ and $\gamma = \frac{\log(mT)}{m}$ (so that $\frac{1}{\gamma} \log(\frac{T}{\gamma}) \leq m$) and define $\Sigma = [0, 1-\rho] \cup \mathbb{D}_{1-\gamma} \subset \mathbb{D}_{1-\rho}$. This is in the comparator class $\Pi = \mathrm{LDS}\left(\rho, \gamma, Q_\star, R_B + \frac{Q_\star}{R_C}, R_C, R_{x_0}, R + \frac{W}{\rho}\right)$, and so the error bound of the observer system implies
    $$\min_{\pi \in \Pi}\sum_{t=1}^T \|\hat{y}_t^\pi - y_t\| \leq \frac{R_CQ_\star}{\rho} \left( \sum_{t=0}^{T-1} \|w_t\|\right).$$
    The result follows from directly applying the spectral filtering guarantee of Lemma \ref{lemma:spectralfiltering}.

    For (b) we take $\Sigma = [0, 1 - \rho] \cup \{1\} \cup \mathbb{D}_{1-\gamma}$ with $\rho = \frac{1}{\sqrt{T}}$. The addition of the $\{1\}$ eigenvalue does not change the comparator class, and so we may apply the result of (c). Result (a) is proved the same way, using the $\rho = 0$ case of Lemma \ref{lemma:spectralfiltering}.
\end{proof}

\subsection{Nonlinear Dynamical Systems}\label{sec:koopman-discrete}


In this subsection, we will prove Theorem \ref{theorem:main_nonlinear}. Our main tool is a space-discretized Markov chain argument that achieves the same goal as the Koopman operator, which may be of independent interest. The discretization argument is sufficient for sequence prediction and avoids traditional compactness and spectral assumptions, replacing them by simpler boundedness and Lipschitz conditions on the dynamics.  

\begin{lemma}
\label{lemma:discretization}
    Let $y_1, \ldots, y_T$ be generated by a deterministic discrete-time dynamical system $(f, h, x_0)$ satisfying Assumption \ref{assumption:nonlinear}. Let $\mathcal{S}$ be an $\epsilon$-net of the ball of radius $R$ in $\mathcal{X}$. The cardinality of $\mathcal{S}$ satisfies 
    $$N = |\mathcal{S}| \leq \left(\frac{2R}{\epsilon}\right)^{d_\mathcal{X}}.$$
    Then, there exists an LDS of hidden dimension $N$ with states $z_t \in \R^N$ and an observation matrix $C' \in \R^{d_\mathcal{Y} \times N}$ with Frobenius norm $\|C'\|_F \leq R \sqrt{Nd_{\mathcal{Y}}}$, such that the output sequence $y_t' = C'z_t$ satisfies 
    $$\sum_{t \leq T }\norm{y_t' - y_t} \leq \frac{T^2}{2}\epsilon.$$
\end{lemma}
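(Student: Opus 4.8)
The plan is to build a finite-state, quantized surrogate of the nonlinear system whose dynamics become \emph{linear} once states are encoded as indicator vectors over the net $\mathcal{S} = \{s_1,\dots,s_N\}$. Write $q:\mathcal{X}\to\mathcal{S}$ for the nearest-net-point map, so that $\norm{q(x)-x}\le\epsilon$ for every $x$ in the ball of radius $R$. I would then define an index map $\pi:[N]\to[N]$ by $s_{\pi(j)}=q(f(s_j))$, i.e. $\pi$ sends each net point to the net point closest to its image under $f$, and set $A' = \sum_{j=1}^N e_{\pi(j)}e_j^\top \in \mathbb{R}^{N\times N}$ (a column-indicator matrix implementing the symbolic dynamics), let $C'\in\mathbb{R}^{d_\mathcal{Y}\times N}$ have $j$-th column equal to $h(s_j)$, and initialize $z_0 = e_{j_0}$ where $s_{j_0}=q(x_0)$. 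Note that $A'$ has exactly one nonzero entry per column, hence is column-stochastic with spectral radius one, consistent with the marginally stable regime; its observability is exactly what Assumption \ref{assumption:nonlinear}(iii) posits, so nothing further need be shown there.

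First I would verify that the lifting reproduces the quantized trajectory. Since $A' e_j = e_{\pi(j)}$, an induction gives $z_t = e_{j_t}$ with $\hat{x}_t := s_{j_t}$ satisfying $\hat{x}_0 = q(x_0)$ and $\hat{x}_{t+1}=q(f(\hat{x}_t))$, and therefore $y_t' = C'z_t = h(\hat{x}_t)$. The crux is then to control the discretization error $\delta_t := \norm{\hat{x}_t - x_t}$. Using the triangle inequality, the $\epsilon$-covering property of $q$, and the $1$-Lipschitzness of $f$,
\[
\delta_{t+1} = \norm{q(f(\hat{x}_t)) - f(x_t)} \le \norm{q(f(\hat{x}_t)) - f(\hat{x}_t)} + \norm{f(\hat{x}_t)-f(x_t)} \le \epsilon + \delta_t .
\]
Hence $\delta_t \le t\epsilon$ (taking $\hat x_0 = x_0$ by adding $x_0$ to the net, so $\delta_0 = 0$), and since $h$ is $1$-Lipschitz, $\norm{y_t'-y_t} = \norm{h(\hat x_t)-h(x_t)} \le \delta_t \le t\epsilon$. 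Summing the arithmetic series $\sum_{t\le T} t\epsilon = \tfrac{T(T+1)}{2}\epsilon$ yields the claimed $\tfrac{T^2}{2}\epsilon$ bound up to the lower-order term. The Frobenius bound is then immediate: $\norm{C'}_F^2 = \sum_{j=1}^N \norm{h(s_j)}^2$, and bounding each of the $N$ columns $h(s_j)$ by the observation bound over the net gives $\norm{C'}_F \le R\sqrt{N d_\mathcal{Y}}$.

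The main obstacle I anticipate is purely the domain bookkeeping that underlies the clean recursion. The per-step bound $\norm{q(f(\hat x_t))-f(\hat x_t)}\le \epsilon$ requires $f(\hat x_t)$ to lie in the region the net actually covers; since $\delta_t \le t\epsilon \le T\epsilon$, the quantized iterates never stray more than $T\epsilon$ from the (bounded) true trajectory, so they remain in the ball of radius $R+T\epsilon$. Making this airtight either presumes $f$ maps the ball of radius $R$ into itself, or, more cleanly, builds the net over a slightly enlarged ball, which only inflates $N$ by a benign constant and leaves the stated asymptotics intact. A secondary point of care is bounding $\norm{h(s_j)}$ for net points not lying on the observed trajectory: here one invokes boundedness of $h$ over the whole covered region rather than only the observed $\norm{y_t}\le R$, which is what feeds the column bound into $\norm{C'}_F$.
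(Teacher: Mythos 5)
Your proposal is essentially identical to the paper's own proof: the same nearest-neighbor quantization onto the $\epsilon$-net, the same indicator-vector/permutation-style transition matrix $A'$ with columns of $C'$ given by $h(s_j)$, and the same Lipschitz recursion $\delta_{t+1}\le\delta_t+\epsilon$ summed to $\frac{T(T+1)}{2}\epsilon$. The two bookkeeping caveats you flag (covering the slightly enlarged region reached by the quantized iterates, and bounding $\|h(s_j)\|$ off-trajectory) are glossed over in the paper as well and are handled exactly as you suggest, so your argument is correct and not a different route.
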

\begin{proof}
    The $\epsilon$-net condition guarantees that for every $x \in \mathcal{X} \cap B_R$ there exists $s \in \mathcal{S}$ such that $\|x - s\| \leq \epsilon$. It is standard that such an $\epsilon$-net can be chosen with $|\mathcal{S}| = N$, and we enumerate the elements as $s^{(1)}, \ldots, s^{(N)}$.

    Define the mapping $\pi: \mathcal{X} \to \mathcal{S}$ as the nearest neighbor projection, i.e. $\pi(x) = \argmin_{s \in \mathcal{S}} \|x - s\|$. 
We consider the deterministic transition operator $T: \mathcal{S} \to \mathcal{S}$ as $T(s) = \pi(f(s))$, yielding a Markov chain with states $s_t$ in a finite state space. If we let $z_t \in \R^N$ be the indicator vectors corresponding to this state and the evolution operator $A' \in \R^{N \times N}$ with $A'_{ij} = 1$ if $T(s^{(j)}) = s^{(i)}$ and $0$ else, then we get a linear evolution
$$z_{t+1} = A'z_t, \quad y_t' = Cz_t$$
with $C \in \R^{d_\mathcal{Y} \times N}$ sending the one-hot vector $z$ corresponding to $s^{(j)}$ to $h(s^{(j)})$. Since $\|y_t\| \leq R$, the matrix elements of $C'$ can be taken to be bounded by $R$.

With these definitions in place, let $x_0, \ldots, x_T$ be a trajectory of the original nonlinear system with observations $y_0, \ldots, y_T$. We let $s_0 = \pi(x_0)$ and $s_{t+1} = T(s_t)$ be the rollout of the finite state space Markov chain, and define $\xi_t = \|s_t - x_t\|$ the error against the true states. Then,
\begin{align*}
    \xi_{t+1} &= \|s_{t+1} - x_{t+1}\| \\
    &= \|\pi (f(s_t)) - f(x_t)\| \\ &\leq \|f(s_t) - f(x_t)\| + \|f(s_t) - \pi (f(s_t))\| \\
    &\leq \xi_t + \epsilon,
\end{align*}
where we used 1-Lipschitzness of $f$ and the $\epsilon$-net condition in the last line. Since $\xi_0 \leq \epsilon$ again by the $\epsilon$-net construction, we find that $\xi_t \leq t\epsilon$ and so $\|y_t' - y_t\| \leq t \epsilon$
by Lipschitzness of $h$.
\end{proof}

This essentially says that a bounded and marginally stable nonlinear dynamical system can be approximated via a $T^{O(d_\mathcal{X})}$-dimensional LDS over a horizon of length $T$. 
An alternate approach to approximating nonlinear systems via high-dimensional linear ones is to use a spectral truncation of the Koopman operator, as outlined in e.g. \cite{brunton2021modernkoopmantheorydynamical}. However, 
proving convergence of the finite-dimensional spectral truncations of Koopman operators can require strong assumptions or compactification tricks such as different input and output spaces \cite{IEEEkoopmandifferentspaces}, without which 
it becomes difficult to describe convergence of continuous and residual spectra. Furthermore, the mechanism of pole placement via observer systems becomes obscured in such settings.

We adopt the improper learning perspective that the only thing that matters is approximately correct observations $y_t' \approx y_t$, and that convergence of spectra is not strictly necessary for sequence prediction. The discretization and Markov chain outlined above achieves this, and the proof is clean and simple with fewer assumptions\footnote{Note, however, that any construction made via truncated Koopman operators will also fall into the comparator class of high-dimensional LDS, against which we have a regret guarantee from Theorem \ref{theorem:main_linear}. If there were a better high-dimensional LDS that approximates the sequence $y_1, \ldots, y_T$, then Algorithm \ref{alg:clsf_regressionmain} would match it without needing to know anything about it.}. 

\begin{remark}
\label{remark:perronfrobenius}
    This Markov chain is closely related to the Perron-Frobenius operator $P$, the adjoint to the Koopman operator. Informally, $P$ is a linear operator on spaces of measures on the state space $\mathcal{X}$ which sends a measure $\mu$ to its pushforward under the dynamics, i.e. $P\mu = f_\#\mu$. One could imagine an infinite-dimensional LDS of the form
    $$\mu_{t+1} = P\mu_t,\quad y_{t} = \int_{\mathcal{X}} h(x) d\mu_t(x), \quad \mu_0 = \delta_{x_0}$$
    which generates the same observations as the original system,
    where both the dynamics and observations are linear with respect to the evolving measure. The Markov chain in Lemma \ref{lemma:discretization} is constructing exactly this LDS, but on a discretized state space to ensure finite-dimensionality (i.e. probability measures are represented via probability vectors). This discretization argument circumvents any need for proofs of convergence of spectral truncations of the Koopman operator or its adjoint $P$.
\end{remark}
Importantly, the observation that the Markov chain constructed in Lemma \ref{lemma:discretization} is (dual to) the Koopman operator implies that the discretized LDS retains the spectral properties of the Koopman operator. Technically, since we use the Euclidean norm for the analysis of this discretized LDS we inherit the spectral properties of the Koopman operator on $L^2(\mathcal{X}, \pi)$ function spaces, where the choice of measure $\pi$ is equivalent to the choice of discretization grid. A very useful result of our improper learning formulation of the problem is that our analysis adapts in hindsight to the best choice of discretization grid, i.e. the Koopman operator on a function space that gives it the most desirable spectral properties. In this way, we sidestep the suboptimalities and difficulties of any explicit choices of function space.

In any case, we are left with a high-dimensional LDS which approximates the original system, and we will conclude by applying Theorem \ref{theorem:main_linear}. First, we define the relevant quantities which we use in the statement and proof of Theorem \ref{theorem:main_nonlinear}.

\begin{definition}[Nonlinear Luenberger program]
\label{def:nonlinearluenbergerprogram}
    Let $(f, h, x_0)$ be a nonlinear dynamical system as per (\ref{eqn:NLDS}) with $\|x_t\|, \|y_t\| \leq R$ and $f, h$ being $1$-Lipschitz. For any discretization $\epsilon > 0$ and spectral constraint set $\Sigma \subset \mathbb{C}$, we consider the optimization program over $L \in \R^{\left(\frac{2R}{\epsilon}\right)^{d_\mathcal{X}} \times d_\mathcal{Y}}$ given by
    $$\min \quad \kappa_{\textrm{diag}}(A'-LC') \quad \text{s.t.} \quad A'-LC' \text{ has all eigenvalues in $\Sigma$},$$
    where $A'$ and $C'$ are the Markov chain transition and observation matrices constructed in Lemma \ref{lemma:discretization}.
    When the constraint set of the optimization is nonempty, with an overloading of notation we denote the minimal value by $Q_\star = Q_\star(f, h, \epsilon, \Sigma) = Q_\star(A', C', \Sigma)  < \infty$ and a minimizing gain by $L_\star$.
\end{definition}

\begin{proof}[Proof of Theorem \ref{theorem:main_nonlinear}]
    
Let $(f, h, x_0)$ be a nonlinear dynamical system satisfying Assumption \ref{assumption:nonlinear}, and let 
    $$z_{t+1} = A'z_t, \quad y_t' = C'z_t$$
    be the high-dimensional LDS constructed in Lemma \ref{lemma:discretization} at some scale $\epsilon$, where $z_0$ is the indicator vector of $\pi(x_0)$. With $\epsilon  = \frac{1}{T^{3/2}}$ this satisfies Assumption \ref{definition:linear_realizable} with $R_{x_0} = 1$ (since $z_0$ is a one hot probability vector), $W = 0$, the same bound $R$ on the signal, $R_B=0$ (since there are no inputs), and $R_C \leq R\sqrt{N} \leq R \cdot \left(2RT^{3/2}\right)^{d_\mathcal{X}/2}$. Letting $\hat{y}_1, \ldots, \hat{y}_T$ be the predictions made by Algorithm \ref{alg:clsf_regressionmain}, by Theorem \ref{theorem:main_linear}(b) (or more precisely by a regret phrasing, see Theorem \ref{theorem:agnosticlinear}(b)),
    $$\sum_{t=1}^T \norm{\hat{y}_t - y_t} \leq \sum_{t=1}^T\norm{y_t' - y_t} + O\left(Q_\star^2 \log(Q_\star) \cdot R m^2 \log\left((2RT^{3/2})^{d_\mathcal{{X}}}\right) \log^7(mT) \cdot \sqrt{T}\right)$$
    The result follows from the discretization error bound of Lemma \ref{lemma:discretization}.
\end{proof}

\ignore{

\newpage

\begin{theorem}[Existence and Properties of an Approximate Finite Linear System Matching Observations]
Let $(\mathcal{X}, f)$ be a deterministic discrete-time dynamical system with state $x_t \in \mathcal{X}$ and dynamics $x_{t+1} = f(x_t)$. Let $y_t = h(x_t)$ be the observation at time $t$, where $h : \mathcal{X} \to \mathcal{Y}$.

Assume:
\begin{itemize}
    \item[\textbf{(A1)}] (\textbf{Boundedness}) There exists $R > 0$ such that for all $t$,
    \[
    \|x_t\| \leq R, \quad \|y_t\| \leq R.
    \]
    \item[\textbf{(A2)}] (\textbf{Marginal stability / Non-expansiveness}) The map $F : \mathcal{Y} \times \mathcal{X} \to \mathcal{Y} \times \mathcal{X}$ defined by
    \[
    F(y, x) := (h(f(x)), f(x))
    \]
    is non-expansive:
    \[
    \|F(a) - F(b)\| \leq \|a-b\| \quad \text{for all } a, b \in \mathcal{Y} \times \mathcal{X}.
    \]
\end{itemize}

Let $d_{\mathcal{Y}}$ and $d_{\mathcal{X}}$ denote the (finite) dimensions of $\mathcal{Y}$ and $\mathcal{X}$, and set $D = d_{\mathcal{Y}} + d_{\mathcal{X}}$. Let $S$ be an $\epsilon$-net of the ball of radius $R$ in $\mathbb{R}^D$. The cardinality of $S$ (and thus the dimension $N$ of the finite LDS) satisfies
\[
N = |S| \leq \left(\frac{3R}{\epsilon}\right)^D.
\]

Then there exists a finite linear dynamical system (LDS) of dimension $N$, with state $z_t \in \mathbb{R}^N$ and a linear projection $C \in \mathbb{R}^{d_{\mathcal{Y}} \times N}$ with operator norm $\|C\| = 1$, such that the output sequence $\hat{y}_t = C z_t$ satisfies
\[
\|\hat{y}_t - y_t\| \leq t\epsilon
\]
for every $t$.
\end{theorem}

\begin{proof}
Let $\mathcal{Z} = \mathcal{Y} \times \mathcal{X}$ and $D = d_{\mathcal{Y}} + d_{\mathcal{X}}$. By assumption (A1), all pairs $(y_t, x_t)$ lie in the closed Euclidean ball of radius $R$ in $\mathbb{R}^D$. 

Let $S$ be an $\epsilon$-net for this ball, meaning that for every $(y, x)$ with $\|y\| \leq R$, $\|x\| \leq R$, there exists $s \in S$ such that $\|(y, x) - s\| \leq \epsilon$. It is standard that such an $\epsilon$-net can be chosen with 
\[
|S| \leq \left(\frac{3R}{\epsilon}\right)^D.
\]
We enumerate the elements of $S$ as $s^{(1)}, \ldots, s^{(N)}$.

Define the mapping $\pi: \mathcal{Z} \to S$ as the nearest neighbor projection, i.e., $\pi(z) = \operatorname{argmin}_{s \in S} \|z - s\|$.

Now define the (deterministic) finite system: For $s_t \in S$, set
\[
s_{t+1} := \pi(F(s_t)),
\]
where $F(y, x) = (h(f(x)), f(x))$ as in the assumptions. This defines a deterministic transition operator $T_S: S \to S$.

Let the finite-dimensional state at time $t$ be the indicator vector $z_t \in \mathbb{R}^N$ corresponding to $s_t$: that is, $z_t$ is the standard basis vector $e_k$ if $s_t = s^{(k)}$.

Define the (matrix) evolution operator $A \in \mathbb{R}^{N \times N}$ by $A_{ij} = 1$ if $T_S(s^{(j)}) = s^{(i)}$, else $A_{ij} = 0$. Then $z_{t+1} = Az_t$; thus, the evolution is linear.

Define the output projection $C \in \mathbb{R}^{d_{\mathcal{Y}} \times N}$ by $C e_k = y^{(k)}$, where $s^{(k)} = (y^{(k)}, x^{(k)})$. For any $z_t$, the output $\hat{y}_t = C z_t$ recovers the $y$ component of $s_t$. $C$ simply picks out the observation coordinate, and thus has operator norm $1$ (with respect to the Euclidean norm).

Now, let $(y_0, x_0), (y_1, x_1), \ldots$ be a trajectory of the original nonlinear system, and set $s_0 = \pi(y_0, x_0)$. Define $s_{t+1} = T_S(s_t)$ and $z_{t+1} = Az_t$ as above. Let $e_t = \|s_t - (y_t, x_t)\|$. By construction, $e_0 \leq \epsilon$.

We claim $e_{t+1} \leq e_t + \epsilon$ for all $t$. Indeed,
\begin{align*}
e_{t+1} &= \|s_{t+1} - (y_{t+1}, x_{t+1})\| \\
&= \|\pi(F(s_t)) - F(y_t, x_t)\| \\
&\leq \|F(s_t) - F(y_t, x_t)\| + \|\pi(F(s_t)) - F(s_t)\| \\
&\leq \|s_t - (y_t, x_t)\| + \epsilon \qquad \text{(non-expansiveness of $F$, definition of $\epsilon$-net)} \\
&= e_t + \epsilon.
\end{align*}
By induction, $e_t \leq t \epsilon$ for all $t$.

The actual output at time $t$ is $\hat{y}_t = C z_t$, corresponding to the $y$-component of $s_t$. Thus,
\[
\|\hat{y}_t - y_t\| = \|P(s_t) - y_t\| \leq \|s_t - (y_t, x_t)\| = e_t \leq t\epsilon,
\]
where $P: \mathcal{Y} \times \mathcal{X} \to \mathcal{Y}$ is the projection to the first $d_{\mathcal{Y}}$ coordinates.

Therefore, the claimed finite LDS with linear output projection $C$ approximates the true output sequence up to $t\epsilon$ at every time $t$, with the stated system dimension and projection norm.

\end{proof}
}

\ignore{

\begin{theorem}[Finite-dimensional Koopman Operator for Marginally Stable Systems]
Let $(\mathcal{X}, f)$ be a deterministic discrete-time dynamical system with state $x_t \in \mathcal{X}$ and dynamics $x_{t+1} = f(x_t)$. Let $y_t = h(x_t)$ be the observation at time $t$, where $h : \mathcal{X} \to \mathcal{Y}$.

Assume:
\begin{itemize}
    \item[\textbf{(A1)}] (\textbf{Finite-memory observability}) There exists $H \in \mathbb{N}$ and a function $\phi : \mathcal{Y}^H \to \mathcal{X}$ such that for all $t$,
    \[
    x_t = \phi(y_t, y_{t-1}, \ldots, y_{t-H+1}).
    \]
    \item[\textbf{(A2)}] (\textbf{Boundedness}) There exists $R > 0$ such that for all $t$,
    \[
    \|x_t\| \leq R, \quad \|y_t\| \leq R.
    \]
    \item[\textbf{(A3)}] (\textbf{Marginal stability (Non-expansiveness)}) The \textbf{lifted transition map}
    \[
    T(y_1, \ldots, y_H) := \left( h\left(f\left(\phi(y_1, \ldots, y_H)\right)\right),\ y_1, \ldots, y_{H-1} \right)
    \]
    is \textbf{non-expansive}, i.e.,
    \[
    \| T(a) - T(b) \| \leq \| a - b \| \quad \text{for all}\quad a, b \in \mathcal{Y}^H.
    \]
\end{itemize}

Let $\epsilon > 0$. Let $S$ be an $\epsilon$-net of the ball $\mathcal{B}_R^H \subset \mathcal{Y}^H$ (i.e., for every $(y_1, ..., y_H)$ with $\|y_i\| \leq R$, there exists $s \in S$ with $\|(y_1, ..., y_H) - s\| \leq \epsilon$).

Define the projection $\pi: \mathcal{Y}^H \to S$ as mapping each history to its nearest point in $S$.

Consider the induced finite system on $S$ defined by:
\[
s_{t+1} = \pi\left( T(s_t) \right)
\]
where $s_t = (y_t, y_{t-1}, ..., y_{t-H+1}) \in S$.

Let $T_S : S \to S$ denote this deterministic transition map.

For any observable $g : S \to \mathbb{R}$, define the finite Koopman operator $K$ by
\[
(Kg)(s) = g(T_S(s)),\quad s \in S.
\]

Suppose $g : \mathcal{Y}^H \to \mathbb{R}$ is $L_g$-Lipschitz.

Then, for any initial true history $h_0 \in \mathcal{Y}^H$ and its projection $s_0 = \pi(h_0)$, after $t$ steps:
\[
\left| (K^t g)(s_0) - (K_{\mathrm{true}}^t g)(h_0) \right| \leq L_g\, t\, \epsilon
\]
where $K_{\mathrm{true}}$ is the true (infinite-dimensional) Koopman operator acting on $g$.

\end{theorem}

\begin{proof}
\textbf{Step 1: Well-defined finite system.}\\
By (A2), all $x_t, y_t$ and thus all $H$-histories live in a bounded set, so the $\epsilon$-net $S$ is finite. The projected system on $S$ is deterministic and finite.

\textbf{Step 2: Non-expansive error propagation.}\\
Let $h_0 \in \mathcal{Y}^H$ be the true initial history and $s_0 = \pi(h_0)$ its projection. Let $h_t$ be the true $t$-step history and $s_t$ the discretized $t$-step history. Let $e_t := \|h_t - s_t\|$. Initially, $e_0 \leq \epsilon$.

At each step,
\begin{align*}
e_{t+1} &= \| T(h_t) - \pi(T(s_t)) \| \\
&\leq \| T(h_t) - T(s_t) \| + \| T(s_t) - \pi(T(s_t)) \| \\
&\leq \| h_t - s_t \| + \epsilon \\
&= e_t + \epsilon
\end{align*}

\textbf{Step 3: Induction.}\\
By induction, $e_t \leq t \epsilon$.

\textbf{Step 4: Observable error.}\\
Since $g$ is $L_g$-Lipschitz,
\[
|g(s_t) - g(h_t)| \leq L_g\, e_t \leq L_g\, t\, \epsilon.
\]
Thus,
\[
\left| (K^t g)(s_0) - (K_{\mathrm{true}}^t g)(h_0) \right| \leq L_g\, t\, \epsilon.
\]
This completes the proof.
\end{proof}
}

\section{Discussion}
\label{sec:discussion}

As a review of our nonlinear guarantee, we combined the Luenberger observer with a global linearization to construct a high-dimensional LDS with desirable spectral structure that competes against the original system, and showed that spectral filtering has a regret guarantee against this competitor. For a fixed nonlinear system, we defined the Luenberger program with value $Q_\star(\Sigma)$, which is the best possible condition number such that the constructed observer system has poles contained in $\Sigma$, and derived regret guarantees directly scaling with $Q_\star$.

\paragraph{Interpretation of loss bounds.} Suppressing dependence on $R$ and $d_\mathcal{X}$ for exposition, our sharpest analysis yields a regret guarantee of the form
$$\operatorname{Regret}_T(\mA, \Pi) \leq \widetilde{O}\left(\mathrm{poly} \log(Q_\star) \cdot \sqrt{T}\right)$$
for $\Pi$ being a class of marginally stable bounded nonlinear predictors,
where for $\mA$ we run Algorithm \ref{alg:clsf_regressionmain} with $O(\mathrm{poly} \log (Q_\star, T))$ many parameters and the Azoury-Vovk-Warmuth forecaster as our optimizer (see Theorem \ref{theorem:logdependence}). Furthermore, $Q_\star$ is exponential in the number of undesirable eigenvalues of \textit{some} high-dimensional LDS which approximates the nonlinear system -- we outline one construction via a Markov chain on a uniform discretization grid, but any other approximating LDS is just as valid. We find that if there is \textit{any} approximating LDS with $o(\log T)$ many undesirable eigenvalues, then we get an efficient algorithm with regret that is sublinear in $T$. 

Phrased in the other direction, if $N_\star$ denotes the optimal number of undesirable eigenvalues, minimized over all high-dimensional approximating LDS's, then it can be learned using a linear method such as spectral filtering with $\operatorname{poly}(N_\star)$ parameters and $\operatorname{poly}(N_\star)$ samples. We arrive at the main takeaway of our learning results: {if a nonlinear system has some high-dimensional approximating LDS with a certain spectral structure, then it can be efficiently learned by an improper algorithm suited for that structure}. 
With the choice of Algorithm \ref{alg:clsf_regressionmain} as our improper learning method, we get the statement that "\textit{the complexity of learning nonlinear signals via Algorithm \ref{alg:clsf_regressionmain} scales with the minmal number of marginally stable complex eigenvalues over all linear approximations}".

\paragraph{Spectral properties of linearizations.}
    Note that the spectral properties of Koopman operators depend on the choice of function space. Though we do not make this precise, by nature of improper learning our analysis implies that if there exists {\em any} function space on which the (truncation of the) Koopman operator satisfies the above desiderata then spectral filtering will succeed. 
    
    For many physical systems $f$ which satisfy some detailed balance condition (e.g. Langevin dynamics/energy minimization, see Section \ref{subsection:langevin}) the Koopman operator can be made self-adjoint, which yields $Q_\star(f, h, \epsilon, [-1, 1]) = O(1)$ for all $\epsilon$ and $h$ which are observable. 
    In other words, for many physical systems we truly do expect the existence of a high-dimensional approximating LDS with few marginally stable complex eigenvalues (see Section \ref{sec:numerics} for numerics regarding this). Combining this with the previous conclusion, we get the statement that "\textit{Algorithm \ref{alg:clsf_regressionmain} can learn physical systems provably and efficiently with few parameters}".

\paragraph{A control-theoretic perspective on learnability.} 
High-dimensional LDS's are in a sense universal dynamical computers, since by e.g. Lemma \ref{lemma:discretization} any reasonable nonlinear system can be approximated to arbitrary accuracy within this class. As such, it is very natural to define "learnability" of a nonlinear signal in terms of learnability of \textit{some} high-dimensional linear approximation. 

Armed with these guiding principles, a central theme in our analysis is the use of control-theoretic tools -- Luenberger observer systems and global linearization via Koopman-like methods -- not as components to be explicitly designed or learned, but to construct a theoretical benchmark in the form of a high-dimensional LDS predictor that an algorithm should aspire to match.
By design, $Q_\star(f, h, \Sigma)$ serves as a fundamental measure of the difficulty of learning a system $f$ from its observations $h$ using an improper algorithm that competes against LDS's with spectrum in $\Sigma$. 

This yields a notion of learnability that is tailored to a given improper algorithm.
For example, spectral filtering naturally competes against real-diagonalizable LDS's ($\Sigma \subset [-1, 1]$, i.e. physical systems with detailed balance) and regression naturally competes against stable LDS's ($\Sigma \subset \mathbb{D}_{1-\gamma}$, i.e. systems with an attractive fixed point), and Algorithm \ref{alg:clsf_regressionmain} demonstrates that we can compose these building blocks to cover a larger spectral region. The ability of these algorithms to learn a nonlinear signal depends sharply on the corresponding $Q_\star$ values of the generating system via our regret bounds, and our results give a way to determine the natural algorithm for a nonlinear system based on its qualitative properties (such as reversibility or strong stability). 

\paragraph{Summary.}
To summarize in a few words, the Luenberger program provides a new lens through which to analyze dynamical systems: by characterizing their $Q_\star$ values, one can make principled a priori judgments about their amenability to learning from partial observations with improper methods. Algorithm \ref{alg:clsf_regressionmain} is a very good general choice for such a method, since it can efficiently and provably learn to predict systems that are either physical or strongly stable. This methodology is universal, since the algorithm can be run without needing to know anything about the system in advance.

\section{Experiments}\label{sec:experiments}

In this section, we confirm our theoretical results with various experiments. First, we demonstrate that spectral filtering over observations allows for the learning of asymmetric LDS with process noise. Then, we investigate the algorithm's performance on several nonlinear sequence prediction tasks. Lastly, we study the numerical behavior of $Q_\star$ on some systems of interest.

\subsection{Linear Systems}

Recall Algorithm \ref{alg:clsf_regressionmain}, which performs spectral filtering over the observation history $y_t$ as well as the open-loop controls $u_t$. If we run the algorithm with $N^t \equiv 0$ and $m=1$ then we essentially recover the original vanilla spectral filtering algorithm of \cite{hazan2017learning}, which has a guarantee for symmetric LDS with no process noise. We will refer to this baseline as \verb|SF|, and we will denote the full Algorithm \ref{alg:clsf_regressionmain} by \verb|SF+obs|. For simplicity, we will only compare these two methods to highlight the contributions of this work; for comparisons between spectral filtering-based algorithms and other LDS learning methods, please see \cite{liu2024flash} or \cite{shah2025spectraldsprovabledistillationlinear}. 

Theorem \ref{theorem:main_linear} proves two main benefits of the additional filtering over observations: (1) robustness to adversarial process noise and (2) the ability to learn asymmetric LDS. To demonstrate these advancements, we consider signals $(y_t)_{t=1}^T$ generated by an LDS according to (\ref{eqn:LDS}) in two settings:
\begin{enumerate}
    \item[(1)]$A$ is a Gaussian random $128 \times 128$
    matrix, normalized for $\|A\| = 1$, and $w_t = 0.1 \cdot \sin(3\pi t) \cdot [1,\ldots, 1] \in \R^{128}$ are correlated sinusoidal disturbances.
    \item[(2)] $A$ is the $16 \times 16$ cyclical permutation matrix and $w_t = 0$.
\end{enumerate}
In both cases, $d_{in}=d_{out}=1$, we sample $B, C, x_0, (u_t)_{t=1}^T$ as i.i.d. standard Gaussians, we use the \verb|optax.contrib.cocob| optimizer for simplicity\footnote{Since the involved optimizations are convex, these results are robust to optimizer choice. By contrast, learning either of the above problems with another approach such as S4 or Mamba are nonconvex and very dependent on the optimizer used.}, and we run Algorithm \ref{alg:clsf_regressionmain} with $m = 1$ and $h=24$. The instantaneous losses are plotted in Figure \ref{fig:experiments_lds}, averaged over random seeds and smoothed.

\begin{figure}[!htb]
    \centering
    \includegraphics[width=0.45\linewidth]{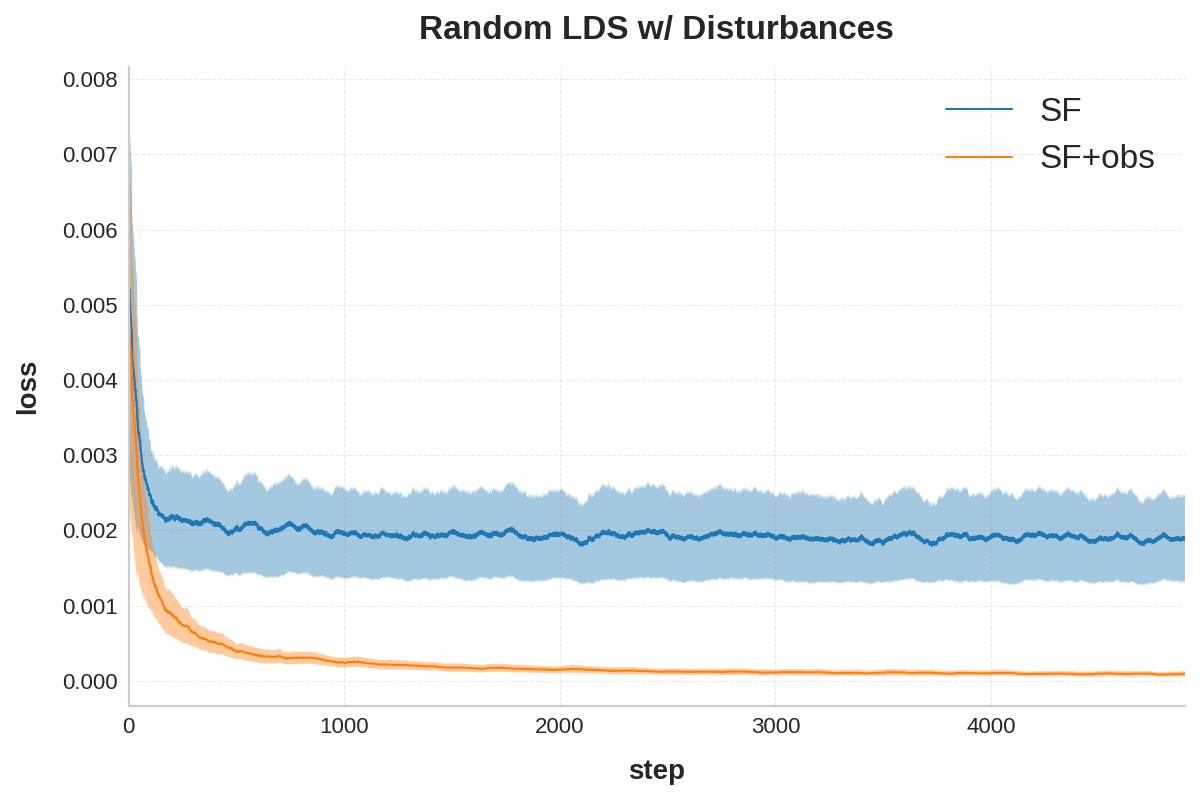}
    \includegraphics[width=0.45\linewidth]{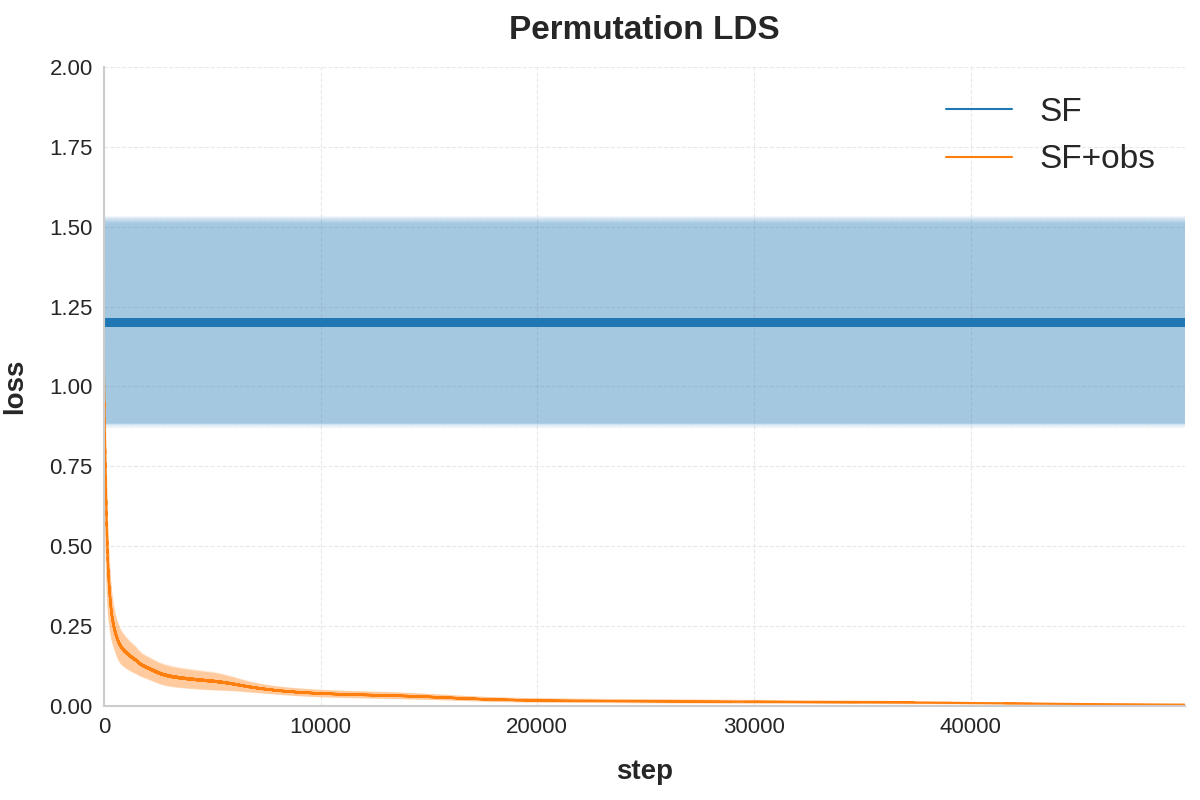}
    \caption{Instantaneous losses $\|\hat{y}_t - y_t\|^2$ plotted as a function of $t$ for experiments (1) and (2), averaged over 12 random seeds with a smoothing filter of length 100 and 1000, respectively. 
    }
\label{fig:experiments_lds}
\end{figure}

As expected, filtering over observations improves the tolerance to asymmetry and correlated process noise, and on setup (1) the minimal loss achieved is a fixed determinstic nonzero value determined by the noise structure. On setup (2), we see that vanilla spectral filtering is unable to make any progress for the permutation system (which is the most asymmetric marginally stable LDS); by contrast, the simple addition of filtering over observations enables learning. Furthermore, in (2) the number of steps needed to achieve 0 loss is very large\footnote{Interestingly, for the permutation system there is an initial phase of rapid decrease, followed by a very slow second phase of refinement. Anecdotally, we find that for the first order optimizers we experimented with, the length of the initial phase appears independent of $d_h$ whereas the length of the second phase is governed by $Q_\star$. This indicates that perhaps other optimization algorithms would achieve faster convergence, which we leave to future work.}, as predicted by the value of $Q_\star$ (see Example \ref{example:permutationqstar}). Note that even though the systems in (1) are asymmetric, it is in a much tamer way that results in smaller $Q_\star$ and faster learning. 
To sum up, the benefits of filtering over observations appear exactly as predicted by Theorem \ref{theorem:main_linear}.

\subsection{Nonlinear Systems}
We now turn to autonomous nonlinear systems, which generate a signal $(y_t)_{t=1}^T$ according to (\ref{eqn:NLDS}). The task is still next-token prediction, in which we use $y_1, \ldots, y_t$ to form predictions $\hat{y}_{t+1}$. In Section \ref{sec:experiments_nonlinearalgs} we will detail the four algorithms we will run, after which we will present their empirical performance on simple nonlinear systems of interest.

\subsubsection{Methods}
\label{sec:experiments_nonlinearalgs}
Consider running Algorithm \ref{alg:clsf_regressionmain} with $J_t,M_t \equiv 0$, which corresponds to spectral filtering with regression over the observation history. As before, we run this with $k=24$ and $m=1$ (yielding $(k+m)\cdot d_\mathcal{Y}^2$ many parameters), and for optimizer we use either COCOB or ONS with tuned learning rate, depending on computational restrictions. For the following experimental results, we refer to this as \verb|SF|. As baselines, we consider:
\begin{itemize}
    \item Directly learning an observer LDS $(A, B, C, x_0)$ via online gradient descent on the next-token MSE loss, with predictions of the form
    $$\hat{y}_{t+1} = \sum_{j=0}^t CA^{j}By_{t-j} + CA^{t+1}x_0.$$
    We refer to this baseline as \verb|LDS|, where we fix $d_h=64$ to be the hidden dimension of the LDS. This predictor has $d_h^2 + 2d_h\cdot d_\mathcal{Y} + d_h$ parameters, and the involved optimization is nonconvex\footnote{In fact, the \verb|LDS| baseline tends to blow up unless trained with the Adam optimizer with tuned learning rate, which we use for this baseline throughout the nonlinear experiments.}.
    
    \item Performing an online variant of the eDMD algorithm \cite{eDMDrowley}, which fixes a dictionary $\{f_j\}_{j=1}^n$ of nonlinear observables $f_j: \mathcal{Y} \to \R$, defines the lifted states $z_t = [f_1(y_t), \ldots, f_n(y_t)] \in \R^n$, and attempts to fit a predictor $z_{t+1} = Az_t$, $\hat{y}_t = Cz_t$ for matrices $A \in \R^{n \times n}$, $C \in \R^{d_\mathcal{Y} \times n}$. We use the \verb|pykoopman| library \cite{pan2023pykoopmanpythonpackagedatadriven} implementation, which fits $A, C$ using least squares linear regression over the history of past observations. 
    For the choice of observables, we let $n = d_\mathcal{Y} + 20$, where $$f_j(y) = \begin{cases}
        y_j & j = 1, \ldots, d_{\mathcal{Y}} \\
        \mathrm{RBF}_j(y) & j = d_{\mathcal{Y}} + 1, \ldots, n
    \end{cases},$$ 
    where $\mathrm{RBF}_j$ refers to a radial basis function using thinplate splines with centers fitted to the data\footnote{This choice is described in Section 3 of \cite{eDMDrowley}, and is also used in the eDMD tutorials for the \verb|pykoopman| library, \href{https://pykoopman.readthedocs.io/en/master/tutorial_koopman_edmd_with_rbf.html#Koopman-regression-using-EDMD}{available online}. Furthermore, this dictionary seemed to be a good choice uniformly over all our tasks.}. In other words, our nonlinear dictionary concatenates the initial observations with 20 $\mathrm{RBF}$ observables. We refer to this baseline as \verb|eDMD|, and fit every $T/10$ steps for efficiency. This method has $n^2 + n\cdot d_\mathcal{Y}$ parameters.
    
    \item Lastly, we introduce a new method of eDMD combined with spectral filtering, which we call \verb|SFeDMD|. We fix the same set of observables as in the \verb|eDMD| baseline, but instead of fitting $A, C$ directly we apply the predictor from Algorithm \ref{alg:clsf_regressionmain} (with $k=24$, $m=0$) to learn the lifted dynamics. One could learn the parameters $N^t$ with either gradient descent or least squares linear regression: we choose the latter to have a more direct comparison against the \verb|eDMD| baseline. This method has $(k+m)\cdot n \cdot d_\mathcal{Y}$ parameters.
\end{itemize}

To summarize, the algorithms we run consist of \verb|SF| (which directly applies the observation-only version of Algorithm \ref{alg:clsf_regressionmain} using gradient descent), \verb|LDS| (which attempts to directly learn an observer LDS via gradient descent), \verb|eDMD| (which performs the classical eDMD algorithm via linear regression with a fixed dictionary of nonlinear observables), and \verb|SFeDMD| (which uses the same dictionary of nonlinear observables and learns the dynamics improperly via spectral filtering and linear regression). These four algorithms constitute both proper and improper learning methods on both direct observations and nonlinear liftings. Rather than using online optimization algorithms for all methods, we choose to use regression for both eDMD baselines in order to directly apply the \verb|pykoopman| library and for more direct comparison against eDMD experimental results of other works.

\subsubsection{Lorenz System}

The Lorenz system is a 3-dimensional deterministic nonlinear dynamical system with state $x = [x^{(1)}, x^{(2)}, x^{(3)}] \in \R^3$ evolving according to the coupled differential equations\footnote{We simulate this ODE using a simple explicit Euler discretization with $\Delta t = 0.01$ to generate our plots and datasets.}
$$\begin{cases}
    \dot{x}^{(1)} = \sigma(x^{(2)} - x^{(1)}), \\
    \dot{x}^{(2)} = x^{(1)}(\rho - x^{(3)}) - x^{(2)}, \\
    \dot{x}^{(3)} = x^{(1)} x^{(2)} - \beta x^{(3)}
    \end{cases}$$
    for fixed constants $\sigma, \rho, \beta$. This was first considered by Edward Lorenz in 1963 as a simplified model for atmospheric dynamics, and since then it has become
a standard dynamical system on which to test prediction algorithms. For the choices $\sigma = 10, \rho = 28, \beta = \frac{8}{3}$ (which we make from now on) the dynamics become quite complicated, with chaotic solutions and the existence of a strange attractor of fractal dimension. Two typical trajectories are plotted in Figure \ref{fig:typicallorenztrajectory}, demonstrating the characteristic two-lobed attractor shape as well as the sensitive dependence on initial condition.

\begin{figure}[!htb]
    \centering
    \includegraphics[width=0.45\linewidth]{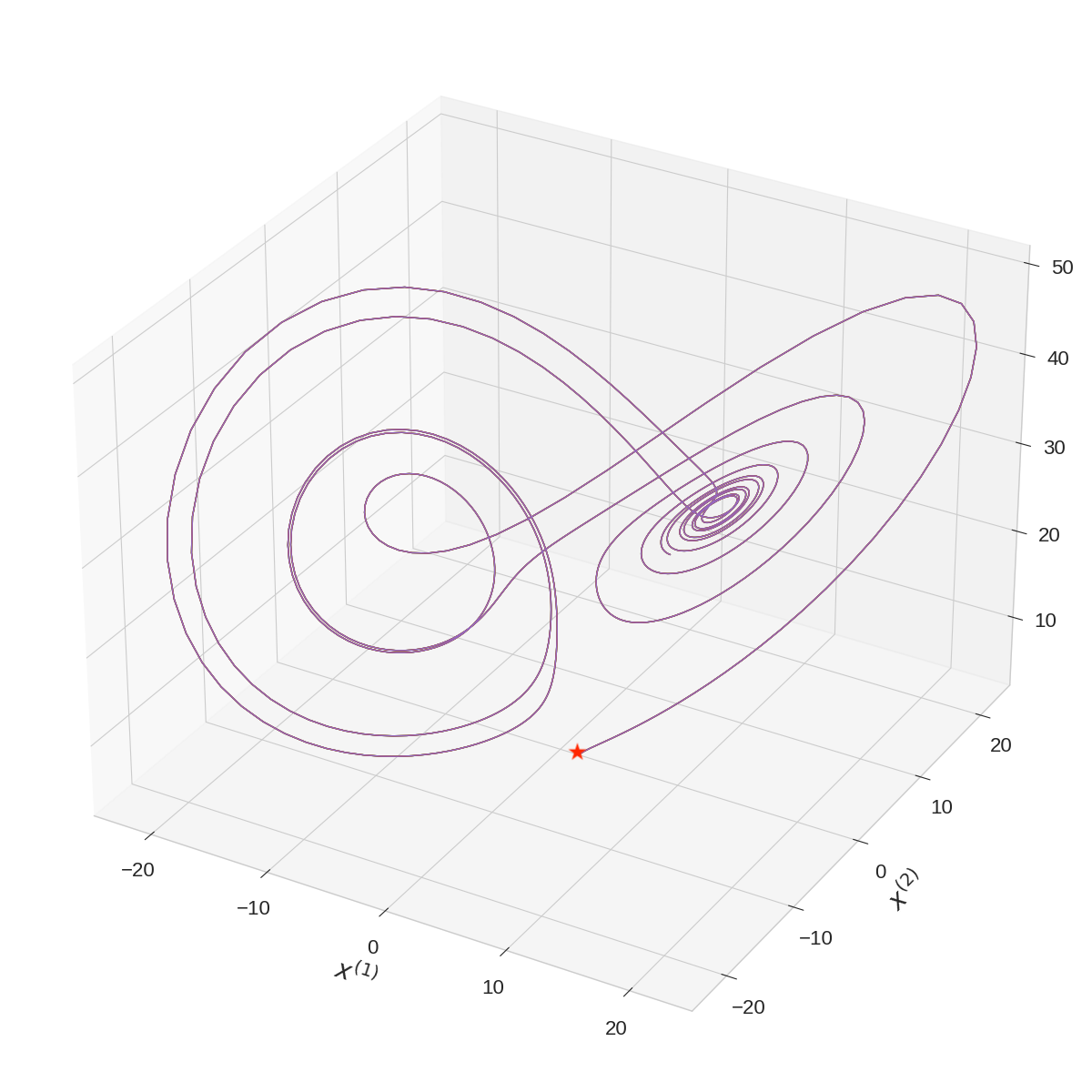}
    \includegraphics[width=0.45\linewidth]{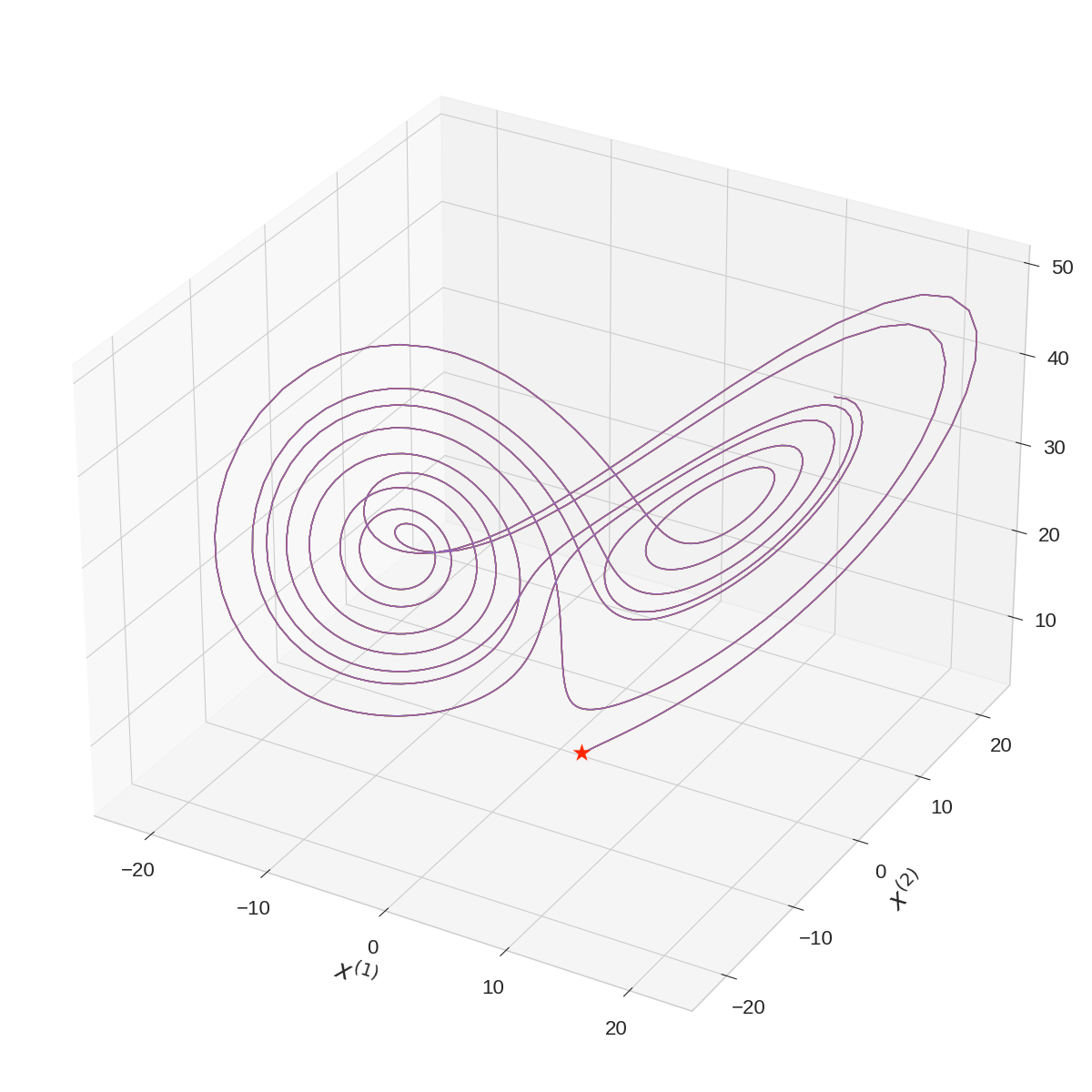}
    \caption{Two trajectories of the Lorenz system, run for $1,024$ steps starting from the initial conditions $[1, 1, 1]$ and $[1.1, 1, 0.9]$, respectively. Initial positions are marked with a red star. These two trajectories quickly diverge from each other despite their similar initial conditions, demonstrating the chaotic behavior.
    }
\label{fig:typicallorenztrajectory}
\end{figure}

For each fixed initial condition $x_0$, the Lorenz system produces a sequence of states $x_1, \ldots, x_T$. We consider two prediction tasks: one in which the algorithm sees full observations $y_t = x_t$, and one in which the algorithm sees partial observations via $y_t = Cx_t$ for $C \in \R^{1 \times 3}$. We run the four algorithms from Section \ref{sec:experiments_nonlinearalgs} on both the fully and partially observed versions of the task, with initial conditions and the value of $C$ i.i.d. standard Gaussian. Losses are plotted in Figure \ref{fig:experiments_lorenz}, averaged over random seeds and smoothed.

\begin{figure}[!htb]
    \centering
    \includegraphics[width=0.45\linewidth]{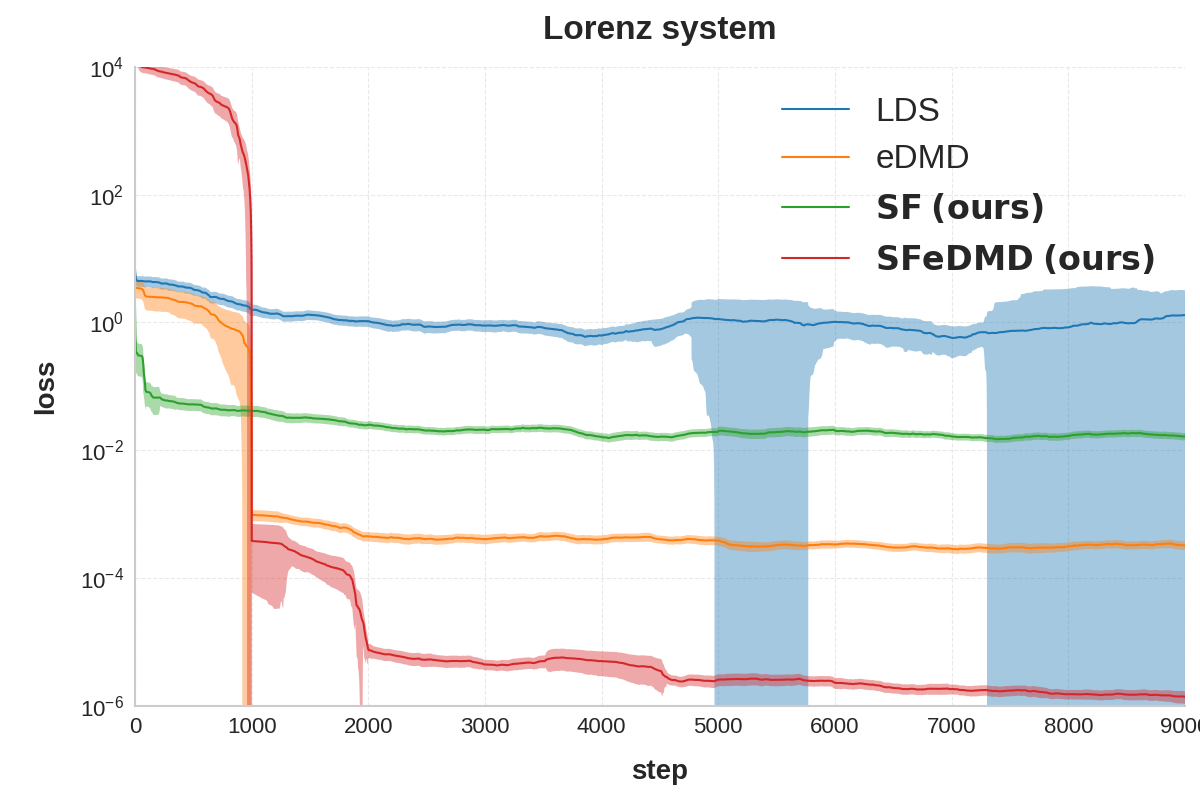}
    \includegraphics[width=0.45\linewidth]{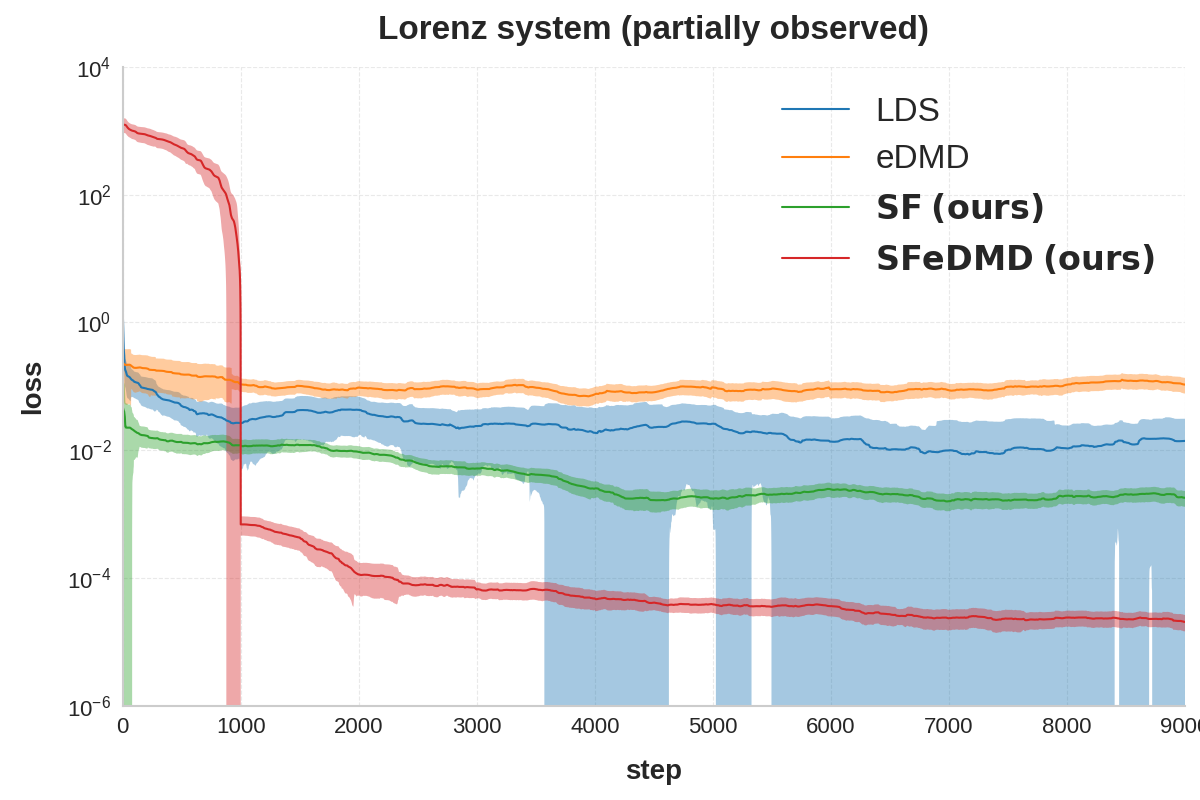}
    \caption{Instantaneous losses $\norm{\hat{y}_t - y_t}^2$ plotted on log scale as a function of $t$ for the fully and partially observed Lorenz system, respectively, averaged over 12 random seeds with a smoothing filter of length 1000.
    }
\label{fig:experiments_lorenz}
\end{figure}

As we see, Algorithm \ref{alg:clsf_regressionmain} is able to learn successfully in either the partial or full observation setting when filtering over either the original or nonlinearly-augmented observations -- this corroborates the conclusion of Theorem \ref{theorem:main_nonlinear}. By contrast, attempting to learn an observer system directly suffers from instability and nonconvexity, making it unfeasible even on this fairly low-dimensional task. Furthermore, we see that spectral filtering over a sequence augmented by nonlinear observables is certainly the best way to go: the \verb|SFeDMD| method is able to outperform \verb|eDMD| with fewer parameters in the fully observed setting, despite the fact that they both make use of the same dictionary of observables. In addition, a filtering-based predictor is able to aggregate information across time in a way that can handle partial observations naturally, whereas \verb|eDMD| cannot\footnote{The classical eDMD approach to handle partial observation would be to use time-delayed observables to generate an injective representation of state via Takens' theorem, but (1) anecdotally this is easier said than done since the dynamics of a time-delay lifting are often much more complex and (2) this vastly increases the number of parameters and the complexity of gradient-based optimization.}. 

So far, we have investigated the performance of various methods on the task of next-token prediction; after all, this is the loss on which we are training and the regret metric that we use for our theory. While this is by far the most common objective in practice, predictors learned under this loss are often deployed in an autoregressive way\footnote{By autoregression, we mean that there is an initial context $y_1, \ldots, y_L$, which the model uses to produce a prediction $\hat{y}_{L+1}$. In the next iteration, the model uses the context $y_1, \ldots, y_L, \hat{y}_{L+1}$ to predict $\hat{y}_{L+2}$, which in turn gets appended to the context and the process is repeated. In this way, the predictor may be queried with contexts 
corrupted by its prior mistakes, which allows
errors to compound.}, such as for text generation in LLMs or rollouts of world models in MPC-type control and planning applications. While our theory does not capture this type of generalization (a predictor which gets very small next-token prediction loss may fail catastrophically in rollout), we find it interesting to inspect the autoregressive behaviors of our learned predictors. 

\begin{figure}[!htb]
    \centering
    \includegraphics[width=0.45\linewidth]{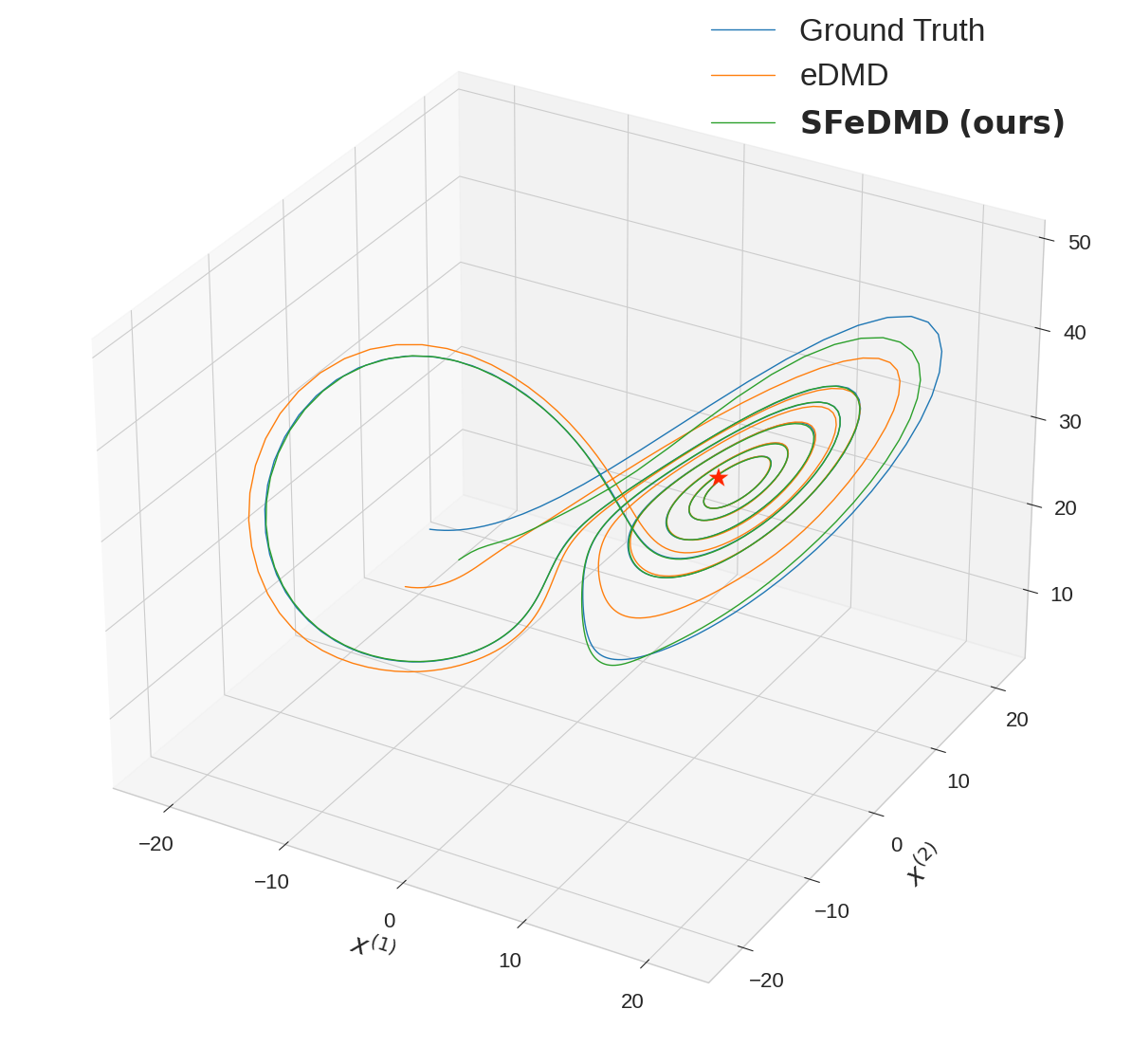}
    \includegraphics[width=0.45\linewidth]{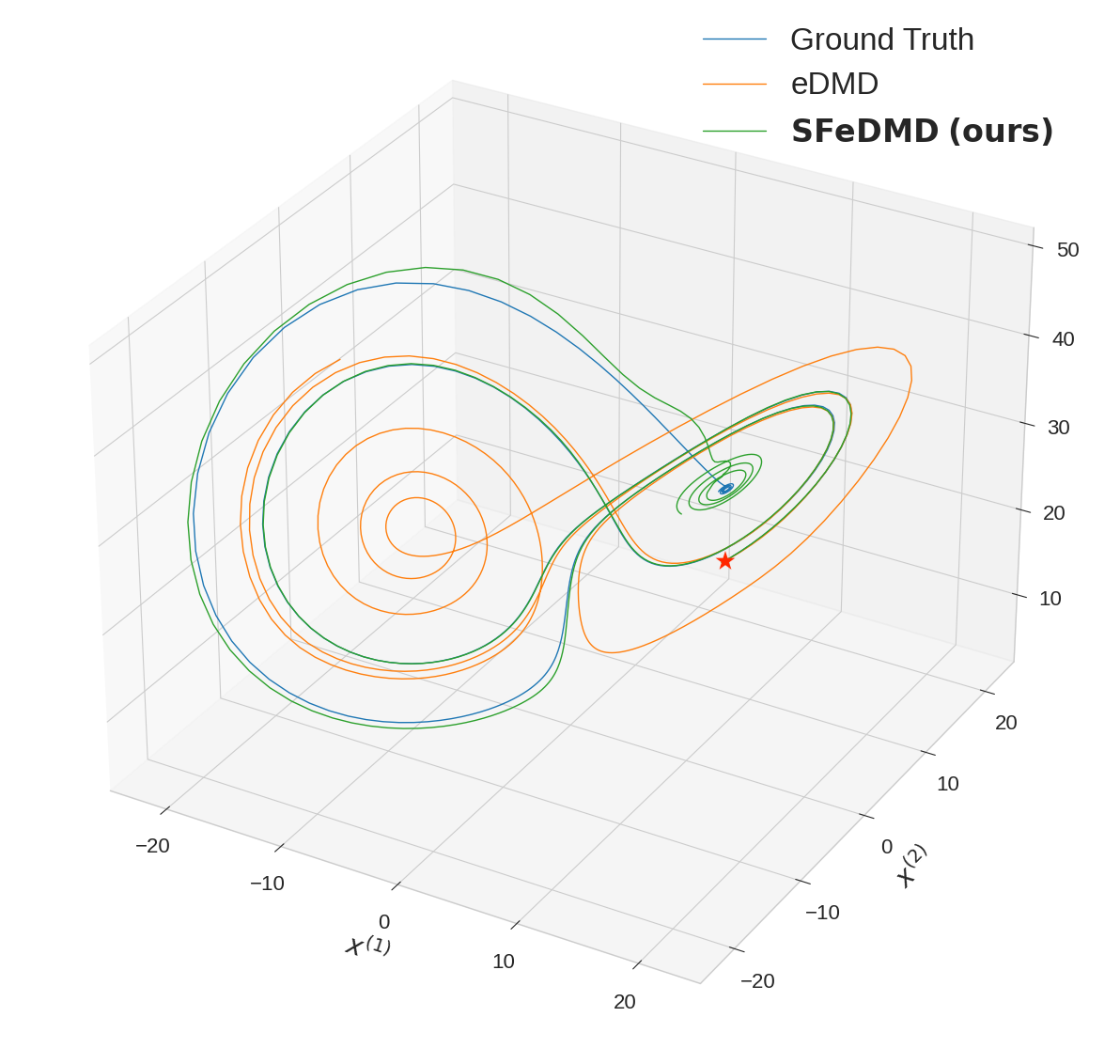}
    \caption{Two sets of autoregressive trajectories of 
    length 512, plotted alongside the ground truth trajectory given by continuing to simulate the Lorenz ODE. The initial positions at which the rollouts start are marked with a red star.}
\label{fig:lorenz_autoregressive}
\end{figure}

In Figure \ref{fig:lorenz_autoregressive} we plot two autoregressive trajectories of the learned \verb|eDMD| and \verb|SFeDMD| predictors\footnote{The \verb|LDS| and \verb|SF| predictors consistently diverge during rollouts, which may be due to their online training via gradient descent. At each iteration of training the predictor is adapted to the current state and dynamics, and so it may forget about behaviors that it saw earlier in training. For this reason we suspect that training in batch on a dataset of trajectories may fix this.} started from contexts unseen during training. On the first one, both methods are able to follow the general trend of the dynamics during the whole rollout, though \verb|SFeDMD|'s predictions remain sharp for longer. On the second trajectory, the errors made by \verb|eDMD| combine with the chaotic nature of this system to produce a qualitatively different rollout, which \verb|SFeDMD| is able to avoid. Of course, the ability of a predictor to generalize from next-token training to autoregressive rollout is heavily dependent on the parameterization, and deep models or different nonlinear liftings likely produce different behaviors. Since autoregressive rollout is not the goal of this paper, we leave a more detailed investigation to future work.

\subsubsection{Double Pendulum}
Another nonlinear system of physical interest is the double pendulum (a pendulum with two joints), which also exhibits a chaotic behavior and can have complicated marginally stable dynamics. We use the implementation in the Brax \cite{freeman2021braxdifferentiablephysics} library, which is a \verb|jax|-based physics simulator -- this double pendulum is attached to a cart that can move horizontally on a rail, and it takes inputs which push the cart to the left or right. A fixed deterministic agent/policy operating in such an environment yields a deterministic nonlinear dynamical system which produces a sequence of states $x_0, \ldots, x_T$; for this environment, we define the state to be (the sines and cosines of) the angles and the angular velocities of both joints, together with the position and velocity of the cart to total $d_\mathcal{X} = 8$. We consider a fixed linear feedback controller of the form $u_t = Kx_t$ for a fixed $K \in \R^{1 \times 8}$ with i.i.d. standard Gaussian entries.
As before, we can construct the sequence prediction task with either full observations $y_t = x_t$ or partial observations $y_t = Cx_t$ for a fixed $C \in \R^{1 \times 8}$ with i.i.d. standard Gaussian entries. Losses for both settings are plotted in Figure \ref{fig:experiments_pendulum}, averaged over random seeds and smoothed.

\begin{figure}[!htb]
    \centering
    \includegraphics[width=0.45\linewidth]{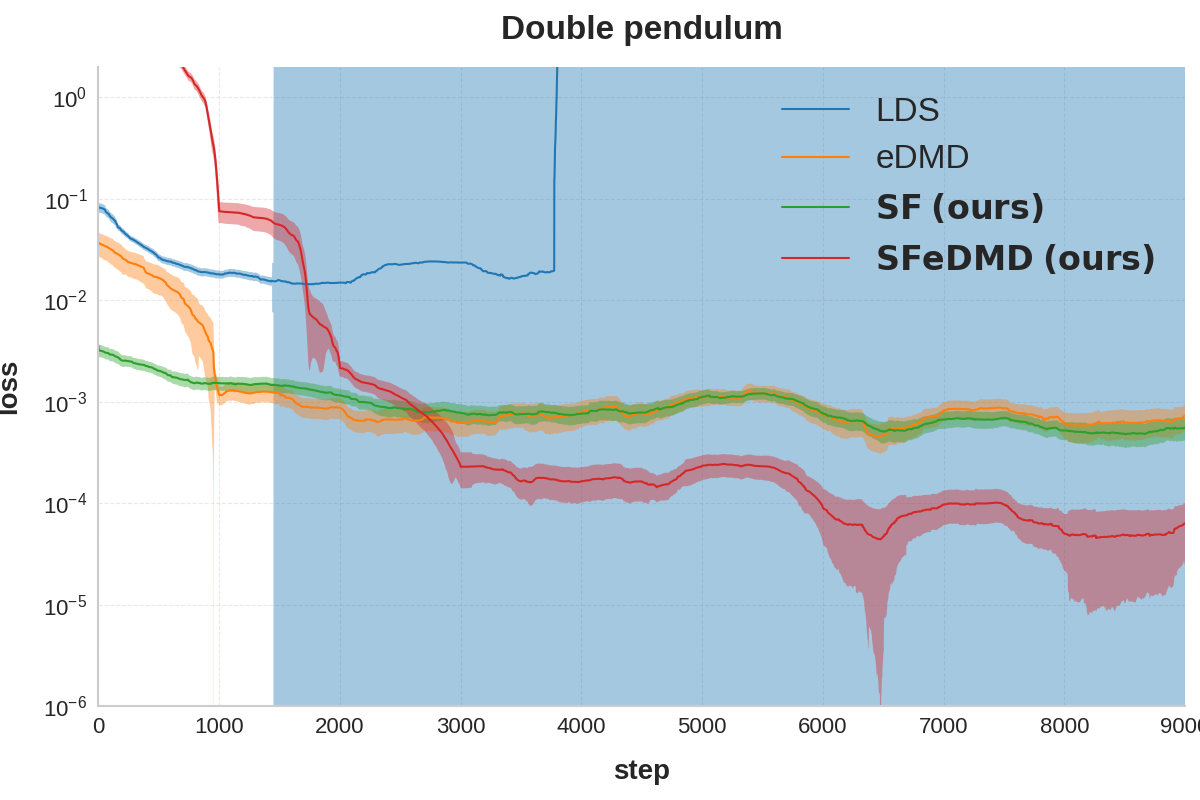}
    \includegraphics[width=0.45\linewidth]{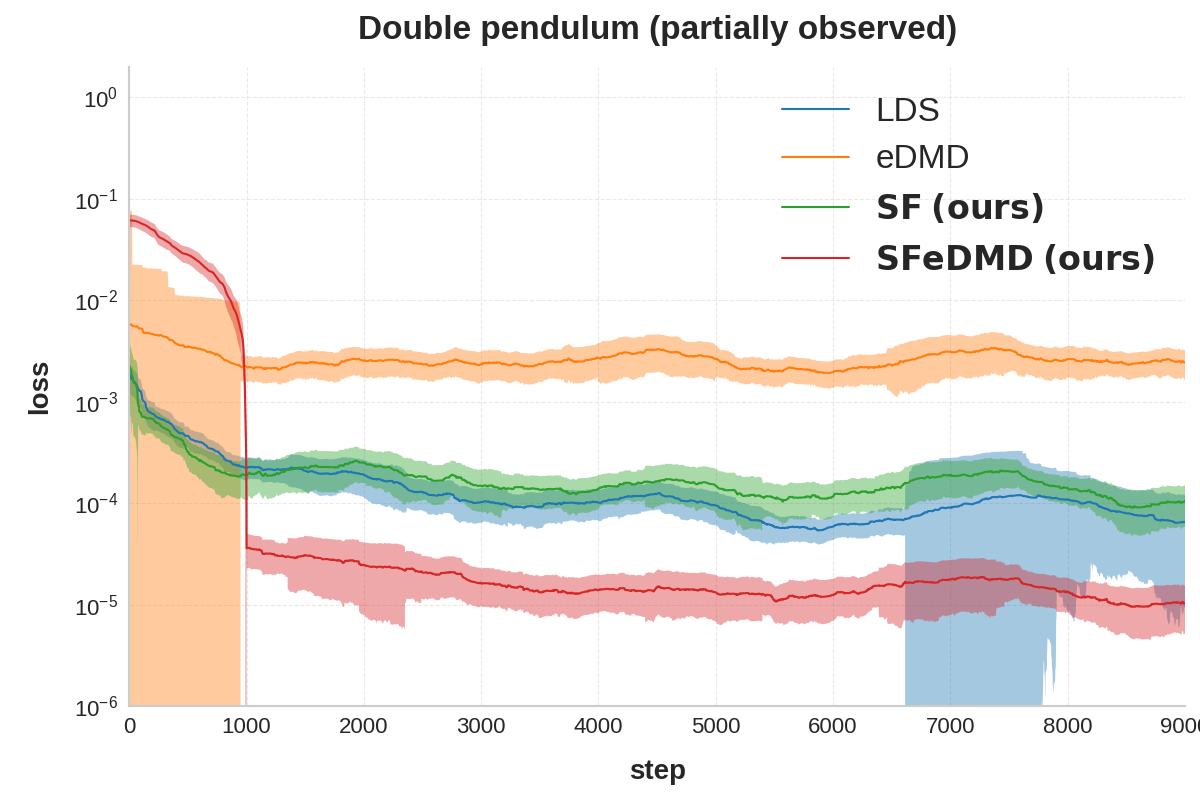}
    \caption{Instantaneous losses $\norm{\hat{y}_t - y_t}^2$ plotted on log scale as a function of $t$ for the fully and partially observed double pendulum, respectively, averaged over 12 random seeds with a smoothing filter of length 1000.
    }
\label{fig:experiments_pendulum}
\end{figure}

This is a more difficult system than the Lorenz attractor (since a predictor needs to also learn the effects of $K$). 
Despite this, we see a similar story as in the Lorenz system: with full observations both \verb|eDMD| and \verb|SF| are able to predict well, and \verb|SFeDMD| is able to predict very well. Furthermore, \verb|SF| and \verb|SFeDMD| are able to succeed when given partial observations, which \verb|eDMD| struggles with. The \verb|LDS| baseline is brittle -- on some trajectories it achieves the same performance as \verb|SF|, but on others it can completely diverge even at reasonable learning rates.

\begin{figure}[H]
    \centering
    \includegraphics[width=0.45\linewidth]{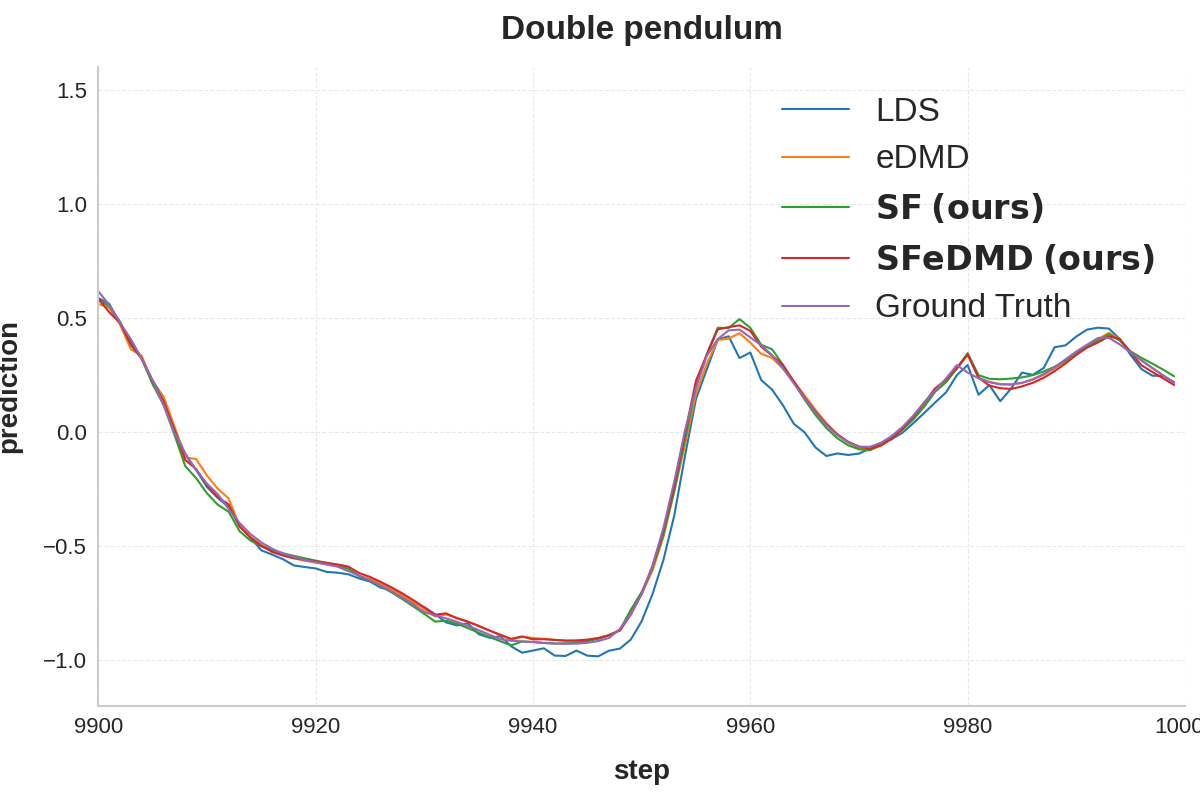}
    \includegraphics[width=0.45\linewidth]{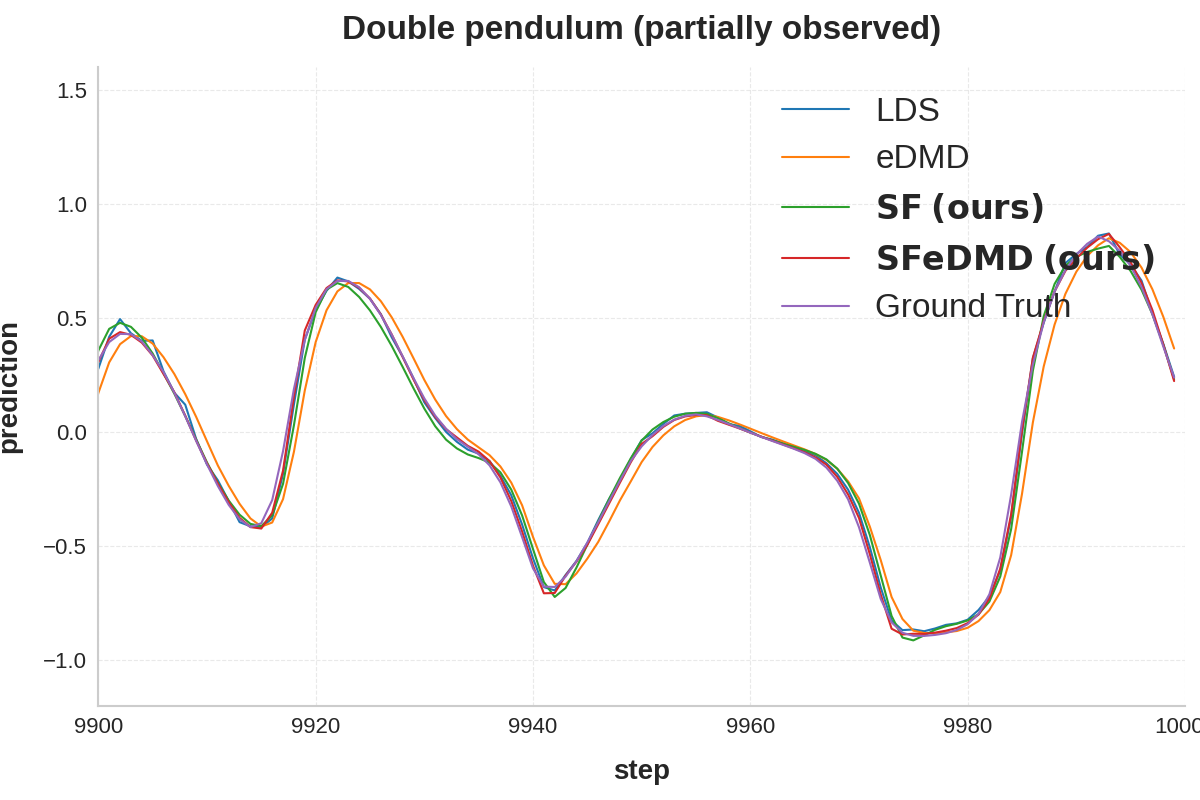}
    \caption{Predictions $\hat{y}_t$ of the four algorithms along with the ground truth $y_t$ plotted as a function of $t$ for the fully and partially observed double pendulum, respectively, on a single random seed. We zoom in on the last 100 steps of the trajectory. For the full observation case, since we cannot plot the full state we choose to plot the sine of the angle between the two arms of the pendulum, which is a representative coordinate for the task of next-state prediction. 
    }
\label{fig:pendulum_preds}
\end{figure}

To add a bit more clarity to the behaviors of these algorithms, we plot in Figure \ref{fig:pendulum_preds} the actual predictions $\hat{y}_t$ made during training at the end of a typical trajectory, along with the ground truth signal $y_t$. We see in the full observation setting that the pendulum is swinging irregularly in a way that the models can predict with reasonable accuracy (except for \verb|LDS|, which suffers from unstable optimization). In the partial observation setting the algorithms all do quite well, but we do get to see some differences between the algorithms -- \verb|eDMD| is unable to be very precise due to the partial observation setting and appears to be predicting $\hat{y}_{t+1} = y_t$, whereas \verb|SFeDMD| produces very precise predictions.

\subsubsection{Langevin Dynamics}
\label{subsection:langevin}
As a final autonomous nonlinear system, we will consider stochastic gradient flow/Gibbs energy minimization/Langevin dynamics. For a clear and in-depth description of this dynamical system from a stochastic differential equation (SDE) perspective, see Section 1 of the textbook \cite{chewi2025logconcave}. Given a differentiable loss/energy/potential function $V: \R^{d_\mathcal{X}} \to \R$, one can define a dynamical system according to the SDE
$$dX_t = -\nabla V(X_t) dt + \sqrt{2}dW_t,\quad X_0 = x_0,$$
which produces a random sequence of states $X_1, \ldots, X_T$ (we use capital letters here to denote random variables). This is very common in the physical sciences and more recently in machine learning, and can be interpreted through many different perspectives:
\begin{itemize}
    \item A first perspective is to interpret this as a \textbf{gradient descent}: we can imagine a discretized version via\footnote{This simple integrator with $\eta = 0.01$ is how we will numerically simulate SDEs for our experiments.} $$X_{t+1} = X_t - \eta \nabla V(X_t) + \sqrt{2\eta}w_t,$$ where $\eta$ is a small step size and $w_t \sim \mathcal{N}(0, 1)$ is an independent noise term that can be viewed as a stochastic gradient. In this way, the states correspond to the iterates of a first-order optimization algorithm.
    \item Under very mild assumptions, the distribution of $X_T$ for very large $T$ will approach the probability distribution $\pi$ on $\R^{d_\mathcal{X}}$ with (unnormalized) density $e^{-V}$. A particular trajectory can be viewed as a way to transform $x_0$ into a sample from $\pi \propto e^{-V}$, which yields a \textbf{sampling} perspective.
    \item Langevin diffusions arise naturally in thermodynamics and statistical physics, in which case $V$ is the energy of a particle/configuration -- it is known that the equilibrium distribution (at least at temperature 1) will be the stationary distribution $\pi \propto e^{-V}$, which is sometimes also called the Gibbs distribution. The evolution of the physical system will follow these trajectories, yielding a \textbf{statistical physics} perspective.
\end{itemize}
In any case, it is clear that prediction of such systems is a very useful task. The main difficulty is this: it can take very long for the system to equilibriate, and the transient/nonequilibrium dynamics are often the most useful (predicting optimization trajectories becomes useless if we wait until convergence). For some intuition, the Koopman operator\footnote{Technically one would define the Koopman operator as a family $K_t$ sending $h(x) \to \mathbb{E}[h(X_t) \mid X_0 = x]$, and the derivative of $K_t$ is the generator of the SDE. The main story is the same -- approximating the nonlinear dynamics by infinite-dimensional linear ones.} may have multiple marginally stable modes, in which case there is slow convergence (i.e. no strong mixing), and in such cases prediction is difficult. Under extra assumptions (strong convexity of $V$, a Poincare inequality for $\pi$, etc.) we would have exponentially-fast convergence mixing, but the general case is most interesting.

A crucial property of such SDEs is that they satisfy a detailed balance condition, and so the Koopman operator is a self-adjoint operator on $L^2(\pi)$ (see Remark 1 of \cite{kostic2022learningdynamicalsystemskoopman}). In our language, this implies that \textbf{there exists a well-conditioned high-dimensional symmetric LDS approximating the dynamics} -- this is very good news for us, since it implies a small $Q_\star$ and so strong performance of Algorithm \ref{alg:clsf_regressionmain} due to Theorem \ref{theorem:main_nonlinear}. 

To make the prediction problem difficult, we consider a potential on $\R^{d_\mathcal{X}}$ with exponentially-many asymmetric wells along each coordinate and a nontrivial coupling between coordinates, given by
$$V(x) = \sum_{j=1}^{d_\mathcal{X}} \left(0.05 \cdot x_j^4 - x_j^2 + 0.1 \cdot x_j\right) - 0.2 \cdot \sum_{1 \leq i < j \leq d_{\mathcal{X}}}x_i^2 x_j^2.$$

\begin{figure}[!htb]
    \centering
    \includegraphics[width=0.35\linewidth]{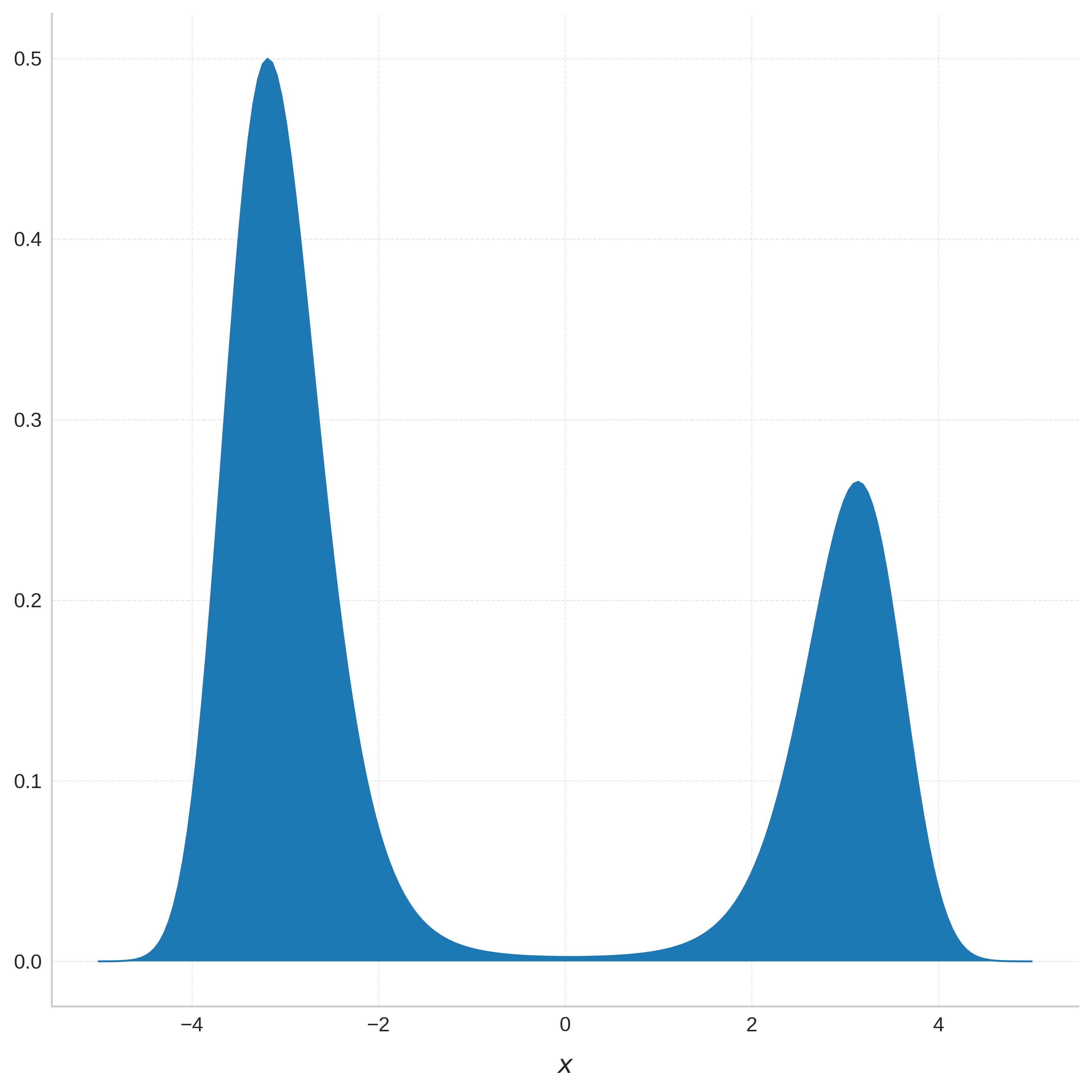}
    \includegraphics[width=0.35\linewidth]{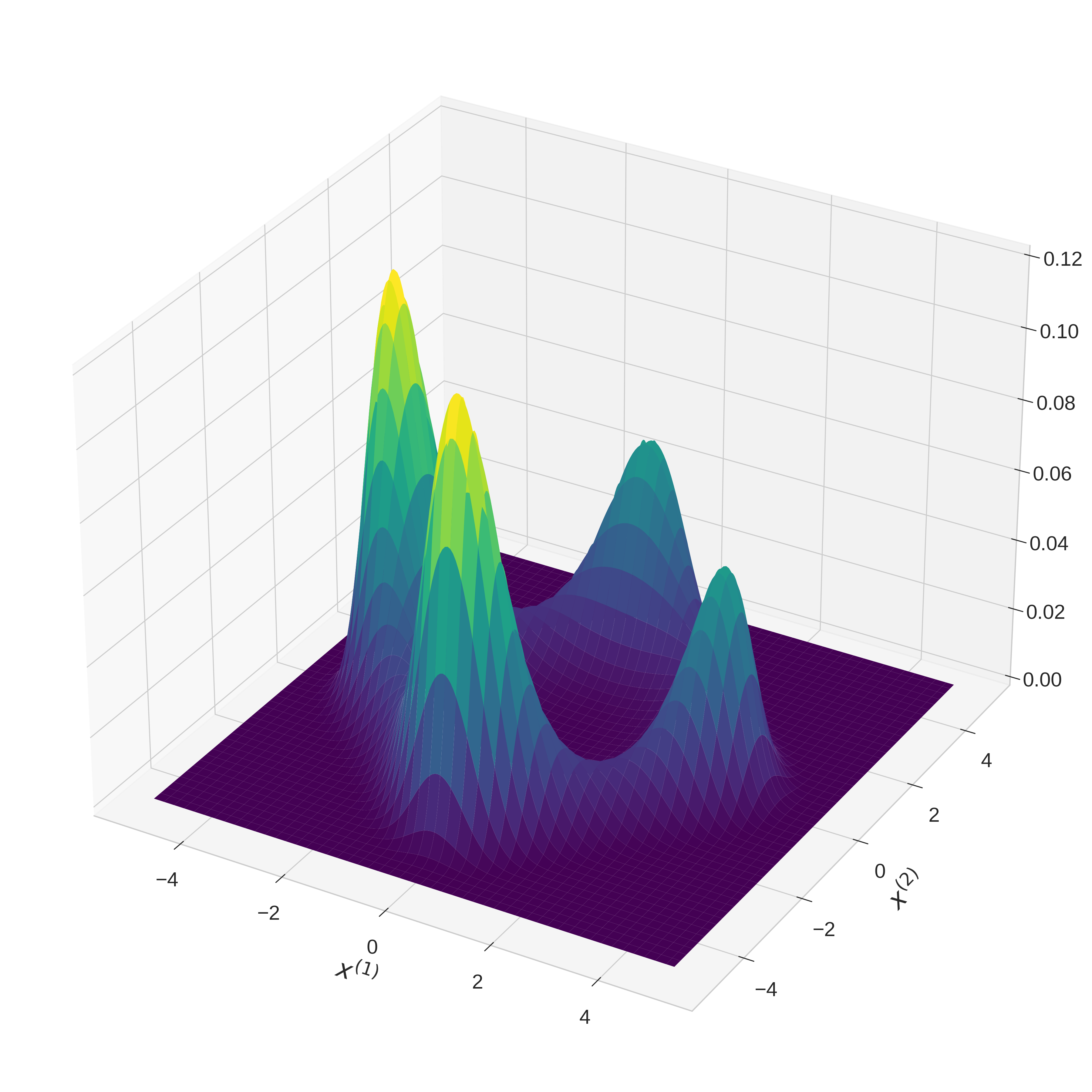}
    \caption{Densities of the stationary distribution $\pi$ corresponding to the chosen potential $V$ for $d_\mathcal{X}$ equal to 1 and 2, respectively. We see a number of asymmetric wells that is exponential in the dimension.
    }
\label{fig:langevinpotential}
\end{figure}

For convenience, we plot the density of the stationary distribution $\pi \propto e^{-V}$ in one and two dimensions in Figure \ref{fig:langevinpotential}. In Figure \ref{fig:langevin_losses} we plot the training losses for the $d_\mathcal{X} = 64$ instantiation of the problem, averaged over random seeds and smoothed -- we find that the \verb|LDS| method consistently fails in the same way, suggesting some obstacle for learning an observer system via gradient descent on this task. By contrast, the other methods do well, and in this setting we find that the addition of the nonlinear dictionary offers no help. 
The minimal loss value reached is consistent with the size of the noise at each step (which is typically of size 0.02), agreeing with the lower bound of Theorem \ref{theorem:lowerbound}. We anticipate more interesting behaviors for more complicated Langevin dynamics, experimentation on which we leave for future work.

\begin{figure}[H]
    \centering
    \includegraphics[width=0.45\linewidth]{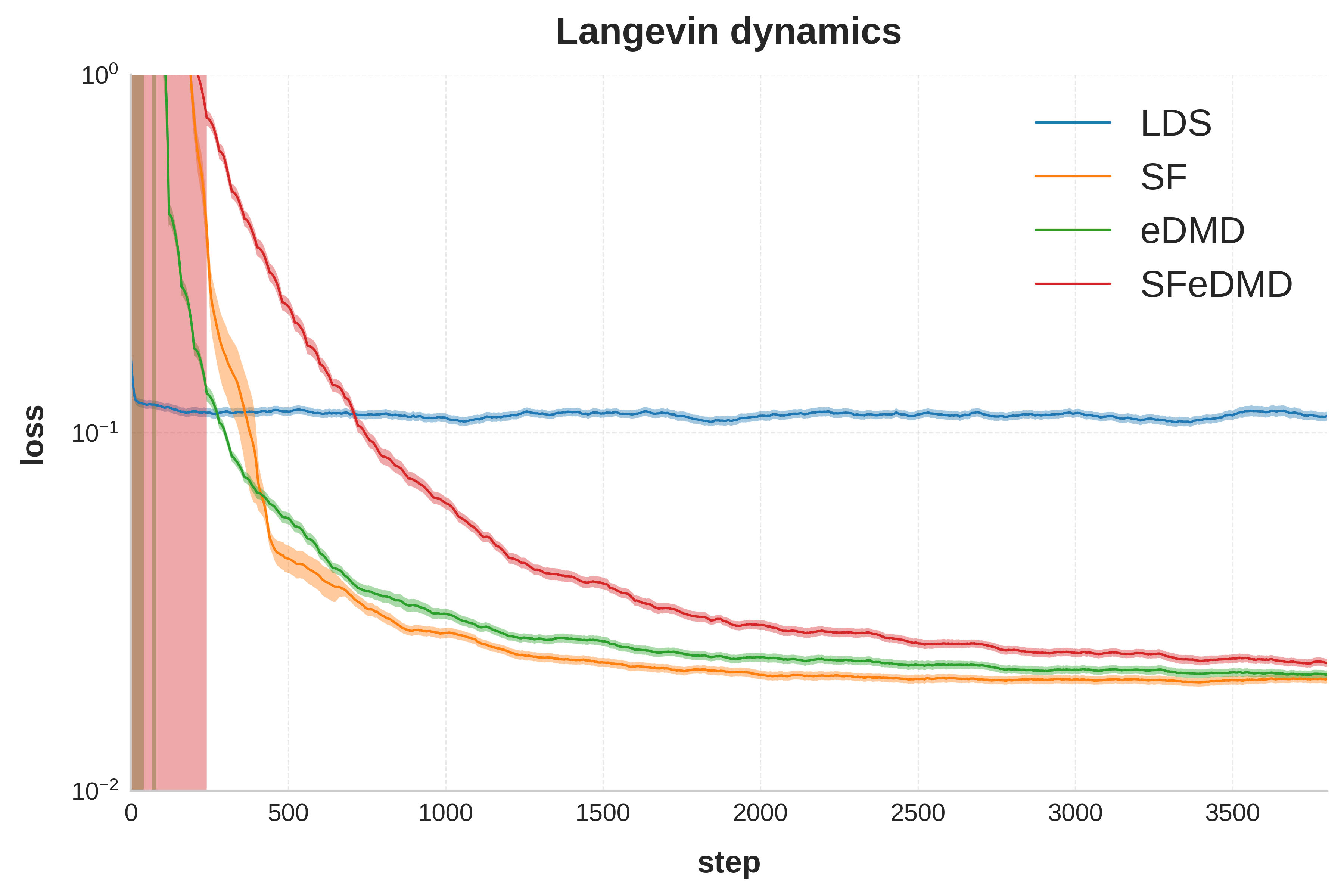}

    \caption{Instantaneous losses $\norm{\hat{y}_t - y_t}^2$ plotted on log scale as a function of $t$ for the Langevin dynamics, averaged over 12 random seeds with a smoothing filter of length 200.
    }
\label{fig:langevin_losses}
\end{figure}


\subsection{Numerical Tests}
\label{sec:numerics}

Thus far, we have demonstrated the behaviors of Algorithm \ref{alg:clsf_regressionmain} on various nonlinear systems and compared to some simple baselines; we saw that spectral filtering over observations seems to succeed without much issue, and without the very slow optimization behaviors that were present in the permutation system (recall Figure \ref{fig:experiments_lds}). There is the intuition developed in the previous section that physical systems ought to have (approximately) real Koopman spectrum and therefore should be amenable to learning via spectral filtering. However, it would be nice to probe this question numerically, which we will quickly do in this section.

A classical approach to estimating Koopman eigenvalues of a nonlinear system is to run the eDMD algorithm as we have been doing, in which one learns a linear dynamics $A$ on a lifted space. By inspecting the eigenvalues of $A$, we gain understanding of the structure of (a finite-dimensional approximation to) the Koopman operator\footnote{Technically, this spectral relationship is only proven under the assumption that the dictionary of nonlinear observables spans an invariant subspace of the Koopman operator. This assumption is required for most eDMD-based methods to work at all, and inspecting the eigenvalues of the eDMD model is commonplace in practice regardless. See Section 5.1 of \cite{brunton2021modernkoopmantheorydynamical}.}. We take the \verb|eDMD| models\footnote{One could attempt to do this with the observer systems learned directly via gradient descent in the \verb|LDS| method. These appear to be placed generically in the unit disk, though this result is inconsistent and brittle, varying significantly across seeds. By contrast, \verb|eDMD| learns the same system eigenvalues across seeds, which are the ones we report.} trained on the full observation versions of our three nonlinear systems and plot the learned eigenvalues in Figure \ref{fig:edmdeigvals}. As we can see, on these marginally stable physical systems the Koopman eigenvalues appear as expected: approximately real and near 1. 

\begin{figure}[ht] 
   \begin{subfigure}{0.32\textwidth}
       \includegraphics[width=\linewidth]{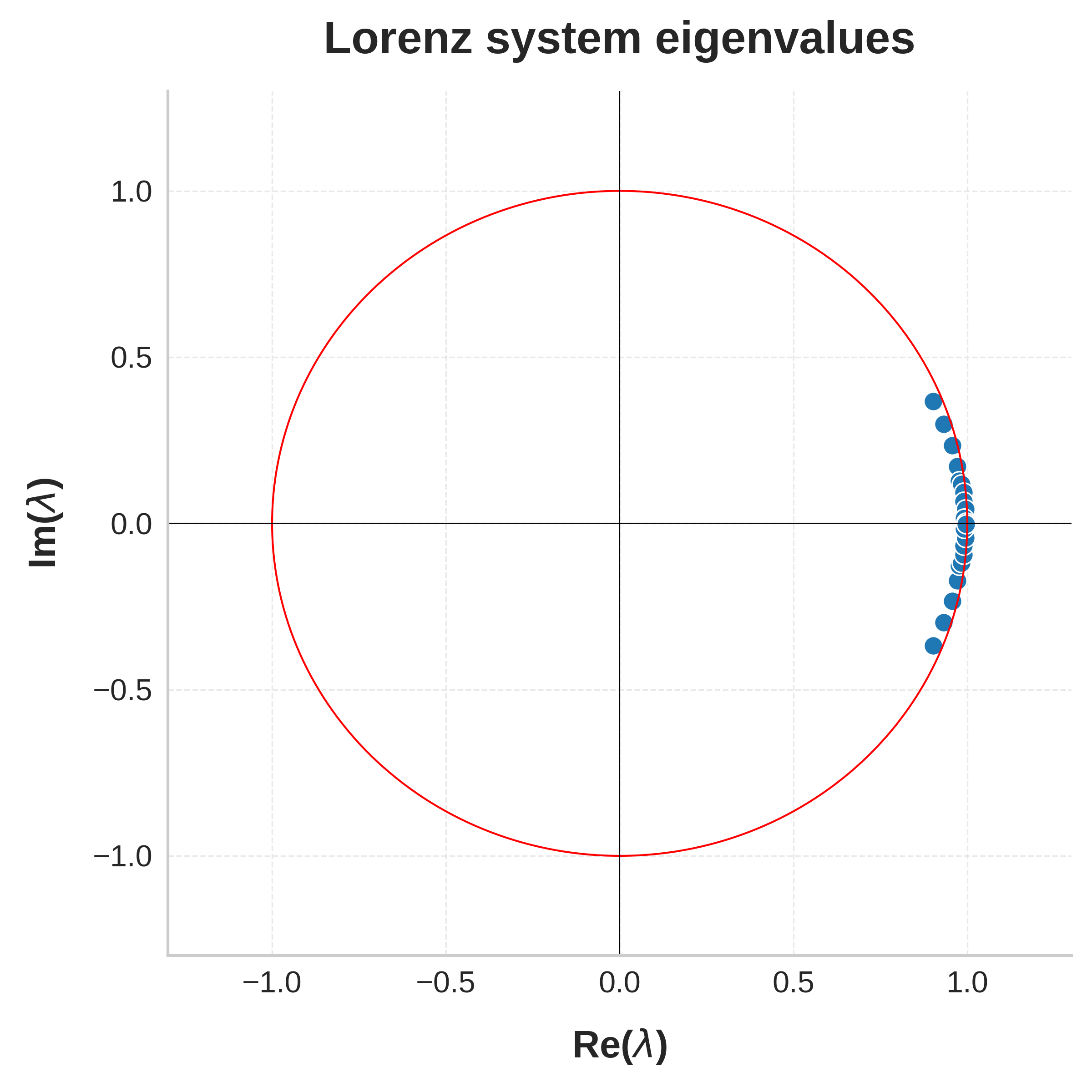}
   \end{subfigure}
\hfill 
   \begin{subfigure}{0.32\textwidth}
       \includegraphics[width=\linewidth]{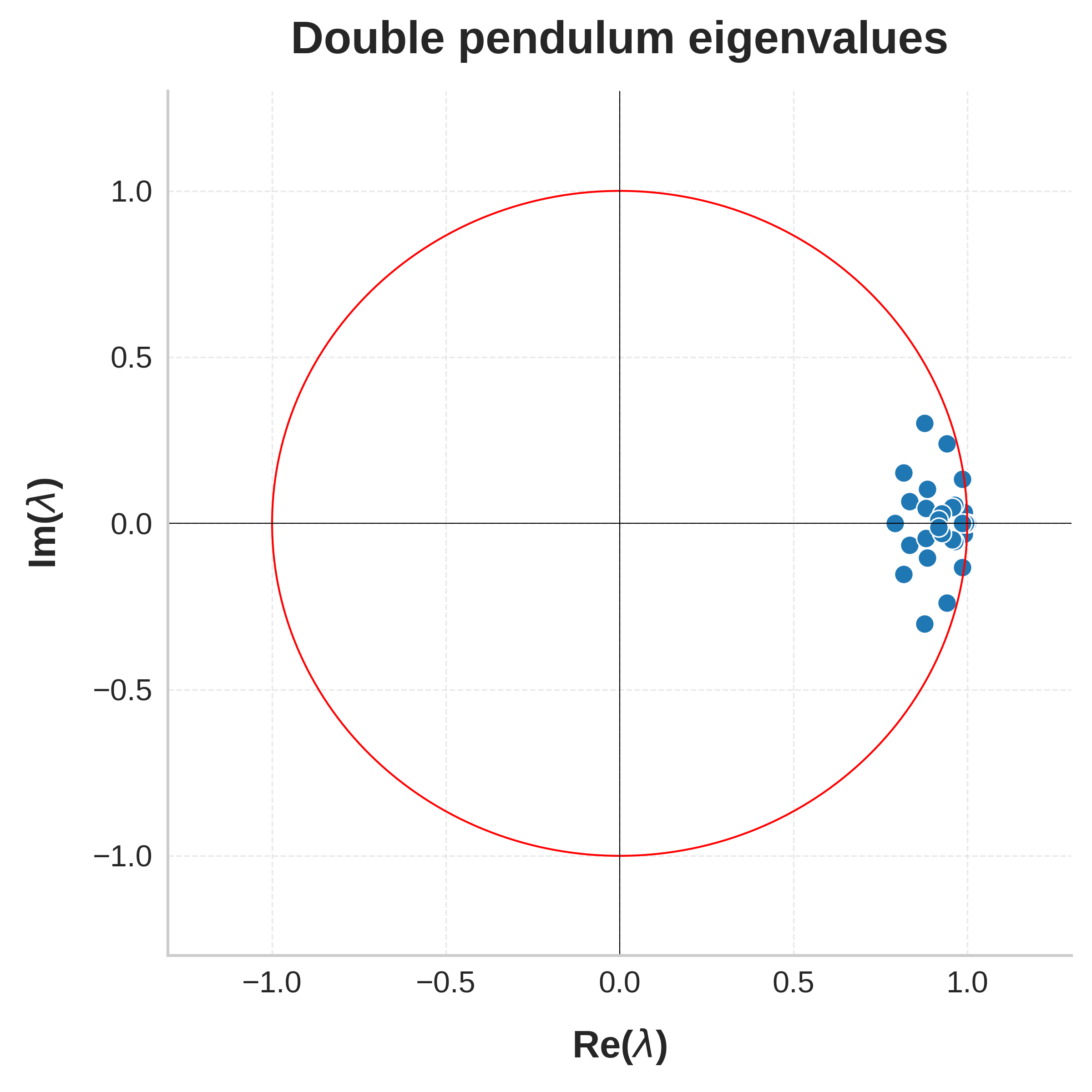}
   \end{subfigure}
\hfill 
   \begin{subfigure}{0.32\textwidth}
       \includegraphics[width=\linewidth]{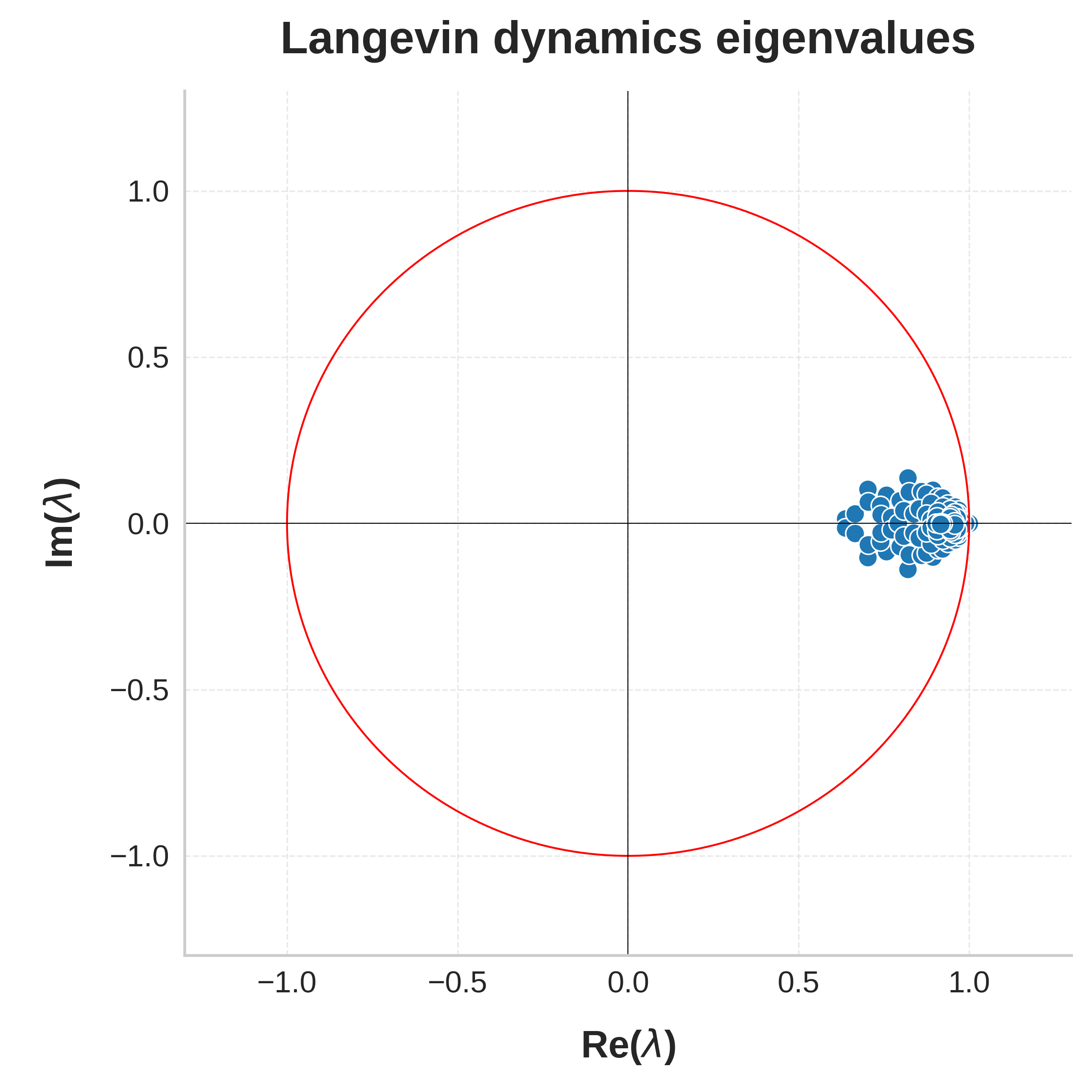}
   \end{subfigure}

       \caption{Eigenvalues of the lifted linear dynamics learned by the eDMD algorithm on the Lorenz system, double pendulum, and Langevin dynamics, respectively. The $x$ and $y$ axes display real and complex components, respectively, and we draw the unit circle in red for the reader's convenience.
}
   \label{fig:edmdeigvals}
\end{figure}



In addition, this numerical experiment allows us to compute an estimate for the spectral gaps: if we sort $1 = |\lambda_1| > |\lambda_2| \geq \ldots$, the spectral gap is given by $\rho = 1 - |\lambda_2|$. We give the empirical spectral gaps for these systems below:
$$\rho_{\text{Lorenz}} = 0.00168,\quad \rho_{\text{pendulum}} = 0.00477, \quad \rho_{\text{Langevin}} = 0.000694.$$
The mixing time of a system scales as $\frac{1}{\rho}$, and our regret bound of Theorem \ref{theorem:main_nonlinear} implies that after $\frac{1}{\rho^2}$ steps we get vanishing prediction error. For the above systems, the mixing time is around 1000 steps and the learning horizon is on the order of hundreds of thousands or millions of steps; however, we see from our experiments that Algorithm \ref{alg:clsf_regressionmain}'s learning begins immediately. In other words, \textit{spectral filtering appears able to predict transient dynamics}. This indicates that the dependence of our analysis on a $\frac{1}{\sqrt{T}}$ spectral gap is not sharp, and leaves room for theoretical improvement.


\section{Conclusions and Future Work}

Deviating from the extensive literature on learning in dynamical systems, we have introduced a new approach based on improper learning and spectral filtering. We demonstrated how we can use classical tools from control theory, such as the Koopman operator and the Luenberger program for an observer system, to prove the {\it existence} of a real-diagonalizable linear dynamical system that is equivalent to the given nonlinear one in terms of observations. We then used the power of spectral filtering over this real-diagonalizable system to learn it provably and efficiently via convex optimization. 
The new methodology is accompanied by experiments on both linear and nonlinear systems validating the predicted efficiency, robustness, and accuracy, and showing that the regret scaling follows the control-theoretic condition number $Q_\star$.
 
Unsurprisingly, augmenting the observations nonlinearly improves the performance of an otherwise linear predictor. 
We suspect that much of the two-part eDMD paradigm (fixing/learning a nonlinear dictionary of observables and learning the lifted dynamics) will benefit from the use of spectral filtering -- we propose no changes to the lifting part, but only to learn to predict lifted dynamics via spectral filtering instead of direct LDS parametrization.
Deep neural networks with alternating MLP and SSM layers are currently a state of the art for deep learning of dynamical systems, and they may be viewed as jointly learning a nonlinear lifting and (linear) dynamics in the lifted space\footnote{This is also a current paradigm for robotics control and planning tasks, in which one learns an embedding from e.g. pixel space to latent space together with a "world model" that tracks these latent dynamics.}. 
The success of the \verb|SFeDMD| algorithm over \verb|eDMD| strongly motivates the use of STU \cite{liu2024flash} over direct SSM layers in such arrangements, since we see that spectral filtering is able to make better use of a nonlinear lifting for the task of sequence prediction, with improvement in terms of accuracy, number of parameters, optimization robustness, and no decrease in efficiency\footnote{During inference/rollout, one can use fast online convolution methods such as \cite{agarwal2024futurefillfastgenerationconvolutional} or LDS distillation methods such as \cite{shah2025spectraldsprovabledistillationlinear} to achieve the same generation efficiency as directly parameterizing an LDS.}. We leave large-scale experimentation on more complex tasks to future work.

On the theoretical side, the most natural next step is to remove the need for a $\frac{1}{\sqrt{T}}$ spectral gap in the observer system (since we don't see its effect much in practice). The work is already done in terms of spectral filtering (we can use the $\rho = 0$ case of Lemma \ref{lemma:spectralfiltering}), and the only thing to be done is to construct a high-dimensional LDS which approximates the original nonlinear system for which both $R_C$ and $R_{x_0}$ can be small. We anticipate this to be achievable using the more standard Koopman operator methodology. In addition, one could adapt our analysis to tolerate adversarial disturbances to the nonlinear system by translating them to additive disturbances to the original LDS and then applying our Theorem \ref{theorem:main_linear}(c) -- with a Koopman eigenlifting, this would require control on the Lipschitz constants of Koopman eigenfunctions, which can become technically involved. We leave these extensions to future work. A more significant extension would be to allow for the presence of open-loop inputs in the nonlinear system -- both the Koopman formalism and our discretized Markov chain are not well-adapted to handle non-control-affine systems, and we consider this a valuable direction for future work.

\bibliography{main.bib}
\bibliographystyle{plain}

\appendix
\section{Proof of Lemma \ref{lemma:spectralfiltering}}   \label{appendix:SFlemma}

In this section we provide a proof of Lemma \ref{lemma:spectralfiltering}, which follows the proof techniques from \cite{hazan2017learning}.
\begin{proof}
We measure regret against $\mathrm{LDS}(\rho, \gamma, \kappa, R_{B}, R_C, R_{x_0})$, which we recall to be the class of LDS predictors with spectrum $\sigma(A) \subset [0, 1-\rho] \cup \{1\} \cup \mathbb{D}_{1-\gamma}$, spectral condition number $\kappa_{\mathrm{diag}}(A) \leq \kappa$, and the bounds $\|B\|_F \leq R_{B}$, $\|C\|_F \leq R_C$, and $\|x_0\| \leq R_{x_0}$. For notation, we let $\Theta = (A, B, C, x_0)$ denote the LDS predictor and $y_t^\Theta$ its predictions, and $A = HDH^{-1}$ with $D = \operatorname{diag}(\alpha_1, \ldots, \alpha_{d_h})$ the eigenvalues of $A$.

We first prove an approximation result, i.e. that there exist parameters $J, M, P, N$ that match the predictions of any given LDS.
As in Section 2.2 of \cite{hazan2017learning}, our predictor $\hat{y}_{t+1}$ will add a prediction of the derivative of the impulse response $y_{t+1}^\Theta - y_{t}^\Theta$ to $y_{t}$. The derivative is
\begin{align*}
    y_t^\Theta - y_{t-1}^{\Theta}  & = CBu_{t-1} + \sum_{i=1}^{t} C(A^{i} - A^{i-1})Bu_{t-i-1} + C(A^t - A^{t-1})x_0 \\
    & = CBu_{t-1} + \sum_{i=1}^{t} CH(D^{i} - D^{i-1})H^{-1}Bu_{t-i-1} + C(A^t - A^{t-1})x_0\,.
\end{align*}
We will order the eigenvalues so that there exists $N \in [d_h]$ such that $\alpha_j \in [0, 1]$ for all $j \leq N$ and $|\alpha_j| \leq 1 - \gamma$ for all $j > N$. Defining $\mu_\alpha = (\alpha - 1) [1, \alpha, \ldots, \alpha^{T-2}] \in \R^{T-1}$, we have
\begin{align*}
    y_t^\Theta - y_{t-1}^{\Theta} 
    & = CBu_{t-1} + \sum_{i=1}^{t} \sum_{j=1}^{d_h} (\alpha_j^i - \alpha_j^{i-1}) CHe_je_j^\top H^{-1}Bu_{t-i-1} + C(A^t - A^{t-1})x_0 \\
    &= CBu_{t-1} + \sum_{j=1}^{d_h} CHe_je_j^\top H^{-1}B \left(\sum_{i=1}^t (\alpha_j - 1) \alpha_j^{i-1} u_{t-i-1}\right) + C(A^t - A^{t-1}) x_0 \\ &= CBu_{t-1} + \sum_{j=1}^{d_h} CHe_je_j^\top H^{-1} B\mu_{\alpha_j}^\top u_{t-2:t-T} + C(A^t - A^{t-1}) x_0,
\end{align*}
where $u_{t-2:t-T} \in \R^{(T-1) \times d_{\textrm{in}}}$ and so $\mu_{\alpha_j}^\top u_{t-2:t-T} \in \R^{d_{\textrm{in}}}$.
We split the sum over hidden dimension into two terms: $j \leq N$ will be handled via spectral filtering and $j > N$ via regression. Letting $\{\sigma_i, \phi_i\}_{i=1}^{T-1}$ be the eigenvalues and eigenvectors of the Hankel matrix from Algorithm \ref{alg:clsf_regressionmain} (see Section 3 of \cite{hazan2017learning}), we have
\begin{align*}
    \sum_{j=1}^{N} CHe_je_j^\top H^{-1} \mu_{\alpha_j}^\top u_{t-2:t-T} &=  \sum_{j=1}^{N} CHe_je_j^\top H^{-1} B\mu_{\alpha_j}^\top  \left(\sum_{i=1}^{T-1} \phi_i\phi_i^\top\right) u_{t-2:t-T}
    \\ &= \sum_{i=1}^{T-1}\sum_{j=1}^{N} CHe_je_j^\top H^{-1} B(\mu_{\alpha_j}^\top \phi_i) (\phi_i^\top u_{t-2:t-T})
\end{align*}
Define the optimal parameters $J_1 = CB$, $J_i = CH \left(\sum_{j=N+1}^{d_h} (\alpha_j^i - \alpha_j^{i-1}) e_je_j^\top H^{-1}B\right)$ for $i >1$, $M_i = \sigma_i^{-1/4} CH\left(\sum_{j=1}^N \mu_{\alpha_j}^\top \phi_i e_je_j^\top \right) H^{-1}B$ for $i \in [T-1]$, and $P_1 = 1$, $P_i = 0$ for $i >1$, and $N \equiv 0$. 
Then, we have the approximation error\footnote{Technically, we are considering the class of LDS predictors which add the derivative of the impulse response to the ground truth $y_{t-1}$. Otherwise, we would suffer an additional $y_{t-1} - y_{t-1}^\Theta$ approximation error.}
$$y^\Theta_t - \hat{y}_t(J, M, P, N) = \sum_{i=m+1}^t J_i u_{t-i} + \sum_{i=h+1}^{T-1} \sigma_i^{1/4}M_i \langle \phi_i, u_{t-2:t-T}\rangle + C(A^t - A^{t-1})x_0.$$
We bound
\begin{align*}\|J_i\| &\leq \left(\sup_{\alpha \in \mathbb{D}_{1-\gamma}} |\alpha^i - \alpha^{i-1}|\right) \cdot \sum_{j=N+1}^{d_h} \|CHe_j\| \cdot \|e_j^\top H^{-1}B\| \\&
\leq 2 (1-\gamma)^{i-1} \cdot \|CH\|_F \cdot \|H^{-1}B\|_F \\
&\leq 2(1-\gamma)^{i-1} R_B R_C \kappa,
\end{align*}
where in the second line we applied Cauchy-Schwartz and in the third we used $\|S_1S_2\|_F \leq \min(\|S_1\|\|S_2\|_F,\|S_1\|_F\|S_2\|)$. Similarly, 
\begin{align*}
    \|M_i\| &\leq \sigma_i^{-1/4} \cdot \left(\sup_{\alpha \in [0, 1]} |\mu_{\alpha}^\top \phi_i|\right) \cdot \sum_{j=1}^N \|CHe_j\| \cdot \|e_j^\top H^{-1}B\|\\
    &\leq 6^{1/4} R_BR_C\kappa,
\end{align*}
where the second line uses Lemma E.4 of \cite{hazan2017learning}. Since $\|u_{t-i}\| \leq R$ and $|\langle \phi_i, u_{t-2}:t-T\rangle| \leq R \sqrt{T}$ by Cauchy-Schwartz, it remains to bound $\norm{C(A^t - A^{t-1})x_0}$. If $A$ has all eigenvalues 1 then this term is zero, and so suppose that the largest eigenvalue of $A$ not equal to $1$ has magnitude $1-\rho$. Then,
\begin{align*}
    \norm{C(A^t - A^{t-1})x_0} \leq \sum_{j=1}^{d_h} |\alpha_j^t - \alpha_j^{t-1}| \cdot \|CHe_j\| \cdot \|e_j^\top H^{-1} x_0\|.
\end{align*}
For $j$ such that $\alpha_j = 1$ the summand is 0, and for all other $j$ we know $|\alpha_j^t - \alpha_j^{t-1}| \leq 2 \cdot (1-\rho)^{t-1}$.
Furthermore, again by Cauchy-Schwartz we have $\sum_{j=1}^{d_h} \norm{CHe_j} \cdot \norm{e_j^\top H^{-1}x_0} \leq \norm{CH}_F \cdot \norm{H^{-1}x_0} \leq R_C \kappa R_{x_0}$. Combining these ingredients,
\begin{align*}
    \norm{y_t^\Theta - \hat{y}_t(J, M, P, N)} &\leq 
    2R_B R_C\kappa R \cdot \sum_{i=m+1}^T (1-\gamma)^{i-1} + 6^{1/4}R_B R_C\kappa R \sqrt{T} \cdot \sum_{i=h+1}^{T-1} \sigma_i^{1/4} + 2R_C\kappa R_{x_0} \cdot (1-\rho)^{t-1} 
\end{align*}
We note that $(1-\gamma)^n \leq e^{-\gamma n}$ and that $\sigma_i^{1/4} \leq K e^{-i/(2\log T)}$ for an absolute constant $K < 10^6$ by Lemma E.2 of \cite{hazan2017learning}. Since $\sum_{i > n}e^{-ai} \leq \frac{e^{-an}}{a}$, we get the approximation result
\begin{align*}
    \norm{y_t^\Theta - \hat{y}_t(J, M, P, N)} &\lesssim R_BR_C \kappa R \cdot \frac{e^{-\gamma (m-1)}}{\gamma} + R_BR_C \kappa R \log(T) \sqrt{T} \cdot e^{-\frac{h}{2 \log T}} + R_C \kappa R_{x_0} \cdot e^{-\rho (t-1)}
\end{align*}
However, we also know that $ \norm{y_t^\Theta - \hat{y}_t(J, M, P, N)} \leq 2R$, and so if we let $\tau = \max\left(\frac{\log( R_C \kappa R_{x_0} / 2R)}{\rho}, 1\right)$ then
$R_C \kappa R_{x_0} e^{-\rho \tau} \leq 2R$, and therefore
\begin{align*}
    \norm{y_t^\Theta - \hat{y}_t(J, M, P, N)} &\lesssim R_BR_C \kappa R \cdot \frac{e^{-\gamma (m-1)}}{\gamma} + R_BR_C \kappa R \log(T) \sqrt{T} \cdot e^{-\frac{h}{2 \log T}} + R \cdot \begin{cases}
        1 & t \leq \tau \\ 
        e^{-\rho(t-\tau)} & t > \tau
    \end{cases}
\end{align*}
Summing over all $t$ and using $\sum_{t > \tau}e^{-\rho (t-\tau)} \leq \frac{1}{\rho}$ gives
\begin{align*}
    \sum_{t=1}^T\norm{y_t^\Theta - \hat{y}_t(J, M, P, N)} &\lesssim R_BR_C \kappa R T\cdot \frac{e^{-\gamma (m-1)}}{\gamma} + R_BR_C \kappa R \log(T) T^{3/2} \cdot e^{-\frac{h}{2 \log T}} \\ &\quad + \frac{ R \log_+(2R_C \kappa R_{x_0} / R)}{\rho}\;.
\end{align*}
Taking $m = \Theta\left(\frac{1}{\gamma} \log\left(\frac{T}{\gamma}\right)\right)$ and $h = \Theta\left(\log^2T\right)$, we get for any LDS parameterized by $\Theta$ there exist parameters $J_\Theta, M_\Theta, P_\Theta, N_\Theta$ for which
$$\sum_{t=1}^T \norm{y_t^\Theta - \hat{y}_t(J, M, P, N)} \leq O\left(R_B R_C \kappa R \log(T) \sqrt{T} +\frac{ R\log_+(R_C \kappa R_{x_0} / R)}{\rho}\right).$$
If we don't want to assume a spectral gap (i.e. $\rho = 0$), then we can instead use that $\sup_{\alpha \in [0, 1]} |\alpha^t - \alpha^{t-1}| \leq \frac{1}{t-1}$ and $\sup_{\alpha \in \mathbb{D}_{1-\gamma}} |\alpha^t - \alpha^{t-1}| \leq 2(1-\gamma)^{t-1}$ to get
$$\sum_{t=1}^T\norm{C(A^t - A^{t-1})x_0} \leq \|C\| \kappa \cdot \left(\frac{1}{\gamma} + \log T\right).$$
Now, we know that the loss is a convex function of $J, M, P, N$, which means that we can perform optimization via OGD. By our earlier bounds, the diameter of the constraint set is $D = O((m+h) R_B R_C \kappa)$. The gradient of the unsquared $\ell_2$ loss w.r.t. the prediction error is of size 1, and the gradient of the prediction w.r.t. $J$ and $P$ are bounded by $mR$ and $R$, respectively. The gradient w.r.t. $M_i$ is bounded by $\sigma_i^{1/4} \cdot |\langle \phi_j, u_{t-2:t-T}\rangle| \lesssim R \log T$, where we applied Corollary E.6 of \cite{hazan2017learning} and Holder's inequality. In total, we get the gradient bound $G \leq O(mR + hR \log T) \leq O((m+h)R\log T)$. By the OGD regret guarantee (Theorem 3.1 of \cite{hazan2016introduction}), if $\hat{y}_t$ are the iterates of Algorithm \ref{alg:clsf_regressionmain} then
\begin{align*}\sum_{t=1}^T \norm{\hat{y}_t - y_t} - \min_{\Theta} \sum_{t = 1}^T \norm{\hat{y}_t(J_\Theta, M_\Theta, P_\Theta, N_\Theta) - y_t} &\leq O\left((m+h)^2 R_B R_C \kappa R \log(T) \sqrt{T}\right).\end{align*}
\end{proof}
\begin{remark}
    The above proof shows that, if one increases $m$ and $h$ by $\log( R_B R_C \kappa R)$, then the approximation error can be made independent of those constants. By these proof techniques, the only place where one must incur $\Theta(\kappa)$ regret is in the diameter of the constraint set. 
    This dependence can be improved to $\Theta(\log \kappa)$ for the square loss with more sophisiticated online learning algorithms, such as the Azoury-Vovk-Warmuth forecaster \cite{azoury2001relative}, and in certain cases Online Newton Step \cite{hazan2016introduction}. This is implemented in Theorem \ref{theorem:logdependence}.
\end{remark}

\begin{remark}
\label{remark:spectralfilteringproof}
This proof has two main differences from that of Theorem 1 of \cite{hazan2017learning}. Firstly, we prove things for an algorithm which combines regression terms in order to learn systems with spectrum in $[0, 1] \cup \mathbb{D}_{1-\gamma}$; spectral filtering handles the $[0, 1]$ part and regression with $\widetilde{O}\left(\frac{1}{\gamma}\right)$ parameters handles the $\mathbb{D}_{1-\gamma}$ part. The second (and more theoretically interesting) difference is that rather than handling the vanishing initial state using the envelope $\sup_{\alpha \in [0, 1]} |\alpha^t - \alpha^{t-1}| \leq \frac{1}{t}$ to get linear decay, we make use of a spectral gap condition to get exponential decay of the initial state effect. 
\end{remark}

\section{Lower Bounds} \label{sec:lower-bounds}

In this section we give lower bounds for learning sequences generated by linear and nonlinear dynamical systems, which complement our upper bounds and show them to be tight in certain respects. This does not mean they cannot be improved: to the contrary, this investigation motivates further study of the leading order constants and parameters of our bounds.

\subsection{LDS with Noise}

We have shown that spectral filtering with observation feedback is able to learn asymmetric marginally stable LDS's under adversarial disturbances. Our regret guarantee scales with two quantities: the optimal observer complexity $Q_\star$ and the disturbance sizes $ \sum_{t=0}^{T-1} \|w_t\|$. In this section, we investigate the the leading order terms of this result via lower bounds for sequence prediction in terms of these quantities. 
In particular, we prove that any algorithm which is not able to see $w_t$ when predicting $\hat{y}_{t+1}$ suffers a linear cost in terms of the disturbance sizes and ambient dimension up to constant factors:

\begin{theorem}
\label{theorem:lowerbound}
Let $\mA$ be any algorithm that predicts $\hat{y}_{t+1}$ using $u_1, \ldots, u_t$, $y_1, \ldots, y_t$, and $w_1, \ldots, w_{t-1}$. Then, there exists a problem instance $(A, B, C, x_0)$ and sequences $u_1, \ldots, u_T$ and $w_1, \ldots, w_T$ satisfying Assumption \ref{definition:linear_realizable} for which
$$\sum_{t = 1}^T \|\hat{y}_t^\mA - y_t\| \geq \Omega\left( d  \sum_{t=0}^{T-1} \| C w_t\| \right) $$
\end{theorem}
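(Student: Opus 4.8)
The plan is to exhibit a single hard instance in which one disturbance is "read out" by the observation map through $d$ distinct, unpredictable coefficients before any of them repeats, so that an algorithm ignorant of the system is forced to guess $d$ independent quantities. Concretely, I would take the hidden dimension to be $d$, let $A \in \mathbb{R}^{d\times d}$ be the cyclic permutation (shift) matrix with $Ae_i = e_{i+1}$ (indices mod $d$), set $B = 0$, $u_t \equiv 0$, $x_0 = 0$, and inject a single impulse $w_0 = \alpha e_1$ with $w_t = 0$ for $t \ge 1$. For the observation I would take $C = c^\top$ with $c \in \{\pm 1\}^d$ a sign vector fixed by the adversary. Since $A$ is a permutation matrix it is unitary, hence normal with all eigenvalues on the unit circle, so Assumption \ref{definition:linear_realizable}(i) holds with $\kappa_{\textrm{diag}}(A)=1$; the remaining boundedness conditions hold with $R_B = 0$, $R_C = \sqrt d$, $R_{x_0}=0$, and $R = W = \alpha$, while observability holds for generic sign vectors (the observability matrix is the circulant generated by $c$, invertible unless some discrete Fourier coefficient of $c$ vanishes, a degenerate set we may avoid).

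A direct computation gives $x_t = A^{t-1} w_0 = \alpha\, e_{((t-1)\bmod d)+1}$ and hence $y_t = \alpha\, c_{((t-1)\bmod d)+1}$; in particular $\|Cw_0\| = |c_1|\,\alpha = \alpha$, so $\sum_{t}\|Cw_t\| = \alpha$. The key point is that during the first pass $t = 1,\dots,d$ the impulse lights up each coordinate exactly once, so the observations $y_1,\dots,y_d$ reveal the signs $c_1,\dots,c_d$ one at a time. When the algorithm predicts $\hat y_t$ it has access only to $y_1,\dots,y_{t-1}$ (which depend on $c_1,\dots,c_{t-1}$), the dynamics $A$, and the disturbance history, none of which carries information about the not-yet-revealed sign $c_t$, precisely because the only unknown part of the system is $C$ and the shift exposes a fresh coordinate of $C$ at each step.

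I would then convert this into an error bound in two equivalent ways. For a deterministic algorithm, let the adversary reveal the signs online, setting $c_t = -\sgn(\hat y_t)$ (and $c_t = 1$ if $\hat y_t = 0$); this is legitimate because $\hat y_t$ is a fixed function of $c_1,\dots,c_{t-1}$, so the signs are defined inductively and the resulting fixed $(A,B,C,x_0)$ is a valid instance. Then $|\hat y_t - y_t| = |\hat y_t| + \alpha \ge \alpha$ for every $t \le d$, giving $\sum_{t=1}^T\|\hat y_t - y_t\| \ge d\alpha = d\sum_t\|Cw_t\|$. For a randomized algorithm I would instead draw $c \in \{\pm1\}^d$ uniformly and use that $c_t$ is independent of $(y_1,\dots,y_{t-1})$, so $\E|\hat y_t - \alpha c_t| \ge \tfrac12|\hat y_t-\alpha| + \tfrac12|\hat y_t+\alpha| \ge \alpha$; summing and passing to a favorable realization of $c$ yields the same bound. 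Either way the horizon must satisfy $T \ge d$ so that all $d$ coordinates are probed, which is the natural regime.

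The step I expect to be the main obstacle is making rigorous the ignorance claim of the second paragraph: one must argue that giving the algorithm the entire disturbance history $w_{1:t-1}$ (as the theorem generously allows) conveys no information about the unread Markov parameters $c_t,\dots,c_d$. This is exactly why the randomness is placed in $C$ rather than in $w$ or $x_0$: the disturbance is known, but the map from the (known) hidden trajectory to observations is not, and the shift structure guarantees each unknown coefficient is queried exactly once before repeating. A secondary technicality is simultaneously enforcing observability together with the boundedness constraints of Assumption \ref{definition:linear_realizable} on the adversarially- or randomly-chosen $C$, which I would dispatch by noting that non-observable sign patterns form a negligible, avoidable set.
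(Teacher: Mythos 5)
Your construction is correct in its essentials but takes a genuinely different route from the paper's. The paper also uses the cyclic permutation system with $B=0$ and a coordinate readout, but it places the adversarial randomness in the \emph{disturbances}: the unseen $w_t$ (recall the algorithm only sees $w_{1:t-1}$ when predicting $\hat y_{t+1}$) carries a fresh random sign that flips the upcoming observation between $\pm d$, so even an algorithm that knows $(A,B,C,x_0)$ exactly must suffer expected error $\Omega(d)$ per revealed coordinate. You instead place the randomness in the observation map $C=c^\top$, make the single disturbance $w_0=\alpha e_1$ fully known, and exploit the fact that the shift exposes one fresh unknown sign $c_t$ per step. Both constructions satisfy the literal statement (the quantifier order is ``for all $\mA$ there exists an instance,'' so $\mA$ may not depend on the system), and your accounting $d\alpha$ versus $\sum_t\|Cw_t\|=\alpha$ matches the claimed ratio. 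The trade-off: your argument is cleaner and more self-contained, but it proves hardness of \emph{identifying $C$} rather than hardness of \emph{predicting unseen noise} --- if the algorithm were handed the system matrices your instance collapses, whereas the paper's does not. Since the theorem is invoked in the main text precisely to justify the claim $\|\hat y_T - y_T\|\gtrsim\|w\|$ for the noise-robustness discussion, the paper's version carries the intended meaning and yours, while formally sufficient, does not.

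One concrete gap: your dismissal of observability as a ``negligible, avoidable set'' of sign patterns is false as stated. The observability matrix of your instance is the circulant generated by $c$, which is singular whenever some DFT coefficient of $c$ vanishes; for $c\in\{\pm1\}^d$ this is not a negligible event --- the all-ones vector is never observable, and for $d=2$ \emph{no} sign vector is observable. Worse, your adaptive adversary $c_t=-\sign(\hat y_t)$ can be forced into exactly such a pattern (e.g.\ by an algorithm that always predicts a negative value). This is fixable: either draw the entries of $c$ from $\{1,3\}$ (which preserves a per-step gap of $2\alpha$ and only costs a constant in the final bound), or perturb the chosen $c$ by an arbitrarily small generic vector, note that the cumulative loss is continuous in $c$, and pass to the limit. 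You should state one of these fixes explicitly rather than appeal to genericity of a discrete set.
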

\begin{proof}
Consider the following linear dynamical system. We let $A$ be the permutation matrix over $d$ elements, $B$ is zero, and $C$ is the first standard basis vector, as given by
$$ A_{d}^{\mathrm{perm}} = 
 \begin{bmatrix}
 0 & 0 & \cdots & 0 & 1 \\
 1 & 0 & \cdots & 0 & 0 \\
 0 & 1 & \cdots & 0 & 0 \\
 \vdots & \vdots & \ddots & \vdots & \vdots \\
 0 & 0 & \cdots & 0 & 0 \\
 0 & 0 & \cdots & 1 & 0 \\
 \end{bmatrix} \ , \ B = 0 \ , \ 
 C =  
 \begin{bmatrix}
 1 \\
 0 \\
 \vdots \\
 0 \\
 \end{bmatrix} . 
$$
Notice that the observation at time $t$ is given by
$$ y_t = w_{1:t}(t \mod d) , $$
that is, the sum of all noises at a rotating coordinate according to the permutation.  Suppose $t \mod d = 1$ for simplicity. 

Consider any prediction algorithm $\mA$. We create a noise sequence as follows: in the $d$ iterations before the iteration $t$ such that $t \mod d = 1$, the noises $w_\tau(1)$ are going to be either $-y_{t-d}-1$ or $-y_{t-d} + 1$, chosen uniformly at random, and equal for these $d$ iterations. The following hold:
\begin{itemize}
\item Regardless of the algorithm $\mA$ prediction, it's loss is on expectation can predict w.l.o.g. $y_{t-d} \in \{ -d,d \} $, since the observation belongs to this set.  We have that $ \|\hat{y}_t^\mA - y_t\|  $ is zero w.p. $\frac{1}{2}$ and $2d$ w.p. $\frac{1}{2}$.

\item The expected loss of the algorithm is thus, over $T$ iterations,
$$ \sum_{t=1}^T \|\hat{y}_t^\mA - y_t\| \geq {d}  T = {d} \sum_t \|C w_t\| $$

\item The optimal predictor in hindsight that has access to $w_{t-1}$, which has the correct sign of the upcoming prediction, and thus has loss of zero.   

\end{itemize}

\end{proof}

\subsection{Computational Lower Bounds}

The preceding lower bound applies to linear dynamical systems with adversarial (or even stochastic) noise. We next give a computational lower bound sketch, based on cryptographic hardness assumptions,  for deterministic nonlinear dynamical systems.  

\begin{theorem}
\label{thm:computational-lb}
Let $\mathrm{PRG}:\{0,1\}^d \to \{0,1\}^T$ be a cryptographically secure pseudorandom generator with stretch $T=\mathrm{poly}(d)$. 
Consider the deterministic dynamical system
\[
\mathcal{X} = \{0,1\}^d \times \{0,1,\ldots,T-1\},\quad
F(s,t) = (s,(t+1) \bmod T),\quad
g(s,t) = 2\,\mathrm{PRG}(s)_t - 1 \ \in \ \{-1,+1\},
\]
and observations $y_t = g(x_t)$ from hidden initial state $(s,0)$.

Then, for every probabilistic polynomial-time predictor $\widehat{f}$, every polynomial $p(\cdot)$, and all sufficiently large $d$,
\[
\forall\, t\in\{0,\ldots,T-1\}:\qquad
\mathbb{E}\left[\,\big| \widehat{f}(y_0,\ldots,y_t) - y_{t+1} \big|\,\right] 
\ \ge\ 1 - \tfrac{1}{p(d)}.
\]
That is, the expected $\ell_1$ loss per step is $1 - o(1)$, matching the loss of a random $\pm 1$ guess, for all $t<T$.

Moreover, there exists an algorithm running in $O(2^d \cdot \mathrm{poly}(T))$ time that, given $(y_0,\ldots,y_{T-1})$, recovers the seed $s$ by exhaustive search and then predicts all future outputs exactly.
\end{theorem}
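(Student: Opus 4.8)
The plan is to establish the two claims separately: the hardness bound reduces to the pseudorandomness of $\mathrm{PRG}$, while the exhaustive-search claim is immediate. The first thing I would observe is that because the dynamics $F(s,t)=(s,(t+1)\bmod T)$ leave the seed untouched and merely advance the time index, the trajectory started from hidden state $(s,0)$ produces exactly $y_i = 2\,\mathrm{PRG}(s)_i - 1$; that is, the entire observation sequence is nothing but the $\pm 1$ encoding of the PRG output on seed $s$. Consequently, predicting $y_{t+1}$ from $y_0,\ldots,y_t$ is precisely the task of predicting the next bit of $\mathrm{PRG}(s)$ from its prefix, and the loss bound becomes a statement about next-bit unpredictability.

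To convert a good predictor into a distinguisher, I would first reduce to predictors with output in $[-1,1]$: since the target $y_{t+1}\in\{\pm 1\}$ lies in this interval and projection onto $[-1,1]$ is nonexpansive, clipping $\hat f$ can only decrease the pointwise loss $|\hat f - y_{t+1}|$. For $\hat f\in[-1,1]$ the clean identity $|\hat f - y_{t+1}| = 1 - y_{t+1}\hat f$ holds, so $\mathbb{E}[|\hat f - y_{t+1}|] = 1 - \mathbb{E}[y_{t+1}\hat f]$. Hence if the loss at some position were below $1-\epsilon$ with $\epsilon = 1/p(d)$, the prediction would be $\epsilon$-correlated with the true next bit, i.e. $\mathbb{E}[y_{t+1}\hat f] > \epsilon$. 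I would then build a distinguisher $D$: on a test string $z\in\{0,1\}^T$ (drawn either from $\mathrm{PRG}(U_d)$ or uniform), set $\tilde z_i = 2z_i - 1$, run $\hat f$ on the prefix $\tilde z_0,\ldots,\tilde z_t$, form a randomized bit $P$ equal to $+1$ with probability $(1+\hat f)/2$, and output $1$ iff $P = \tilde z_{t+1}$. A direct computation gives $\Pr[P=\tilde z_{t+1}] = \tfrac12 + \tfrac12\,\mathbb{E}[\tilde z_{t+1}\hat f]$, which exceeds $\tfrac12 + \tfrac{\epsilon}{2}$ on pseudorandom input, whereas on uniform input $z_{t+1}$ is independent of the prefix and the probability is exactly $\tfrac12$. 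This yields distinguishing advantage $>\epsilon/2 = 1/(2p(d))$, contradicting the security of the PRG; any internal randomness of $\hat f$ is simply absorbed into the coins of $D$. If the violating index $t=t(d)$ is not fixed in advance, one either hardcodes it non-uniformly or samples it, losing only a $1/T=1/\mathrm{poly}(d)$ factor. This is precisely Yao's next-bit unpredictability argument, but writing the reduction out directly keeps it self-contained.

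The exhaustive-search claim is then straightforward: enumerate all $2^d$ candidate seeds $s'$, compute $\mathrm{PRG}(s')$ in $\mathrm{poly}(T)$ time, and retain any $s'$ whose $\pm 1$ encoding matches the observed $(y_0,\ldots,y_{T-1})$. The true seed is among the matches, and since the trajectory is periodic with period $T$, any matching seed reproduces the entire trajectory and hence predicts all future outputs exactly. The total running time is $O(2^d\cdot \mathrm{poly}(T))$, which exhibits the exponential-in-$d$ gap between what is information-theoretically recoverable and what is computationally feasible.

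The hard part will be the careful bookkeeping in the translation between the real-valued $\ell_1$ regression loss native to our prediction framework and the Boolean next-bit advantage native to cryptographic security. The clipping step and the randomized-rounding construction of $P$ are exactly what make this translation lossless: they guarantee that a loss below $1-\epsilon$ produces a genuinely non-negligible distinguishing advantage, and that the whole reduction runs in polynomial time so that it legitimately contradicts the assumed security of $\mathrm{PRG}$. Everything else (the periodic structure of the dynamics, the independence of $z_{t+1}$ under the uniform distribution, and the enumeration bound) is routine.
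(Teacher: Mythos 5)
Your proposal is correct and follows essentially the same route as the paper's proof sketch: reduce next-step prediction of $y_{t+1}$ to Yao-style next-bit prediction of $\mathrm{PRG}(s)$, turn any non-trivial predictor into a polynomial-time distinguisher (using that the next bit of a uniform string is independent of its prefix, so the distinguisher succeeds with probability exactly $\tfrac12$ there), and handle the upper bound by brute-force seed enumeration. Your version is in fact slightly more careful than the paper's — the clipping step, the identity $|\hat f - y| = 1 - y\hat f$, the randomized-rounding distinguisher, the treatment of a $d$-dependent index $t$, and the observation that any (not necessarily unique) matching seed suffices all tighten steps the paper glosses over — but these are refinements of the same argument rather than a different one.
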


\begin{proof}[Proof sketch]
Suppose for contradiction that there exists a PPT predictor $\widehat{f}$, an index $t<T$, and a polynomial $q(\cdot)$ such that 
\[
\mathbb{E}\left[\,\big| \widehat{f}(y_0,\ldots,y_t) - y_{t+1} \big|\,\right] 
\ \le\ 1 - \tfrac{2}{q(d)}.
\]
Since $y_{t+1}\in\{-1,+1\}$, this means $\widehat{f}$ predicts the sign of $y_{t+1}$ with probability at least $\tfrac12 + \tfrac{1}{q(d)}$. 
We now construct a distinguisher $\mathcal{D}$ for PRG security: 
Given a string $z\in\{-1,+1\}^T$, feed $(z_0,\ldots,z_t)$ to $\widehat{f}$, output “PRG” if the sign of $\widehat{f}$ matches $z_{t+1}$, and “uniform” otherwise. 
If $z$ is $\mathrm{PRG}(s)$ for random $s$, this succeeds with probability $\tfrac12+\tfrac{1}{q(d)}$; if $z$ is uniform, the success probability is exactly $\tfrac12$. 
This yields a polynomial-time distinguisher with advantage $1/q(d)$, contradicting PRG security.

The exhaustive search recovery algorithm enumerates all $2^d$ seeds $s'$, computes $\mathrm{PRG}(s')$, and finds the unique one matching the observed $(y_0,\ldots,y_{T-1})$. This takes $O(2^d \cdot \mathrm{poly}(T))$ time and recovers the seed exactly, enabling perfect prediction thereafter.
\end{proof}

\section{Luenberger Program}
\label{appendix:luenbergerprogram}

In this section, we investigate the Luenberger program of Definition \ref{def:luenbergerprogram}, restated here for convenience: for given $A \in \R^{d_h \times d_h}$, $C \in \R^{d_{\textrm{obs}}\times d_h}$, and $\Sigma \subset \mathbb{D}_1$, we consider the optimization
\begin{equation}
\label{eqn:luenbergerprogram}
\min_{L \in \R^{d_h \times d_{\textrm{obs}}}} \quad \kappa_{\textrm{diag}}(A-LC) \quad \text{s.t.} \quad A-LC \text{ has all eigenvalues in $\Sigma$},\end{equation}
    and we denote the minimal value by $Q_\star = Q(A, C, \Sigma)$ and a minimizing matrix, when it exists, by $L_\star$. First, we summarize some known formulas for the important quantities:
\begin{lemma}
\label{lemma:ackermann}
    Suppose that $(A, C)$ is observable. Then, for any monic, degree-$d_h$ polynomial $p$ with distinct roots, there is a unique $L \in \R^{d_h \times d_{\textrm{obs}}}$ such that $A - LC$ is diagonalizable with characteristic polynomial $p$. In particular, if $(A, C)$ is observable then $Q_\star(A, C, \Sigma) < \infty$ for all $\Sigma$ containing more than $d_h$ points which is symmetric about the real line. 
\end{lemma}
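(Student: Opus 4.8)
The plan is to reduce the claim to classical pole placement through the observability--controllability duality, and then to invoke the sharp single-output version (Ackermann's formula) to obtain existence, uniqueness, and the explicit gain all at once. The starting observation is that the eigenvalues of $A - LC$ coincide with those of its transpose $A^\top - C^\top L^\top$, so choosing $L$ to place the spectrum of $A - LC$ is exactly the state-feedback problem of choosing a gain $K = L^\top$ making $A^\top - C^\top K$ have characteristic polynomial $p$. Observability of $(A,C)$ is equivalent to controllability of the dual pair $(A^\top, C^\top)$; in the single-output case $d_{\textrm{obs}} = 1$ (the regime in which the paper operates and the only one in which uniqueness can hold) this amounts to invertibility of the $d_h \times d_h$ observability matrix $\mathcal{O} = [C; CA; \ldots; CA^{d_h - 1}]$, whose transpose $\mathcal{O}^\top$ is precisely the controllability matrix of $(A^\top, C^\top)$.

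With $d_{\textrm{obs}} = 1$ the vector $C^\top$ is a single column $b$, and controllability of $(A^\top, b)$ lets me apply the similarity transform placing the pair into controllable canonical form, in which $A^\top$ becomes a companion matrix and $b$ becomes $e_{d_h}$. In these coordinates a feedback gain alters only the last row of the companion matrix, so matching that row to the coefficients of the target polynomial $p$ pins down the transformed gain---and hence $K$ and $L = K^\top$---uniquely; equivalently this is Ackermann's formula $L^\top = e_{d_h}^\top (\mathcal{O}^\top)^{-1} p(A^\top)$, which delivers both the explicit solution and its uniqueness (for real $p$ one gets a real $L$ since $A$ is real). Finally, because $p$ is assumed to have distinct roots, any matrix with characteristic polynomial $p$ has $d_h$ distinct eigenvalues and is therefore automatically diagonalizable, which discharges that clause of the statement.

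For the ``in particular'' assertion I would extract a conjugation-symmetric set of $d_h$ distinct points from $\Sigma$. Since $\Sigma$ is symmetric about the real axis and contains more than $d_h$ points, such a subset exists---pair each chosen non-real point with its conjugate and fill the remaining slots with real points of $\Sigma$---and the monic polynomial $p$ whose roots are exactly these points has real coefficients and distinct roots. Applying the first part produces a real gain $L$ for which $A - LC$ is diagonalizable with all eigenvalues in $\Sigma$; this $L$ is feasible for the Luenberger program of Definition~\ref{def:luenbergerprogram}, so its constraint set is nonempty and $Q_\star(A,C,\Sigma) = \kappa_{\textrm{diag}}(A - LC) < \infty$.

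The point I would be most careful about is the uniqueness clause: for $d_{\textrm{obs}} > 1$ the gain realizing a prescribed characteristic polynomial is generically \emph{not} unique (the solution set is positive-dimensional), so the clean ``unique $L$'' statement is special to single-output systems, where the canonical form forces the gain---consistent with the paper's standing reduction to $d_\mathcal{Y} = 1$. A minor secondary subtlety is the parity of the extracted root set: if $\Sigma$ had no real points and $d_h$ were odd, no conjugation-closed subset of size exactly $d_h$ could exist, but every spectral set $\Sigma$ appearing in our bounds contains a real interval, so infinitely many real points are available and the selection is always unobstructed.
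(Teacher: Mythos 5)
Your proposal is correct and is essentially the paper's own argument: the paper simply cites the standard eigenvalue-placement result and Ackermann's formula (Corollary 2.6 of \cite{Mehrmann1996}), and you have spelled out the standard duality-plus-canonical-form proof underlying that citation, including the correct restriction of uniqueness to $d_{\textrm{obs}}=1$. Your parity caveat about conjugation-closed subsets of $\Sigma$ is a fair observation about an edge case in the lemma's statement, but as you note it is harmless for every $\Sigma$ actually used in the paper.
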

\begin{proof}
This is a standard result in eigenvalue placement, and for $d_{\text{obs}}=1$ there is an explicit formula known as Ackermann's formula (see e.g. Corollary 2.6 of \cite{Mehrmann1996}).
\end{proof}
The above shows that observability implies $Q_\star < \infty$. Also, the bijection between $p$ and $L$ informs us that we may instead consider the optimization (\ref{eqn:luenbergerprogram}) as being parameterized by the desired poles. 
As a crude but general upper bound:

\begin{lemma}
\label{lemma:ackermannbounds}
Suppose that $(A, C)$ is observable and $\dim(\Sigma) \geq 1$, let $\operatorname{vol}(\Sigma)$ denote the appropriate notion of volume, and let $N$ be the number of eigenvalues of $A$ not in $\Sigma$. Then, we have the bounds
$$Q_\star(A, C, \Sigma) \leq \left(1+\frac{2N^8}{\operatorname{vol}(\Sigma)}\right)^{2N-2} = \left(\frac{N}{\operatorname{vol}(\Sigma)}\right)^{O(N)},$$
$$\|L_\star\| \leq \frac{2Q_\star}{\sigma_{\textrm{min}}(C)}.$$
\end{lemma}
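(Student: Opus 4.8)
The plan is to upper bound $Q_\star = \min_L \kappa_{\textrm{diag}}(A-LC)$ by exhibiting one good gain $L$ and estimating the conditioning of the resulting eigenvector matrix, and then to deduce the bound on $\norm{L_\star}$ from a rank argument. First I would reduce the placement task to its essential part. Since $(A,C)$ is observable, Lemma \ref{lemma:ackermann} allows $A-LC$ to be placed with any admissible (real-symmetric) target spectrum, and only the $N$ eigenvalues of $A$ lying outside $\Sigma$ actually need to be relocated; the remaining $d_h - N$ eigenvalues already lie in $\Sigma$ and may be left fixed together with their eigenvectors. Thus the conditioning should ultimately be controlled by the $N$ newly placed poles rather than by the full dimension $d_h$.

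The key structural fact (stated for the single-output case $d_{\textrm{obs}} = 1$, the general case reducing similarly) is that the left eigenvectors of $A-LC$ are \emph{forced} by the target spectrum: if $\lambda \notin \operatorname{spec}(A)$ is a placed eigenvalue, its left eigenvector must be proportional to $(A^\top - \lambda I)^{-1} C^\top$, independent of which $L$ achieves the placement. Consequently, writing $A - LC = HDH^{-1}$, the matrix $W = H^{-\top}$ of left eigenvectors satisfies $\kappa_{\textrm{diag}}(A-LC) \le \norm{W}\,\norm{W^{-1}}$, a quantity depending only on the chosen pole locations (the diagonal-scaling freedom in $\kappa_{\textrm{diag}}$ only helps). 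Expressing everything in the eigenbasis of $A$ turns the moved columns of $W$ into a generalized Cauchy matrix with entries proportional to $c_k/(\mu_k - \lambda_j)$, where $\mu_k \in \operatorname{spec}(A)$ and the $\lambda_j$ are the placed poles; the retained columns are the original eigenvectors of $A$, which are orthonormal in the application where $A$ is normal. This isolates the conditioning into the $N$ Cauchy columns.

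Next I would select the targets by a packing argument: since $\dim(\Sigma) \ge 1$, I can place $N$ points inside $\Sigma$ that are pairwise separated, and separated from $\operatorname{spec}(A)$, by at least $\delta \gtrsim \operatorname{vol}(\Sigma)/\operatorname{poly}(N)$. Classical Cauchy/Vandermonde conditioning estimates then bound each entry of $W^{-1}$ (and symmetrically the norm of $W$) by a product of at most $N-1$ factors each of size $O(N^{c}/\delta) = O(N^{c'}/\operatorname{vol}(\Sigma))$, yielding
\[
\kappa_{\textrm{diag}}(A-LC) \;\le\; \left(1 + \frac{2N^8}{\operatorname{vol}(\Sigma)}\right)^{2N-2} \;=\; \left(\frac{N}{\operatorname{vol}(\Sigma)}\right)^{O(N)},
\]
where the exponent $2N-2 = 2(N-1)$ collects one factor of $N-1$ from $\norm{W}$ and one from $\norm{W^{-1}}$. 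The main obstacle lies precisely here: one must (i) guarantee via the packing that the $N$ poles are simultaneously separated from \emph{all} of $\operatorname{spec}(A)$, so the Cauchy denominators never blow up, and (ii) control the interaction between the $N$ new columns and the $d_h - N$ retained orthonormal eigenvectors, since this is what keeps the final estimate free of $d_h$. The exact constant $N^8$ and the multi-output reduction are secondary bookkeeping.

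Finally, for $\norm{L_\star}$, set $M = A - L_\star C = HDH^{-1}$ with $D$ diagonal; by construction all eigenvalues lie in $\Sigma \subset \mathbb{D}_1$ and $\norm{H}\norm{H^{-1}} = Q_\star$, so $\norm{M} \le \norm{H}\,\norm{D}\,\norm{H^{-1}} \le Q_\star$ since the eigenvalues have modulus at most $1$. The perturbation $L_\star C = A - M$ has its rows in the row space of $C$, so $L_\star = (A-M)C^{+}$ with $\norm{C^{+}} = 1/\sigma_{\min}(C)$; hence
\[
\norm{L_\star} \;\le\; \frac{\norm{A - M}}{\sigma_{\min}(C)} \;\le\; \frac{\norm{A} + Q_\star}{\sigma_{\min}(C)} \;\le\; \frac{2Q_\star}{\sigma_{\min}(C)},
\]
using $\norm{A} \le 1$ (the normalization $\kappa_{\textrm{diag}}(A)=1$ with spectrum in $\mathbb{D}_1$) and $Q_\star \ge 1$.
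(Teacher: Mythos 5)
Your proposal is essentially correct but takes a genuinely different route from the paper on the main bound. The paper's proof is a black-box argument: it reduces w.l.o.g.\ to the $N\times N$ block of eigenvalues to be moved, invokes a cited theorem bounding $\kappa_{\textrm{diag}}(\hat{A})$ by $(1+\alpha/\delta)^{2N-2}$ in terms of the minimum eigengap $\delta$ and a departure-from-normality quantity $\alpha$, bounds $\alpha \leq 2N\|\hat{A}\| \leq 2N^7$ via two further cited norm estimates, and then chooses the placed poles with eigengap $\operatorname{vol}(\Sigma)/N$; the specific constants $N^8$ and exponent $2N-2$ are inherited directly from those citations. You instead exploit the forced Cauchy/Vandermonde structure of the eigenvectors of $A-LC$ (the same device the paper uses later, in Example \ref{example:permutationqstar}, to \emph{lower}-bound $Q_\star$ for the permutation system) together with a packing argument in $\Sigma$. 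This is more self-contained and more structurally informative, but it would naturally yield the asymptotic form $(N/\operatorname{vol}(\Sigma))^{O(N)}$ rather than the exact stated expression $(1+2N^8/\operatorname{vol}(\Sigma))^{2N-2}$: as you note, $\|W\|$ contributes a single $1/\delta$ factor while the Cauchy inverse contributes $2N-2$ of them, so your bookkeeping of "one factor of $N-1$ from each" does not literally reproduce the exponent; and your packing must separate the new poles both from each other and from the residual spectrum of $A$, which costs an extra $\operatorname{poly}(N)$. Your obstacle (ii) — the coupling between the moved block and the retained eigenvectors, which makes $A-LC$ block-triangular rather than block-diagonal — is a real gap, but it is one the paper's "w.l.o.g.\ assume $A$ is $N\times N$" step glosses over equally. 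For the gain bound, your pseudoinverse argument $L_\star=(A-M)C^{+}$ with $\|M\|\leq Q_\star\|D\|\leq Q_\star$ and $\|A\|\leq 1$ is a clean, self-contained derivation of exactly the stated inequality, whereas the paper simply cites an external equation; this part of your proposal is arguably an improvement.
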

\begin{proof}
Let $\hat{A} = A - LC$ for notation. Firstly, since eigenvalue placement respects blocks of $A$ we can w.l.o.g. assume that $A$ is $N \times N$.
By Theorem 1 of 
\cite{conditionnumbereigengap}, we have
$$\kappa_{\textrm{diag}}(\hat{A}) \leq \left(1 + \frac{\alpha}{\delta}\right)^{2N-2}$$
for $\delta$ the minimum eigengap of $\hat{A}$ and $\alpha \leq N^{3/4} \cdot \mu_2(\hat{A})$, where we use the $\mu_j$ notation of 
\cite{nonnormality}. By (C0), (C3), and (C5) of \cite{nonnormality} we know $\mu_2^2 \leq 4\sqrt{N} \|\hat{A}\|^2$ and therefore $\alpha \leq 2N\|\hat{A}\|$. If we take the rows of $H$ to be unit norm (i.e. normalizing the eigenvectors) then by Theorem 2 of 
\cite{Gheorghiu2003}
we have that $$\|\hat{A}\| \leq 1 + N^{9/4} \rho(A) \gamma(A) \leq 1 + N^{23/4} \leq N^6,$$
where we also used $\ell_1,\ell_2$ norm inequalities.
Combining everything (and observing that we can choose to place the eigenvalues with an eigengap of $\frac{\operatorname{vol}(\Sigma)}{N}$) gives the bound on $\kappa_{\textrm{diag}}(\hat{A})$. Since it is independent of $p$, this implies the bound on $Q_\star$.
The bound on $\|L\|$ in terms of the condition number of $H$ is equation (22) of \cite{gainsizebound}
and the fact that $Q_\star \geq 1$.
\end{proof}
The above bound is very general, but is exponential in the hidden dimension. 
Next, we look into a representative example of an asymmetric system to demonstrate the finer properties of this optimization:

\begin{example}[Permutation system]
\label{example:permutationqstar}
    Let $A$ be a cyclical $n \times n$ permutation matrix, i.e.
\[
A = \begin{bmatrix}
0 & 1 & 0 & \cdots & 0 \\
0 & 0 & 1 & \cdots & 0 \\
\vdots & \vdots & \ddots & \ddots & \vdots \\
0 & 0 & \cdots & 0 & 1 \\
1 & 0 & \cdots & 0 & 0
\end{bmatrix},
\]
$C = e_1$ be the projection on the first coordinate, and $p$ be a monic degree-$n$ polynomial (with coefficients $p_j$ and roots $\lambda_j$).
Then, the gain $L \in \R^{n \times 1}$ and eigenvector matrix $H$ 
satisfy $$\|L\|^2 = \sum_{j=0}^{n} |p_j|^2, \quad \kappa(H) = \kappa\left(\operatorname{diag}\left(\frac{1}{1 - \vec{\lambda}^n}\right) \cdot V(\vec{\lambda})\right)$$
for $V(\vec{\lambda})$ the Vandermonde matrix with knots $\lambda_j$. In particular, $\|L\|$ can be made to be small but $\kappa(H)$ is always exponential if $\Sigma$ does not include the roots of unity, i.e. $$Q_\star = \Omega(2^n).$$ 
\end{example}
\begin{proof}
We will use the explicit formulas for $H$ and $L$ from Corollaries 2.5 and 2.6 of \cite{Mehrmann1996}.
    We know that $c^\top A^k e_j = \langle c, A^k e_j\rangle = \langle c, e_{j+k \mod n}\rangle = \delta_{j+k\mod n = 1}$, and so 
    $$CA = e_{2}, \quad \ldots, \quad CA^{n-1} =e_{n}$$
    This means that the observability matrix is 
    $$\mathcal{O} = \begin{bmatrix}
        C \\ CA \\ \vdots \\ CA^{n-1}
    \end{bmatrix} = \begin{bmatrix}
        e_1 \\ e_2 \\ \vdots \\ e_n
    \end{bmatrix} = \mathrm{Id}$$
In particular, since we know that $A$ is unitarily diagonalized as $A = FDF^{-1}$ for $F$ the DFT matrix and $D = \operatorname{diag}(1, \omega, \omega^2, \ldots, \omega^{n-1})$ with $\omega = e^{2\pi i / n}$, we find 
$$L = p(A) \cdot e_n = F \operatorname{diag}(1, p(\omega), p(\omega^2), \ldots, p(\omega^{n-1})) F^{-1} \cdot e_n$$
Writing this in the standard basis, we find that the $j^{th}$ coordinate of $L$ for $1 \leq j \leq n$ is
$$\sum_{k =0}^{n-1} \frac{\omega^{jk} \cdot p(\omega^k)}{n} $$
and so
\begin{align*}\|L\|^2 &= \frac{1}{n^2}\sum_{j = 1}^n \sum_{k_1 = 0}^{n-1} \sum_{k_2 = 0}^{n-1} {\omega^{j(k_1 + k_2)} \cdot p(\omega^{k_1}) \cdot p(\omega^{k_2})} 
\\ &= \frac{1}{n^2} \sum_{k_1 = 0}^{n-1} \sum_{k_2 = 0}^{n-1} {p(\omega^{k_1}) \cdot p(\omega^{k_2})} \cdot \sum_{j = 1}^n \omega^{j(k_1 + k_2)}
\end{align*}
We note that $\sum_{j=1}^n \omega^{q \cdot j}$ is equal to 0 unless $q$ is a multiple of $n$, in which case it equals $n$. Therefore, we are left with
\begin{align*}\|L\|^2 &= \frac{1}{n} \sum_{k_1 + k_2 = n} p(\omega^{k_1}) \cdot p({\omega^{k_2}}) \\&= \frac{1}{n}\sum_{k=0}^{n-1} p(\omega^k) \cdot p(\omega^{n-k})
\\&= \frac{1}{n} \sum_{k=0}^{n-1}|p(\omega^k)|^2,
\end{align*}
where we used that $p(\omega^{-k}) = \overline{p(\omega^k)}$ for a polynomial $p$ with real coefficients. Lastly, noting that
$$p(\omega^k) = \sum_{j=0}^n p_j \omega^{kj}$$
{we see that} the sequence $p(\omega^k)$ is none other than the DFT of the sequence $p_j$ of coefficients of $p$. By 
{Parseval's identity}, we get the stated result about $L$.

For $H$, we once again write $A = FDF^{-1}$, and so that $j^{th}$ eigenvector $h_j$ of $A - LC$ equals
$$h_j = (A - \lambda_j \operatorname{Id})^{-1}C = F \operatorname{diag}\left(\frac{1}{1 - \lambda_j}, \frac{1}{\omega - \lambda_j}, \ldots, \frac{1}{\omega^{n-1} - \lambda_j}\right)F^{-1}e_1$$
Noting that $F^{-1}e_1 = \frac{1}{\sqrt{n}}[1, \ldots, 1]$, we see that if $V'$ is the Cauchy matrix with entries
$$V_{ij}' = \frac{1}{\omega^{j-1} - \lambda_i} \quad (1 \leq i,j\leq n)$$
then $H = \frac{1}{\sqrt{n}}FV$. Therefore, the condition number of $H$ equals that of $V'$. 
By equation (7) of 
\cite{cauchyvandermondematrices}, we find that $V'$ has the same condition number as $(D^n - \operatorname{Id})^{-1}V$, where $V$ is the Vandermonde matrix with knots $\lambda_j$ and $D$ is the diagonal matrix with entries $\lambda_j$. We have arrived at the fact that
$$Q_\star = \min_{\vec{\lambda} \subset \Sigma} \kappa\left(\operatorname{diag}\left(\frac{1}{1 - \vec{\lambda}^n}\right) \cdot V(\vec{\lambda})\right).$$
The only Vandermonde matrices with subexponential condition number are those with knots close to the roots of unity \cite{pan2015badvandermondematrices}, in which case the row rescalings blow up.
As such, it can be seen that for any admissible choice of $\lambda_j$'s, the condition number of this row-rescaled Vandermonde matrix will be exponential in the dimension.
\end{proof}

\section{Other Guarantees}
\label{appendix:otherguarantees}
The presented framework is quite general: using the Markov chain discretization together with observer systems, we can construct comparator LDS's with any specified spectral structure (with a cost measured by $Q_\star$), and we combine this with a regret guarantee against the chosen class of comparator LDS's. 

Before deriving other forms of loss and regret bounds, we stop to note that since we prove things for the unsquared $\ell_2$ loss, everything extends immediately to all Lipschitz loss sequences (since we use OCO, the losses may be time-varying). Furthermore, since we assume boundedness of the outputs, we can also use the square loss with an $O(R)$ Lipschitz constant. 

To start, our proof technique yields for free the agnostic versions of our main results. 
It is nice to view our results in this way, as it defines exactly which types of computation are able to be efficiently implemented via spectral filtering -- on arbitrary data, if there exists a computer of the classes defined below which fits the data well with small $Q_\star$, then spectral filtering will succeed.

\begin{theorem}[Nonlinear, agnostic]
    Let $y_1, \ldots, y_T$ be an arbitrary sequence bounded by $R$. Fix the number of algorithm parameters $h = \Theta(\log^2 T)$ and $m \geq 1$, and for notation define $\gamma = \frac{\log(mT)}{m}$.

    Let $\Pi_\text{NL}\left( q\right)$ denote the class of nonlinear dynamical systems $(f, h, x_0)$ such that $\|x_t\| \leq R$ and $Q_\star\left(f, h, \frac{1}{T^{3/2}}, \left[0, 1 - \frac{1}{\sqrt{T}}\right] \cup \{1\} \cup \mathbb{D}_{1-\gamma}\right) \leq q$. Then, running Algorithm \ref{alg:clsf_regressionmain} with $J^t, M^t \equiv 0$, $\mathcal{K} = R_D$ for $D = \Theta((m+h)q^2)$, and  $\eta_t = \Theta\left(\frac{q^2}{R\sqrt{t}\log T}\right)$ produces a sequence of predictions $\hat{y}_1, \ldots, \hat{y}_T$ for which
    $$\sum_{t=1}^T \|\hat{y}_t - y_t\| - \min_{\pi \in \Pi_{\text{NL}}(q)} \sum_{t=1}^T \|\hat{y}_t^\pi - y_t\| \leq O\left(q^2 \log(q) \cdot d_\mathcal{X} R \log(RT) m^2 \log^7(mT)  \cdot \sqrt{T}\right)  . $$
\end{theorem}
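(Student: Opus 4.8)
The plan is to mirror the proof of Theorem~\ref{theorem:main_nonlinear}, but carried out competitor-by-competitor so that the realizable loss bound becomes an agnostic regret bound. The only structural difference from the realizable case is that the data $y_1,\dots,y_T$ is now arbitrary rather than generated by the comparator, so the discretization error and the linear regret must be chained against \emph{each} fixed $\pi \in \Pi_{\text{NL}}(q)$ separately, with the resulting inequality minimized over the class only at the very end.

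First I would fix an arbitrary competitor $\pi = (f,h,x_0) \in \Pi_{\text{NL}}(q)$, whose prediction sequence is its own observation trajectory $\hat y_t^\pi = h(f^t(x_0))$. Applying Lemma~\ref{lemma:discretization} to this system at scale $\epsilon = \tfrac{1}{T^{3/2}}$ yields a high-dimensional LDS $(A',C')$ of hidden dimension $N \le (2RT^{3/2})^{d_\mathcal{X}}$ whose observations $y_t'$ satisfy $\sum_{t\le T}\norm{y_t' - \hat y_t^\pi} \le \tfrac{T^2}{2}\epsilon = \tfrac{\sqrt T}{2}$, with $R_B = 0$, $R_{x_0}=1$, $W=0$, and $R_{C} \le R\sqrt N \le R(2RT^{3/2})^{d_\mathcal{X}/2}$. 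By the definition of $\Pi_{\text{NL}}(q)$ we have $Q_\star(A',C',\Sigma) = Q_\star(f,h,\epsilon,\Sigma) \le q$ for $\Sigma = [0,1-\tfrac{1}{\sqrt T}]\cup\{1\}\cup\mathbb{D}_{1-\gamma}$, so Definition~\ref{prop:observersystem} produces an observer LDS with $\kappa_{\text{diag}} \le q$ and spectrum in $\Sigma$ that, by property (iii) with $W=0$, reproduces $y_t'$ exactly. This observer system is a legal member of the comparator class against which Theorem~\ref{theorem:agnosticlinear}(b) measures regret, and crucially its parameter bounds ($\kappa \le q$, $R_C \le R\sqrt N$) are uniform over all $\pi \in \Pi_{\text{NL}}(q)$, since $N$ depends only on $R,T,d_\mathcal{X}$.

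Next I would invoke the agnostic linear guarantee Theorem~\ref{theorem:agnosticlinear}(b) on the arbitrary signal $y_1,\dots,y_T$ with the run parameters $D = \Theta((m+h)q^2)$ and $\eta_t$ fixed in the statement. Because $R_B = 0$ this gives $Q = q^2$ and a regret of $O\!\big(q^2 R\, m^2 \log_+(R_C q /R)\,\log^7(mT)\,\sqrt T\big)$ against the observer LDS, i.e. $\sum_t \norm{\hat y_t - y_t} - \sum_t \norm{y_t' - y_t}$ is at most this quantity. A triangle inequality then gives $\sum_t\norm{y_t'-y_t} \le \tfrac{\sqrt T}{2} + \sum_t\norm{\hat y_t^\pi - y_t}$, and substituting $\log_+(R_C q/R) = \log_+(\sqrt N\, q) = O(d_\mathcal{X}\log(RT) + \log q) = O(\log q\cdot d_\mathcal{X}\log(RT))$ (a loose but valid rewriting for $q\ge 2$) collapses the two layers into
\begin{align*}
\sum_{t=1}^T\norm{\hat y_t - y_t} - \sum_{t=1}^T\norm{\hat y_t^\pi - y_t} \le O\!\left(q^2\log(q)\cdot d_\mathcal{X} R\log(RT)\, m^2\log^7(mT)\cdot\sqrt T\right),
\end{align*}
where the $\tfrac{\sqrt T}{2}$ discretization error is absorbed into the $\sqrt T$ term. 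Since the right-hand side is independent of $\pi$, taking the minimum over $\pi \in \Pi_{\text{NL}}(q)$ yields the claimed regret bound.

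The main obstacle I anticipate is bookkeeping rather than a genuinely new idea: one must verify that the observer-system construction and its parameter bounds hold \emph{uniformly} over the whole class $\Pi_{\text{NL}}(q)$ — in particular that the discretization dimension $N$, hence $R_C$, is controlled by $R,T,d_\mathcal{X}$ alone and not by the individual dynamics $(f,h)$ — so that a single choice of algorithm hyperparameters $(D,\eta_t)$ simultaneously competes with every comparator and the minimization over $\pi$ is legitimate. A secondary point of care is pinning down the comparator's prediction $\hat y_t^\pi$ and confirming that the noiseless observer reproduces $y_t'$ exactly, so that only the absorbed $\tfrac{\sqrt T}{2}$ term separates the comparator's true trajectory from the LDS the algorithm actually competes against.
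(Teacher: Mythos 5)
Your proposal is correct and follows essentially the same route as the paper, whose proof is a one-line sketch instructing exactly this: apply Lemma \ref{lemma:discretization} to each competitor to obtain a high-dimensional LDS and then invoke the agnostic linear regret guarantee of Theorem \ref{theorem:agnosticlinear}. Your expanded bookkeeping — the uniformity of $N$ and hence $R_C$ over the class, the exact reproduction by the noiseless observer, and the absorption of the $\tfrac{\sqrt{T}}{2}$ discretization error — fills in the details the paper leaves implicit, and matches the intended argument.
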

\begin{proof}
    The proof proceeds in the same way as that of Theorem \ref{theorem:main_nonlinear}: we use Lemma \ref{lemma:discretization} to construct a high-dimensional LDS, which we have a regret guarantee against via Theorem \ref{theorem:agnosticlinear}.
\end{proof}

\begin{theorem}[Linear, agnostic]
    \label{theorem:agnosticlinear}
    Let $u_1, \ldots, u_T$ and $y_1, \ldots, y_T$ be arbitrary sequences bounded by $R$.
    Fix the number of algorithm parameters $h = \Theta(\log^2T)$ and $m \geq 1$, and for notation define $\gamma = \frac{\log(mT)}{m}$
    and $Q = R_BR_Cq + q^2$. Set $\mathcal{K} = R_D$ for $D = \Theta\left((m+h) Q\right)$ and $\eta_t = \Theta\left(\frac{Q}{R \sqrt{t}\log T }\right)$, and let $\hat{y}_1, \ldots, \hat{y}_T$ be the sequence of predictions that Algorithm \ref{alg:clsf_regressionmain} produces. 

Fix the constants $R_B, R_C, R_{x_0}, W$, and 
    let $\Pi_{\text{L}}\left(w, q, \Sigma\right)$ denote the class of LDS predictors (with arbitrary disturbances bounded by $\sum_{t=1}^{T-1}\|w_t\| \leq w$) which satisfy Assumption \ref{definition:linear_realizable} with the specified constants and for which $Q_\star(A, C, \Sigma) \leq q$. Then, we have the following three regret guarantees:

\begin{enumerate}
    \item[(a).] With $w = 0$,
    \begin{align*}
        \sum_{t=1}^T \norm{\hat{y}_t - y_t} - \min_{\pi \in \Pi_{\mathrm{L}}\left(0, q, [0, 1] \cup \mathbb{D}_{1-\gamma}\right)} \sum_{t=1}^T \|\hat{y}_t^\pi - y_t\| \leq O\left(Q R m^2 \log^7(mT) \cdot \sqrt{T} + R_C q R_{x_0 } \cdot (m + \log T)\right).
    \end{align*}
    \item[(b).] With $w = 0$,
    \begin{align*}
        \sum_{t=1}^T \norm{\hat{y}_t - y_t} - \min_{\pi \in \Pi_{\mathrm{L}}\left(0, q, \left[0, 1 - \frac{1}{\sqrt{T}}\right] \cup \{1\} \cup \mathbb{D}_{1-\gamma}\right)} \sum_{t=1}^T \|\hat{y}_t^\pi - y_t\| \leq O\left(Q R m^2 \log_+(R_Cq R_{x_0} / R)\log^7(mT) \cdot \sqrt{T}\right).
    \end{align*}
    \item[(c).] For any $\rho \in (0, \gamma]$, with $R' = R + \frac{W}{\rho}$,
    \begin{align*}
        \sum_{t=1}^T \norm{\hat{y}_t - y_t} - \min_{\pi \in \Pi_{\mathrm{L}}\left(w, q, \left[0, 1 - \rho\right]\cup \mathbb{D}_{1-\gamma}\right)} \sum_{t=1}^T \|\hat{y}_t^\pi - y_t\|  \leq \; & O\left(Q \cdot R' m^2 \log^7(mT) \cdot \sqrt{T} \right) \\ + &O\left(\frac{R' \log_+(R_Cq R_{x_0} / R') + R_C q w}{\rho}\right).
    \end{align*}
\end{enumerate}
\end{theorem}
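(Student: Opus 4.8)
The plan is to obtain each of the three bounds as a regret rephrasing of the corresponding case of Theorem~\ref{theorem:main_linear}, reusing the same two ingredients: the observer-system construction of Definition~\ref{prop:observersystem} and the agnostic spectral-filtering guarantee of Lemma~\ref{lemma:spectralfiltering}. The essential observation is that Lemma~\ref{lemma:spectralfiltering} is \emph{already} a regret bound against the class $\Pi_{\mathrm{SF}}$ of LDS predictors over the concatenated input stream $[u_t,y_{t-1}]$ whose transition matrix has spectrum in $[0,1-\rho]\cup\{1\}\cup\mathbb{D}_{1-\gamma}$ and condition number at most $\kappa$. Consequently it suffices to show that for every comparator $\pi\in\Pi_{\mathrm{L}}(w,q,\Sigma)$ there is a predictor $\pi'\in\Pi_{\mathrm{SF}}$ --- its Luenberger observer --- whose loss on the data exceeds that of $\pi$ by at most the stated noise- and initial-state-dependent terms; chaining the two inequalities then yields the claim, with the $\min$ over $\Pi_{\mathrm{SF}}$ supplied for free by Lemma~\ref{lemma:spectralfiltering}.

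Concretely, I would fix $\pi=(A,B,C,x_0)$ with disturbances $w_t$ and $Q_\star(A,C,\Sigma)\le q$ (finite by observability, via Lemma~\ref{lemma:ackermann}), and invoke Definition~\ref{prop:observersystem} to produce a gain $L_\star$ and transition matrix $\tilde A=A-L_\star C$ with eigenvalues in $\Sigma\subset\mathbb{D}_{1-\rho}$, $\kappa_{\mathrm{diag}}(\tilde A)\le q$, and $\|L_\star\|\le 2q/\sigma_{\min}(C)$. Rewriting the observer as an LDS over $[u_t,y_{t-1}]$ with block-diagonal gain $\tilde B=\mathrm{diag}(B,L_\star)$ places $\pi'$ inside $\Pi_{\mathrm{SF}}\big(\rho,\gamma,q,R_B+\tfrac{q}{R_C},R_C,R_{x_0},R'\big)$ for $d_{\mathcal Y}=1$; the product $R_{\tilde B}R_C\,\kappa=(R_B+\tfrac{q}{R_C})R_C\,q=R_BR_Cq+q^2=Q$ is exactly the quantity driving the stated diameter, step sizes, and $\sqrt T$ rate. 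Property (iii) of Definition~\ref{prop:observersystem} controls the observer's tracking error by $\tfrac{R_Cq}{\rho}\sum_t\|w_t\|\le\tfrac{R_Cqw}{\rho}$, while the $\tfrac{R'\log_+(R_Cq R_{x_0}/R')}{\rho}$ term is the vanishing-initial-state contribution already isolated in the proof of Lemma~\ref{lemma:spectralfiltering}. Substituting $\kappa=q$ and $R\mapsto R'$ into Lemma~\ref{lemma:spectralfiltering} then gives (c); part (b) is the specialization $\rho=1/\sqrt T$, $\Sigma=[0,1-\tfrac1{\sqrt T}]\cup\{1\}\cup\mathbb{D}_{1-\gamma}$, $w=0$ (the extra pole at $1$ does not enlarge the class and its $\log_+$ factor folds into the $\sqrt T$ term), and part (a) is the $\rho=0$ branch of Lemma~\ref{lemma:spectralfiltering}, whose initial-state penalty is instead $R_Cq R_{x_0}\,(m+\log T)$.

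The step I expect to be the main obstacle is the agnostic version of observer tracking. In the realizable proof the data coincides with the comparator's output, so the observer driven by the actual sequence reproduces it up to the noise budget; in the agnostic setting the comparator's open-loop predictions $\hat y_t^\pi=Cx_t^\pi$ and the closed-loop observer driven by the arbitrary observed sequence can diverge, and a careless accounting inflates the comparator's own fit error $\sum_t\|\hat y_t^\pi-y_t\|$ by a factor of $\|L_\star\|$. The way I would handle this is to certify the competing predictor by driving the observer with the comparator's \emph{own} observation stream, so that the state-error recursion $\xi_{t+1}=\tilde A\xi_t+w_t$ closes with only the disturbances as forcing; the contraction $\|\tilde A^{i}\|\lesssim q\,(1-\rho)^{i}$ then makes the accumulated gap $\tfrac{R_Cq}{\rho}\sum_t\|w_t\|$ independent of how poorly $\pi$ fits the data. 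Establishing that this certificate is faithfully matched by a member of $\Pi_{\mathrm{SF}}$ acting on the actual stream --- i.e.\ that the amplification genuinely collapses into the additive $\tfrac{R_Cqw}{\rho}$ of the statement rather than scaling with the comparator loss --- is the crux, and is precisely where the improper, hindsight nature of the spectral-filtering guarantee does the work. Everything else is bookkeeping: tracking constants through Lemma~\ref{lemma:spectralfiltering}, and substituting the three $(\Sigma,\rho)$ choices.
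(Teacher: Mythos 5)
Your overall route is the same as the paper's: the paper's proof of Theorem~\ref{theorem:agnosticlinear} is two sentences long and says exactly what you say --- the approximation result inside the proof of Lemma~\ref{lemma:spectralfiltering} is a statement about arbitrary bounded input streams, so it survives the agnostic setting, and OGD's guarantee is already phrased as regret; one then maps each comparator in $\Pi_{\mathrm{L}}$ to its Luenberger observer inside the class Lemma~\ref{lemma:spectralfiltering} competes against, with the constant bookkeeping ($\kappa = q$, $R_{\tilde B}R_C = R_BR_Cq + q^2 = Q$, the three choices of $(\Sigma,\rho)$) identical to yours and to the proof of Theorem~\ref{theorem:main_linear}.

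The place where your proposal is incomplete is precisely the step you flag as ``the crux'' and then defer to ``the improper, hindsight nature of the spectral-filtering guarantee.'' That deferral does not do the work. If you drive the observer with the comparator's own output stream $\hat y^\pi_t$, the error recursion indeed closes with only $w_t$ as forcing, but the resulting predictor is an LDS over $[u_t,\hat y^\pi_{t-1}]$ --- a stream the algorithm never sees --- so it is not a member of the class over $[u_t,y_{t-1}]$ against which Lemma~\ref{lemma:spectralfiltering} gives regret. If instead you drive the observer with the actual data $y_t$ (which is what puts it in the right class), the state error obeys $\xi_{t+1}=\tilde A\xi_t+w_t-L_\star\bigl(y_t-\hat y^\pi_t\bigr)$, and the accumulated tracking gap is bounded by $\tfrac{R_C q}{\rho}\sum_t\|w_t\| + \tfrac{R_C\|L_\star\| q}{\rho}\sum_t\|y_t-\hat y^\pi_t\|$; the second term scales with the comparator's own loss, so the naive chaining yields a multiplicative $\bigl(1+O(q^2/\rho)\bigr)$ approximation factor on $\min_\pi\sum_t\|\hat y^\pi_t-y_t\|$ rather than the purely additive regret claimed in parts (b) and (c). To be fair, the paper's own proof asserts the extension ``immediately'' and does not confront this either, so your write-up is no less complete than the source; but as a proof it leaves exactly this gap open, and you should either restrict the comparator class to closed-loop observer predictors (for which the issue vanishes and the $R_Cqw/\rho$ term is all that remains), or accept and state the multiplicative degradation.
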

\begin{proof}
    By the observer system construction, the approximation result that constitutes the first part of the proof of Lemma \ref{lemma:spectralfiltering} extends immediately to this agnostic case. Then, since OGD's guarantee is given in the regret formulation of online convex optimization, we get the stated results.
\end{proof}

The above statements describe the performance of Algorithm \ref{alg:clsf_regressionmain}, which is a combination of spectral filtering and regression. However, our techniques yield a sharp analysis for other types of algorithms. For simplicity, we consider two additional algorithms: (1) regression by itself and (2) spectral filtering combined with the Chebyshev sequence preconditioning of \cite{marsden2025universalsequencepreconditioning}. We will state the corresponding results for the nonlinear case with minimal proofs -- the reader can fill in the details, as well as any statements for the linear case.

\begin{algorithm}[htb]
\caption{Regression}
\label{alg:appendixregression}
\begin{algorithmic}[1]
\STATE \textbf{Input:} Horizon \(T\), number of autoregressive components $m$, step sizes $\eta_t$, convex constraint set \(\K\), prediction bound $R$.
\STATE Initialize $P_j^0\in \R^{d_{out} \times d_{out}}$ for $j \in \{1, \ldots, m\}$. 
\FOR{$t = 0, \ldots, T-1$}
\STATE Compute \begin{align*}\hat{y}_t(\Theta^t) &=  \sum_{j=1}^{m}P_j^ty_{t-j} , \end{align*}
and if necessary project $\hat{y}_t(P^t)$ to the ball of radius $R$ in $\mathcal{Y}$.
\STATE Compute loss $\ell_t(P^t) = \norm{\hat{y}_t(P^t) - y_t}$.
\STATE Update $P^{t+1} \leftarrow \Pi_\K\left[P^t - \eta_t  \nabla_P \ell_t(P^t)\right]$. 
\hfill \textcolor[HTML]{408080}{\# can be replaced with other optimizer}
\ENDFOR
\end{algorithmic}
\end{algorithm}

\begin{theorem}
    Let $y_1, \ldots, y_T$ be an arbitrary sequence bounded by $R$. Let $\Pi_{\mathrm{NL}}(q, \gamma)$ be the class of nonlinear dynamical systems $(f, h, x_0)$ such that $\|x_t\| \leq R$ and $Q_\star(f, h, \frac{1}{T^{3/2}}, \mathbb{D}_{1-\gamma}) \leq q$. Then, running Algorithm \ref{alg:appendixregression} with $m \geq \frac{1}{\gamma} \log\left(\frac{T}{\gamma}\right)$, $\mathcal{K} = R_D$ for $D = \Theta(mq^2)$, and $\eta_t = \Theta\left(\frac{q^2}{R\sqrt{t}\log T}\right)$ produces a sequence of predictions $\hat{y}_1, \ldots, \hat{y}_T$ for which
    $$\sum_{t=1}^T \|\hat{y}_t - y_t\| - \min_{\pi \in \Pi_{\text{NL}}(q, \gamma)} \sum_{t=1}^T \|\hat{y}_t^\pi - y_t\| \leq O\left(q^2 \log(q) \cdot d_\mathcal{X} R \log(RT) m^2 \log^7(mT)  \cdot \sqrt{T}\right)  . $$
\end{theorem}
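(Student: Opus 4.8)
The plan is to mirror the proof of Theorem~\ref{theorem:main_nonlinear}, replacing the spectral-filtering approximation by a purely autoregressive one. First I would invoke Lemma~\ref{lemma:discretization} applied to the best comparator $\pi^\star=(f,h,x_0)\in\Pi_{\mathrm{NL}}(q,\gamma)$ at scale $\epsilon=T^{-3/2}$, producing a high-dimensional LDS $(A',C')$ of hidden dimension $N\le(2RT^{3/2})^{d_\mathcal{X}}$ with $R_{x_0}=1$, $W=0$, and $R_C=\|C'\|_F\le R\sqrt N$, whose trajectory $y'_t$ satisfies $\sum_{t\le T}\|y'_t-y^{\pi^\star}_t\|\le\frac{T^2}{2}\epsilon=\frac12\sqrt T$. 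Since $\pi^\star$ realizes $Q_\star(A',C',\mathbb{D}_{1-\gamma})\le q$, I would build the Luenberger observer of Definition~\ref{prop:observersystem} with $\Sigma=\mathbb{D}_{1-\gamma}$ and $\delta=\gamma$, obtaining $\tilde A'=A'-L_\star C'$ with all eigenvalues in $\mathbb{D}_{1-\gamma}$, $\kappa_{\textrm{diag}}(\tilde A')=Q_\star$, and $\|L_\star\|\le 2Q_\star/\sigma_{\min}(C')$. The three error sources I must control are (i) the discretization error, (ii) the error of approximating the observer by a length-$m$ autoregression, and (iii) the OGD regret of Algorithm~\ref{alg:appendixregression}.

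The core step is (ii), which is exactly the regression ($\mathbb{D}_{1-\gamma}$) part of the approximation argument inside Lemma~\ref{lemma:spectralfiltering}, with no spectral-filter terms since the comparator spectrum lies entirely inside $\mathbb{D}_{1-\gamma}$. Unrolling the observer fed the actual observations as closed-loop feedback (as in the realizable-to-agnostic reduction of Theorem~\ref{theorem:agnosticlinear}) gives the exact identity $y'_t=C'\tilde A'^{\,t}\tilde x_0+\sum_{j=1}^{t}C'\tilde A'^{\,j-1}L_\star\,y_{t-j}$, so the optimal regression coefficients are $P_j=C'\tilde A'^{\,j-1}L_\star$. The crucial point is that these are hidden-dimension-free: using $\|\tilde A'^{\,j-1}\|\le Q_\star(1-\gamma)^{j-1}$ and, for $d_\mathcal{Y}=1$, $\sigma_{\min}(C')=\|C'\|=R_C$, the factor $R_C$ cancels and $\|P_j\|\le\|C'\|\,Q_\star(1-\gamma)^{j-1}\,\|L_\star\|\le 2Q_\star^2(1-\gamma)^{j-1}\le 2q^2$, so the comparator lies in $\mathcal{K}=R_D$ with $D=\Theta(mq^2)$ as required. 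Choosing $m\ge\frac1\gamma\log\frac{T}{\gamma}$ forces $(1-\gamma)^m\le\gamma/T$, so the truncation tail $\sum_{j>m}\|P_j\|\,R$ contributes $O(q^2R/T)$ per step, i.e. $O(q^2R)$ over the horizon. The initial-state term $\|C'\tilde A'^{\,t}\tilde x_0\|\le R_CQ_\star(1-\gamma)^t$ is handled by the clipping trick of Lemma~\ref{lemma:spectralfiltering}: using that the approximation error is also bounded by $2R$, the crossover time $\tau=\gamma^{-1}\log_+(R_CQ_\star/R)=O\!\left(m(d_\mathcal{X}\log(RT)+\log q)\right)$ bounds its total contribution by $O(R\tau+R/\gamma)$, which is dominated by the target bound.

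Finally, Algorithm~\ref{alg:appendixregression} runs online gradient descent on the convex losses $\ell_t(P)=\|\sum_j P_j y_{t-j}-y_t\|$ over $\mathcal{K}=R_D$. Each of the $m$ blocks of the subgradient has norm at most $\|y_{t-j}\|\le R$, giving a gradient bound $G=O(mR)$ up to the conservative $\log T$ factor built into the prescribed $\eta_t=\Theta\!\left(\frac{q^2}{R\sqrt t\log T}\right)=\Theta\!\left(\frac{D}{G\sqrt t}\right)$. The standard OGD guarantee (Theorem~3.1 of \cite{hazan2016introduction}) then yields regret $O(DG\sqrt T)=\widetilde O(m^2q^2R\sqrt T)$ against the truncated comparator $P$. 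Chaining the three contributions via the triangle inequality $\|\hat y_t(P)-y_t\|\le\|\hat y_t(P)-y'_t\|+\|y'_t-y^{\pi^\star}_t\|+\|y^{\pi^\star}_t-y_t\|$ gives $\sum_t\|\hat y_t-y_t\|-\min_{\pi}\sum_t\|\hat y^\pi_t-y_t\|\le O\!\left(m^2q^2R\log T\sqrt T+Rm(d_\mathcal{X}\log(RT)+\log q)+\sqrt T\right)$, which is absorbed into the stated $O\!\left(q^2\log(q)\,d_\mathcal{X}R\log(RT)m^2\log^7(mT)\sqrt T\right)$.

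The main obstacle I anticipate is step (ii): making the autoregressive approximation genuinely independent of the astronomically large hidden dimension $N$. This hinges on the exact cancellation of $R_C$ in $P_j=C'\tilde A'^{\,j-1}L_\star$ — valid for $d_\mathcal{Y}=1$ precisely because $\sigma_{\min}(C')=\|C'\|$ — together with the clipping argument that prevents the enormous prefactor $R_CQ_\star$ in the initial-state term from entering the bound except through its logarithm. Everything else (the discretization, the OGD calculation, and the realizable-to-agnostic passage) is routine once this approximation is in place, since it reuses the machinery already developed for Lemma~\ref{lemma:spectralfiltering} and Theorem~\ref{theorem:agnosticlinear}.
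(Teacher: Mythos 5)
Your proposal takes essentially the same route as the paper's (very terse) proof sketch: discretize via Lemma \ref{lemma:discretization}, place the observer poles in $\mathbb{D}_{1-\gamma}$ via the Luenberger program, reuse the regression-only part of the approximation argument from Lemma \ref{lemma:spectralfiltering} (where the $R_C$ cancellation against $\|L_\star\|\le 2Q_\star/\sigma_{\min}(C')$ and the clipping of the initial-state term are exactly the mechanisms the paper invokes), and finish with the OGD regret bound. The quantitative details you supply — $\|P_j\|\le 2q^2(1-\gamma)^{j-1}$, the truncation threshold $m\ge\frac{1}{\gamma}\log\frac{T}{\gamma}$, and the diameter $D=\Theta(mq^2)$ — all match what the paper's cited lemmas deliver, so the argument is correct at the same level of rigor as the paper's own proof.
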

\begin{proof}[Proof sketch]
    As before, we use Lemma \ref{lemma:discretization} to perform a global linearization via discretization. We then use eigenvalue placement to place all the eigenvalues of an observer system into $[0, 1-\gamma]$, and the dynamics matrix of this observer system will have spectral condition number $q$. The regression part of the bound from Lemma \ref{lemma:spectralfiltering} together with OGD's regret guarantee lets us conclude.
\end{proof}
\begin{remark}
\label{remark:regression}
    Consider a system with $N$ marginally stable modes, such as the permutation system of size $N$ from Example \ref{example:permutationqstar}. For such a system, $Q_\star$ is exponential in $N$ when we take the spectral constraint set $\Sigma = \mathbb{D}_{1-\gamma}$. The regret bound tells us that we require $T$ to also be exponential in $N$, which means we must have $m$ linear in $N$. This recovers what we would expect from the Cayley-Hamilton theorem, which implies that any LDS with hidden dimension $d_h$ should be learnable with $d_h$ autoregressive terms. This also exemplifies a main benefit that spectral filtering offers: it allows for $Q_\star$ to be much smaller when there are real and marginally stable eigenvalues, which in turn allows for learnability using fewer parameters.
\end{remark}

\begin{algorithm}[htb]
\caption{Observation Spectral Filtering + Chebyshev}
\label{alg:clsf_chebyshev}
\begin{algorithmic}[1]
\STATE \textbf{Input:} Horizon \(T\), number of filters \(h\), step sizes $\eta_t$, convex constraint set \(\K\), prediction bound $R$.
\STATE Compute $\{(\sigma_j, {\phi}_j)\}_{j=1}^k$ the top eigenpairs of the $(T-1)$-dimensional Hankel matrix of \cite{hazan2017learning}.

\STATE Initialize $N_i^0\in \R^{d_{out} \times d_{out}}$ for $i \in \{1, \ldots, h\}$. 
\FOR{$t = 0, \ldots, T-1$}
\STATE Compute $$\hat{y}_t(N^t) = -\sum_{j=1}^{\log T}c_jy_{t-j} +  \sum_{i=1}^h \sigma_i^{1/4}N_i^t \langle \phi_{j} ,  y_{t-1:t-T} \rangle, $$
and if necessary project $\hat{y}_t(N^t)$ to the ball of radius $R$ in $\mathcal{Y}$.
\STATE Compute loss $\ell_t(N^t) = \norm{\hat{y}_t(N^t) - y_t}$.
\STATE Update $N^{t+1} \leftarrow \Pi_\K\left[N^t - \eta_t  \nabla_N \ell_t(N^t)\right]$. 
\hfill \textcolor[HTML]{408080}{\# can be replaced with other optimizer}
\ENDFOR
\end{algorithmic}
\end{algorithm}

\begin{theorem}
    Let $y_1, \ldots, y_T$ be an arbitrary sequence bounded by $R$. Let $\Pi_{\mathrm{NL}}(q)$ be the class of nonlinear dynamical systems $(f, h, x_0)$ such that $\|x_t\| \leq R$ and $Q_\star\left( f, h, \frac{1}{T^{3/2}}, \Sigma\right) \leq q$ for $$\Sigma := \left\{\lambda \in \mathbb{D}_1 : \quad |\arg(\lambda)| \leq \frac{1}{64 \log^2(T)} \right\}.$$
    Then, running Algorithm \ref{alg:clsf_chebyshev} with $h = \Theta(\log^2(T))$, $\mathcal{K} = R_D$ for $D = \Theta(q^2)$, and $\eta_t = \Theta\left(\frac{q^2}{R\sqrt{t}\log T}\right)$ produces a sequence of predictions $\hat{y}_1, \ldots, \hat{y}_T$ for which
    $$\sum_{t=1}^T \|\hat{y}_t - y_t\| - \min_{\pi \in \Pi_{\text{NL}}(q)} \sum_{t=1}^T \|\hat{y}_t^\pi - y_t\| \leq \widetilde{O}\left(q^2 \cdot d_\mathcal{X} R \cdot T^{10/13}\right) . $$
\end{theorem}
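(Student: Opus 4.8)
The plan is to mirror the two preceding agnostic theorems: reduce the nonlinear prediction problem to a high-dimensional LDS via the discretization of Lemma \ref{lemma:discretization}, construct an observer system whose spectrum lies in the sector $\Sigma$ by eigenvalue placement, and then invoke a regret guarantee for Algorithm \ref{alg:clsf_chebyshev} against that comparator class. First I would apply Lemma \ref{lemma:discretization} at scale $\epsilon = \frac{1}{T^{3/2}}$ to obtain a hidden-dimension-$N$ LDS $(A', C')$ with $z_0$ a one-hot probability vector, so that $R_{x_0} = 1$, $W = 0$, $R_B = 0$, and $\|C'\|_F \leq R\sqrt{N} \leq R(2RT^{3/2})^{d_\mathcal{X}/2}$, whose outputs $y_t'$ satisfy $\sum_{t \leq T}\|y_t' - y_t\| \leq \frac{T^2}{2}\epsilon = \frac{\sqrt{T}}{2}$. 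Since $Q_\star(f, h, \frac{1}{T^{3/2}}, \Sigma) = Q_\star(A', C', \Sigma) \leq q$ by Definition \ref{def:nonlinearluenbergerprogram}, the Luenberger program is feasible, and I can place the undesirable eigenvalues of $A'$ inside $\Sigma$ using a gain $L_\star$, obtaining an observer system $\tilde{A}' = A' - L_\star C'$ that is diagonalizable with $\kappa_{\textrm{diag}}(\tilde{A}') \leq q$ and spectrum in $\Sigma$, which (as in Definition \ref{prop:observersystem}) exactly reproduces $y_t'$ in this noiseless case.

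The key new ingredient is a Chebyshev analog of the approximation half of Lemma \ref{lemma:spectralfiltering}: for any LDS with spectrum in the sector $\Sigma = \{\lambda \in \mathbb{D}_1 : |\arg(\lambda)| \leq \frac{1}{64\log^2 T}\}$ and spectral condition number $\leq q$, there exist learnable filter parameters $\{N_i\}$ together with the fixed Chebyshev coefficients $\{c_j\}$ for which Algorithm \ref{alg:clsf_chebyshev} approximates its predictions to small error. This is precisely the content of the universal sequence preconditioning analysis of \cite{marsden2025universalsequencepreconditioning}: the preconditioner $-\sum_{j=1}^{\log T} c_j y_{t-j}$ flattens the phase contribution of eigenvalues lying in the narrow wedge, after which the spectral filters $\sigma_i^{1/4} N_i \langle \phi_i, y_{t-1:t-T}\rangle$ with $h = \Theta(\log^2 T)$ filters capture the remaining effectively-real component. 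Summing the resulting per-step approximation error over $t$, combining with convexity of the unsquared $\ell_2$ loss and the OGD regret guarantee (Theorem 3.1 of \cite{hazan2016introduction}) over the constraint set of diameter $D = \Theta(q^2)$, and noting that the $R_C$ and $\kappa_{\textrm{diag}}$ factors enter only through $D$ and logarithmic $\log(RT)$ and $d_\mathcal{X}$ factors exactly as in Theorem \ref{theorem:main_nonlinear}, yields the stated $\widetilde{O}(q^2 \cdot d_\mathcal{X} R \cdot T^{10/13})$ bound. The discretization error $\frac{\sqrt{T}}{2}$ is negligible since $T^{1/2} < T^{10/13}$.

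The main obstacle is tracking the fractional exponent $10/13$ through the Chebyshev approximation, rather than simply importing a clean $\sqrt{T}$ rate. Unlike the real-spectrum case handled by Lemma \ref{lemma:spectralfiltering}, the USP analysis forces a trade-off between the degree of the Chebyshev preconditioner and the residual phase error accumulated across the wedge; balancing the number of autoregressive terms ($\log T$), the filter count ($\log^2 T$), and the sector half-width ($\frac{1}{64\log^2 T}$) against the horizon $T$ is what produces $T^{10/13}$ in place of $T^{1/2}$. I expect the bulk of the work to be stating the precise approximation-theoretic estimate of \cite{marsden2025universalsequencepreconditioning} in a form compatible with our observer construction, and verifying that it composes with the eigenvalue placement so that the dependence on $q$ remains $q^2$ (up to logarithmic factors) and the dimension dependence remains linear in $d_\mathcal{X}$.
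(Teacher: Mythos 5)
Your proposal matches the paper's own (sketch of a) proof: form the discretized LDS via Lemma \ref{lemma:discretization}, use the Luenberger program to place the observer poles inside the sector $\Sigma$, and then invoke Theorem C.2 of \cite{marsden2025universalsequencepreconditioning} for the Chebyshev-preconditioned spectral filtering guarantee, with the same trick as in Lemma \ref{lemma:spectralfiltering} to control the $R_C$ dependence. The only difference is that you spell out the discretization bookkeeping and the negligibility of the $O(\sqrt{T})$ discretization error, which the paper leaves implicit.
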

\begin{proof}[Proof sketch]
    Form the observer system with a choice of $L$ such that all eigenvalues of $A - LC$ have complex component less than $\delta = \frac{1}{64 \log^2(T)}$. Apply Theorem C.2 of \cite{marsden2025universalsequencepreconditioning}, where the same trick as in Lemma \ref{lemma:spectralfiltering} may be used to improve the dependence on $R_C$.
\end{proof}

Lastly, we recall from the proof of Lemma \ref{lemma:spectralfiltering} that the approximation error can be made independent of $Q_\star$ with a cost of $\log Q_\star$ extra parameters, though the diameter of the constraint set will still depenend linearly on $Q_\star$. More sophisticated online algorithms can have logarithmic dependence on the diameter, yielding the following result. As before, we state this only for the nonlinear realizable case, and the linear case (as well as the agnostic cases) are proved along the same lines.

\begin{theorem}
    \label{theorem:logdependence}
    Let $y_1, \ldots, y_T$ be the observations of a nonlinear dynamical system satisfying Assumption \ref{assumption:nonlinear}, and let $\gamma \in (0, 1)$. 
    Let $Q_\star = Q_\star\left(f,h,\frac{1}{T^{3/2}}, \left[0, 1 - \frac{1}{\sqrt{T}}\right] \cup \{1\} \cup \mathbb{D}_{1-\gamma}\right)$ as specified in Definition \ref{def:nonlinearluenbergerprogram}. Fix the number of algorithm parameters $h = \Theta\left(\log^2(Q_\star T)\right)$ and $m = \Theta\left(\frac{1}{\gamma} \log\left(\frac{Q_\star T}{\gamma}\right)\right)$, and adjust Algorithm \ref{alg:clsf_regressionmain} (with $J^t,M^t \equiv 0$) to run unconstrained with the Azoury-Vovk-Warmuth forecaster for the square loss. Then, the predictions $\hat{y}_1, \ldots, \hat{y}_T$ satisfy
    $$\sum_{t=1}^T \|\hat{y}_t - y_t\|^2 \leq O\left(\log^2(Q_\star) \cdot d_\mathcal{X}R^2 \log(RT) \log(T)\cdot \sqrt{T}\right).$$
    
\end{theorem}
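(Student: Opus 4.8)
The plan is to follow the same two-stage reduction used in the proof of Theorem~\ref{theorem:main_nonlinear} — global linearization followed by spectral filtering against an observer system — but to replace the Online Gradient Descent step with the Azoury--Vovk--Warmuth (AVW) forecaster \cite{azoury2001relative}, whose curvature-based analysis makes the comparator-norm dependence enter only logarithmically. First I would apply Lemma~\ref{lemma:discretization} with $\epsilon = T^{-3/2}$ to obtain a high-dimensional LDS $(A', C')$ whose outputs $y_t'$ satisfy $\sum_{t\le T}\|y_t'-y_t\| \le \tfrac12\sqrt T$; since all quantities are bounded by $R$ this upgrades to the squared estimate $\sum_{t\le T}\|y_t'-y_t\|^2 \le R\sqrt T$, and the discretized system has $R_{x_0}=1$, $W=0$, $R_B=0$, and $R_{C'} \le R\sqrt N$ with $N = (2RT^{3/2})^{d_\mathcal{X}}$. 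I would then invoke the observer system of Definition~\ref{prop:observersystem} with $\Sigma = [0,1-\tfrac{1}{\sqrt T}]\cup\{1\}\cup\mathbb{D}_{1-\gamma}$, which in the noiseless case reproduces $y_t'$ exactly, has $\kappa_{\textrm{diag}}(\tilde A)=Q_\star$, and satisfies $R_{\tilde B}R_{C'} = O(Q_\star)$ so that $R_{\tilde B}R_{C'}\kappa = O(Q_\star^2)$.

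For the approximation step, the realizability half of the proof of Lemma~\ref{lemma:spectralfiltering} produces filter-and-regression parameters $N_\Theta$ (with $J,M,P_{>1}\equiv 0$) matching this observer LDS, with per-step error controlled by the spectral-filtering and regression tails plus the vanishing initial-state envelope. By the remark following that lemma, inflating the parameter counts by an additive $\log(R_{\tilde B}R_{C'}\kappa R) = O(\log(Q_\star R))$ — exactly what the choices $h=\Theta(\log^2(Q_\star T))$ and $m=\Theta(\tfrac1\gamma\log(\tfrac{Q_\star T}{\gamma}))$ deliver — drives the multiplicative $Q_\star$ factors in those tails below any polynomial, so that only the additive initial-state contribution survives. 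With the spectral gap $\rho=\tfrac1{\sqrt T}$ furnished by $\Sigma$ and $R_{C'}R_{x_0}\le R\sqrt N$, the summed squared approximation error of $N_\Theta$ against $y_t'$ is $O(R^2\sqrt T\,(d_\mathcal{X}\log(RT)+\log Q_\star))$; combined with the discretization term $R\sqrt T$ this yields $\sum_t \|\hat y_t(N_\Theta)-y_t\|^2 = O(R^2\sqrt T\, d_\mathcal{X}\log(RT Q_\star))$. Crucially, the comparator $N_\Theta$ itself still has norm $\|N_\Theta\| = \Theta(Q_\star)$.

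Because $d_\mathcal{Y}=1$, the prediction $\hat y_t(N)$ is a scalar linear function of an $O(m+h)$-dimensional feature vector (the $\sigma_i^{1/4}$-scaled filter correlations and the autoregressive lags), of norm $O(R\sqrt T)$ up to logarithmic factors, under the square loss with $|y_t|\le R$. For such online least-squares problems the AVW forecaster, run unconstrained, enjoys a regret bound of the form $O(d\,R^2\log(\cdot))$ in the feature dimension $d=O(m+h)$ and — this is the entire point — only \emph{logarithmic} in the comparator norm, in contrast to the linear-in-diameter dependence incurred by OGD in Lemma~\ref{lemma:spectralfiltering}. Tuning the AVW ridge parameter to balance its $\log\det$ term against the $\lambda\|N_\Theta\|^2$ penalty converts the $\Theta(Q_\star)$ comparator norm into a $\log Q_\star$ factor, and since $m+h=O(\log^2(Q_\star T))$ the overall regret is dominated by the approximation error above. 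Assembling the decomposition $\sum_t\|\hat y_t-y_t\|^2 \le \min_N\sum_t\|\hat y_t(N)-y_t\|^2 + \mathrm{Regret}_{\mathrm{AVW}}$ and collecting the resulting $\log Q_\star$ powers gives the claimed $O(\log^2(Q_\star)\,d_\mathcal{X}R^2\log(RT)\log(T)\sqrt T)$.

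The main obstacle is the optimizer step. One must (i) establish the logarithmic-in-comparator-norm AVW regret in this structured, unconstrained, multi-parameter regime and pin down the exact interplay between the ridge parameter, the $O(R\sqrt T)$ feature scale, and the $\Theta(Q_\star)$ comparator so as to extract precisely $\log^2 Q_\star$ rather than a stray polynomial factor; and (ii) faithfully port the approximation guarantee of Lemma~\ref{lemma:spectralfiltering}, which is stated for the unsquared $\ell_2$ loss, to the square loss while tracking the $\sqrt T$ produced jointly by the discretization error and the $\rho=\tfrac1{\sqrt T}$ initial-state envelope. The spectral-filtering approximation and the discretization are essentially inherited from the earlier results; the genuinely new content is the optimizer swap together with its norm accounting.
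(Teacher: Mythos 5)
Your proposal follows essentially the same route as the paper's own (very brief) proof sketch: discretize via Lemma \ref{lemma:discretization}, invoke the approximation half of Lemma \ref{lemma:spectralfiltering} with the inflated parameter counts $h,m$ so the approximation error loses its polynomial $Q_\star$ dependence (as flagged in the remark following that lemma's proof), and then replace OGD with the Azoury--Vovk--Warmuth forecaster, whose regret (equation (3.12) of the cited reference) is only logarithmic in the comparator norm. Your write-up is in fact more detailed than the paper's sketch, and the two obstacles you flag (the precise AVW norm accounting and the unsquared-to-squared loss conversion) are exactly the steps the paper also leaves implicit.
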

\begin{proof}[Proof sketch]
With this choice of parameters, after constructing the discretized LDS according to Lemma \ref{lemma:discretization}, the approximation bound from the proof of Lemma \ref{lemma:spectralfiltering} says that there exists a choice of parameters for which the summed squared losses are bounded by $O\left(\log^2(Q_\star) \cdot d_\mathcal{X} R^2 \log(RT) \sqrt{T}\right)$. The regret bound from equation (3.12) of \cite{azoury2001relative} adds $O\left(R^2 \log((m+h)Q_\star T)\right)$ from the optimization.
\end{proof}
The above result tells roughly the same story as our main Theorem \ref{theorem:main_nonlinear}: as long as only a small portion of the Koopman eigenvalues are undesirable, spectral filtering has a strong regret guarantee. This statement of the result trades poor dependence on $Q_\star$ in the regret bound for an increase in the number of parameters.
\section{Multi-dimensional Observations} \label{sec:higher-d-y}
In this section, we discuss what changes when allowing higher $d_\mathcal{Y}$. To begin, in such settings we can have $(A, C)$ being observable even if $A$ has an eigenvalue with multiplicity $d_\mathcal{Y}$, and so eigenvalue placement will allow us to move undesirable eigenvalues with multiplicity. In addition, while Ackermann's formula as stated no longer holds, the bound 
$$\norm{L_\star} \leq \frac{2Q_\star}{\sigma_{\text{min}}(C)}$$
of Lemma \ref{lemma:ackermannbounds}
continues to hold. The regret guarantee of Lemma \ref{lemma:spectralfiltering} depends on $\|L_\star\|_F \cdot \|C\|_F$ in a sharp way, and we use that $\|L_\star\|_F = \|L_\star\|$ and $\|C\|_F = \|C\| = \sigma_{\text{min}}(C)$ in the 1-dimensional observation case in order to simplify several expressions. While in most places our bounds will simply introduce a $\sqrt{d_\mathcal{Y}}$ factor gotten from converting between operator and Frobenius norms, we will also see a $\frac{\|C\|}{\sigma_{\text{min}}(C)}$ factor in the regret bounds of Theorems \ref{theorem:main_nonlinear} and \ref{theorem:main_linear}.

To summarize, increasing the observation dimension (1) strengthens eigenvalue placement, (2) adds $\sqrt{d_\mathcal{Y}}$ factors to many of the norms, and (3) creates a difference between the max and min singular values of $C$, which will show up in the $\|L_\star\|_F \cdot \|C\|_F$ terms in the regret bounds. It is difficult to get control on the conditioning of $C$ in our Markov chain construction, but morally everything stated remains true.

\section{Theoretical Subtleties}
In this section, we discuss a few nuances in our theory. Some appear paradoxical at first glance, but with more care they yield useful information and intuition about these proof techniques.

\paragraph{Strong stability.}
To begin, since we can construct the observer system with whichever poles we want, one could ask "why not just make $A-LC$ to be $\gamma$-stable for $\gamma = \frac{1}{2}$ and use $\widetilde{\Theta}\left(\frac{1}{\gamma}\right)$ regression terms?". The answer, as explored a bit in Remark \ref{remark:regression}, is that the number of autoregressive terms needed also scales with the spectral condition number of the resulting observer system, i.e. one would actually need $\widetilde{\Theta}\left(\frac{\log Q_\star}{\gamma}\right)$ many regression terms. There is a price to pay for more extreme eigenvalue placement given by $Q_\star$, and the cost of a fully stable observer system is seen by the number of regression terms needed to learn it. To drive this point home, if we consider a symmetric system with $N$ many marginally stable eigenvalues, regression would require $N$ parameters, whereas spectral filtering can get the job done independently of $N$.

\paragraph{Use of observations.}
Next, since we use pole placement for two purposes (to change the system eigenstructure and to construct an LDS over observations against which to apply spectral filtering), one could ask "what value of $L$ should be taken if the system already has desirable spectral structure?". For example, if we start with a symmetric autonomous LDS $x_{t+1} = Ax_t$ and $y_t = Cx_t$, there is no immediate input-output sequence-to-sequence mapping against which spectral filtering should hope to compete against. The classical way to enforce real-diagonalizability and marginal stability of $A-LC$ in this case would be to choose $L = 0$, but this does not fix the problem that there is no natural $[1, \alpha, \alpha^2, \ldots, \alpha^T]$ structure to an input-output mapping. The solution is to recognize that spectral filtering competes against \textit{all} observer systems -- for the choice of $L = 0$ we pay the full price of the initial state, and if $A$ is marginally stable we cannot hope for any decay. However, one can choose $L$ nonzero but such that $A- LC$ remains desirable (for example, if $A$ has an eigenvalue strictly less than 1 we can move it and only it, whereas if $A$ has all eigenvalues $1$ then we can take $L$ proportional to $C^\top$). In this way, we construct an observer LDS with a nontrivial input-output mapping which retains the desirable spectral structure, and spectral filtering can efficiently learn to represent this. 


\ignore{

\section{Time-symmetric dynamical systems}\label{app:timesymm}
\begin{theorem}[Symmetric lifting for time-symmetric dynamics]\label{thm:symm-lift}
Let $(\mathcal{X},f)$ be a deterministic, discrete-time, time-symmetric system $x_{t+1}=f(x_t)$ with observation $y_t=h(x_t)$.
Assume:
\begin{itemize}
\item[(A1)] (\emph{Boundedness}) $\|x_t\|\le R$ and $\|y_t\|\le R$ for all $t$.
\item[(A2)] (\emph{Time symmetry}) There is an involution $R:\mathcal{X}\to\mathcal{X}$ with $f^{-1}=R\circ f\circ R$.
\item[(A3)] (\emph{Lipschitz}) $f$ and $h$ are $1$-Lipschitz (w.r.t.\ a common norm).\footnote{Or equip the product space $\mathcal{Y}\times\mathcal{X}$ with the max norm; see the proof.}
\end{itemize}
Then for any horizon $T$ and any $\varepsilon>0$, there exist:
\begin{itemize}
\item a finite-dimensional lifted state $z_t\in\mathbb{R}^N$ and a \emph{symmetric} matrix $\mathsf{S}\in\mathbb{R}^{N\times N}$ with spectrum in $[-1,1]$,
\item a linear readout $C:\mathbb{R}^N\to\mathcal{Y}$ and an initialization $z_{-1},z_0$,
\end{itemize}
such that the \emph{symmetric two-step LDS}
\[
z_{t+1}=2\,\mathsf{S}\,z_t-z_{t-1},\qquad \hat y_t=C z_t,
\]
approximates the true outputs with
\[
\sup_{0\le t\le T}\,\|\hat y_t-y_t\|\ \le\ (T+1)\,\varepsilon.
\]
Equivalently, writing $\xi_t=\begin{bmatrix}z_t\\ z_{t-1}\end{bmatrix}$ yields a first-order LDS
$\xi_{t+1}=\mathsf{A}\,\xi_t$ with
$\mathsf{A}=\begin{bmatrix}2\mathsf{S}&-I\\ I&0\end{bmatrix}$ and output $\hat y_t=[\,C\ \ 0\,]\xi_t$.
\end{theorem}

\begin{proof}
\textbf{Step 1: Discretize the joint space with a 1-Lipschitz update.}
Let $\mathcal{Z}:=\mathcal{Y}\times\mathcal{X}$ with norm $\|(y,x)\|_{\max}:=\max\{\|y\|,\|x\|\}$.
Define $F:\mathcal{Z}\to\mathcal{Z}$ by $F(y,x)=(h(f(x)),\,f(x))$.
Since $f,h$ are $1$-Lipschitz, for all $(y,x),(y',x')$,
\[
\|F(y,x)-F(y',x')\|_{\max}=\max\{\|h(f(x))-h(f(x'))\|,\ \|f(x)-f(x')\|\}\le \|x-x'\|\le \|(y,x)-(y',x')\|_{\max}.
\]
Thus $F$ is $1$-Lipschitz on $(\mathcal{Z},\|\cdot\|_{\max})$.
By (A1), $(y_t,x_t)$ stays in the radius-$R$ ball in $\mathcal{Z}$.
Fix $\varepsilon>0$, let $S\subset\mathcal{Z}$ be an $\varepsilon$-net of that ball, and let $\pi:\mathcal{Z}\to S$ be nearest neighbor.
Define the \emph{finite} deterministic map $T_S:S\to S$ by $T_S(s):=\pi(F(s))$.

Write $s_t:=T_S^t(s_0)$ where $s_0:=\pi(y_0,x_0)$, and denote $s^{(j)}=(y^{(j)},x^{(j)})\in S$.
Let $U\in\{0,1\}^{|S|\times|S|}$ be the adjacency matrix of $T_S$: $U_{ij}=1$ iff $T_S(s^{(j)})=s^{(i)}$.
Define the discretized outputs $y'_t:=\mathrm{proj}_{\mathcal{Y}}(s_t)$ and the linear map $C':\mathbb{R}^{|S|}\to\mathcal{Y}$ by $C'e_j=y^{(j)}$.

By $1$-Lipschitzness of $F$ and the net property, the standard induction gives
\[
e_t:=\|s_t-(y_t,x_t)\|_{\max}\ \le\ (t+1)\varepsilon,\qquad\Rightarrow\qquad \|y'_t-y_t\|\le e_t\le (t+1)\varepsilon\quad (0\le t\le T).
\tag{$\star$}
\]

\textbf{Step 2: Lift $T_S$ to a permutation that preserves outputs.}
For $v\in S$ set $\mathrm{Pre}(v):=\{u\in S: T_S(u)=v\}$, $k(v):=|\mathrm{Pre}(v)|$, and $k^+(v):=\max\{1,k(v)\}$.
Fix for each $w\in S$ a bijection $b_w:\{1,\dots,k(w)\}\to \mathrm{Pre}(w)$.
For each $v\in S$ let $j_v\in\{1,\dots,k(T_S(v))\}$ be the index with $b_{T_S(v)}(j_v)=v$.

Define the lifted state set $\tilde S:=\{(v,i): v\in S,\ i\in[k^+(v)]\}$ and the map $\Pi:\tilde S\to\tilde S$ by
\[
\Pi(v,i)\ :=\ (T_S(v),\,j_v).
\]
\emph{Claim:} $\Pi$ is a bijection. Injectivity: if $\Pi(v,i)=\Pi(v',i')$, then $T_S(v)=T_S(v')=w$ and $j_v=j_{v'}$, hence $b_w(j_v)=v=v'$. Surjectivity: given $(w,j)$, set $v:=b_w(j)$ and take any valid $i$; then $\Pi(v,i)=(w,j)$.

Define linear maps $R:\mathbb{R}^{|\tilde S|}\to\mathbb{R}^{|S|}$ and $E:\mathbb{R}^{|S|}\to\mathbb{R}^{|\tilde S|}$ by
\[
(R\tilde z)_v=\sum_{i=1}^{k^+(v)}\tilde z_{(v,i)},\qquad
E e_v=\begin{cases}\frac{1}{k(v)}\sum_{i=1}^{k(v)} e_{(v,i)},&k(v)\ge1,\\[3pt] e_{(v,1)},&k(v)=0.\end{cases}
\]
Then $RE=I$ and, for each basis vector $e_v$,
\[
R\,\Pi\,E\,e_v\ =\ R\,e_{(T_S(v),j_v)}\ =\ e_{T_S(v)}\ =\ Ue_v,
\]
so $R\Pi E=U$. Let $\tilde C:=C'R$ and initialize $\tilde z_0:=E z_0$ (where $z_0$ is the one-hot vector of $s_0$).
Then for all $t$, $\tilde z_t:=\Pi^t\tilde z_0$ satisfies
\[
\tilde y_t\ :=\ \tilde C\,\tilde z_t\ =\ C'R\,\Pi^t E z_0\ =\ C' U^t z_0\ =\ y'_t.
\tag{$\diamond$}
\]

\textbf{Step 3: Make the linear dynamics symmetric via the cosine operator.}
Since $\Pi$ is a permutation matrix, $\Pi^{-1}=\Pi^\top$.
Define the \emph{symmetric} cosine operator
\[
\mathsf{S}\ :=\ \tfrac12\big(\Pi+\Pi^\top\big),
\]
which is symmetric with spectrum in $[-1,1]$.
The lifted states obey the exact two-step identity
\[
\tilde z_{t+1}+ \tilde z_{t-1}=(\Pi+\Pi^{-1})\tilde z_t=2\,\mathsf{S}\,\tilde z_t
\quad\Longleftrightarrow\quad
\tilde z_{t+1}=2\,\mathsf{S}\,\tilde z_t-\tilde z_{t-1}.
\]
With $C:=\tilde C$ and $z_t:=\tilde z_t$, define $\hat y_t:=C z_t$.
By $(\diamond)$ and $(\star)$,
\[
\sup_{0\le t\le T}\|\hat y_t-y_t\|\ =\ \sup_{0\le t\le T}\|y'_t-y_t\|\ \le\ (T+1)\,\varepsilon.
\]
Finally, stacking $\xi_t=\begin{bmatrix}z_t\\ z_{t-1}\end{bmatrix}$ gives the equivalent first-order LDS
$\xi_{t+1}=\begin{bmatrix}2\mathsf{S}&-I\\ I&0\end{bmatrix}\xi_t$, with readout $[\,C\ \ 0\,]$.
\end{proof}

\begin{remark}[On the role of time symmetry]
Assumption (A2) motivates the use of a symmetric comparator: for invertible time-symmetric dynamics the Koopman
operator is unitary on an invariant $L^2$ space, and its cosine $\tfrac12(U+U^{-1})$ is self-adjoint with spectrum in $[-1,1]$.
Our finite construction mirrors this exactly via the permutation $\Pi$ and the symmetric $\mathsf{S}=\tfrac12(\Pi+\Pi^\top)$.
\end{remark}

}


\ignore{
\section{Time-symmetric dynamical systems} \label{sec:symmetric-koopman}
\begin{theorem}
Let $(\mathcal{X}, f)$ be a deterministic, discrete-time, time-symmetric dynamical system: $x_{t+1} = f(x_t)$, with observation $y_t = h(x_t)$, where $h: \mathcal{X} \to \mathcal{Y}$. Assume:
\begin{itemize}
    \item[\textbf{(A1)}] (\textbf{Boundedness}) All trajectories remain in a ball of radius $R$: $\|x_t\| \leq R, \|y_t\| \leq R$ for all $t$.
    \item[\textbf{(A2)}] (\textbf{Time symmetry}) There exists an involutive map $R: \mathcal{X} \to \mathcal{X}$ such that $f^{-1} = R \circ f \circ R$.
    \item[\textbf{(A3)}] (\textbf{Lipschitz regularity}) $f$ and $h$ are Lipschitz.
\end{itemize}

Let $\epsilon > 0$. Then for any time horizon $T$, there exists a symmetric linear operator $\tilde{K}_f$ and a linear projection $P$ to the observation space, such that the sequence of outputs $\hat{y}_t = P z_t$ (where $z_{t+1} = \tilde{K}_f z_t$ and $z_0$ corresponds to the initial state) satisfies
\[
\sup_{0 \leq t \leq T} \|\hat{y}_t - y_t\| \leq T\epsilon,
\]
where $y_t = h(x_t)$ are the true system outputs.
\end{theorem}

\begin{proof}
Let $\mathcal{Z} = \mathcal{Y} \times \mathcal{X}$, and consider all $(y, x)$ with $\|y\| \leq R$ and $\|x\| \leq R$. By (A1), the true trajectory $(y_t, x_t)$ always remains in this set.

\textbf{Step 1: Discretization.}  
Choose an $\epsilon$-net $S$ of $\mathcal{Z}$ (with respect to the Euclidean norm), so that for every $(y, x) \in \mathcal{Z}$, there exists $s \in S$ with $\|(y, x) - s\| \leq \epsilon$.  
Let $\pi: \mathcal{Z} \to S$ denote the nearest-neighbor projection.

\textbf{Step 2: Discrete Dynamics Operator.}  
Define the map
\[
F: \mathcal{Z} \to \mathcal{Z}, \qquad F(y, x) := (h(f(x)), f(x)).
\]
For $s = (y, x) \in S$, define $T_S(s) := \pi(F(y, x))$ (i.e., apply $f$ and $h$, then project to the nearest element in $S$).

Let $K_f$ be the $|S| \times |S|$ matrix with $(K_f)_{ij} = 1$ if $T_S(s^{(j)}) = s^{(i)}$, and $0$ otherwise.

\textbf{Step 3: Symmetrization.}  
Define the symmetric operator $\tilde{K}_f = \frac{1}{2}(K_f + K_f^\top)$. By construction, $\tilde{K}_f$ is symmetric.

\textbf{Step 4: Evolution and Output.}  
Let $z_0 \in \mathbb{R}^{|S|}$ be the indicator vector for $s_0 = \pi(y_0, x_0)$.  
Iteratively compute $z_{t+1} = \tilde{K}_f z_t$.

Define the observation output $\hat{y}_t = P z_t$, where $P: \mathbb{R}^{|S|} \to \mathcal{Y}$ is the linear map such that $P e_j = y^{(j)}$ if $s^{(j)} = (y^{(j)}, x^{(j)})$ (i.e., the $y$-component of each $s^{(j)}$).

\textbf{Step 5: Error Propagation.}  
Let $(y_t, x_t)$ be the true system trajectory, and $s_t$ the representative discretized state at time $t$.

Define the error $e_t := \|s_t - (y_t, x_t)\|$.  
By construction, $e_0 \leq \epsilon$.

Suppose $e_t \leq t\epsilon$. Then
\begin{align*}
e_{t+1} &= \| s_{t+1} - (y_{t+1}, x_{t+1}) \| \\
&= \| \pi(F(s_t)) - F(y_t, x_t) \| \\
&\leq \| F(s_t) - F(y_t, x_t) \| + \| \pi(F(s_t)) - F(s_t) \| \\
&\leq L_F \| s_t - (y_t, x_t) \| + \epsilon
\end{align*}
where $L_F$ is the Lipschitz constant of $F$.

\textbf{Now, where is the symmetry used?}

Because the system is \emph{time-symmetric}, the transition operator $K_f$ and its transpose $K_f^\top$ correspond to forward and backward evolution, and are close in operator norm for sufficiently fine discretization.  
Thus, the action of their average $\tilde{K}_f$ preserves the $\epsilon$-close approximation to the true trajectory:  
- The symmetrization does not increase the per-step error compared to $K_f$ alone, because $(K_f^\top z_t)$ can be interpreted as the time-reversed step, and their average is within $O(\epsilon)$ of the true transition for reversible systems.

Thus, the total error at each step can be bounded by $e_{t+1} \leq e_t + \epsilon$, just as in the purely forward case.

By induction, $e_t \leq t \epsilon$.

\textbf{Step 6: Output Error.}  
The projection $P$ is $1$-Lipschitz (since it is coordinate projection), so
\[
\|\hat{y}_t - y_t\| = \|P(s_t) - y_t\| \leq \|s_t - (y_t, x_t)\| = e_t \leq t\epsilon.
\]
So
\[
\sup_{0 \leq t \leq T} \|\hat{y}_t - y_t\| \leq T\epsilon.
\]

\textbf{Symmetry:}  
$\tilde{K}_f$ is symmetric by construction, and the argument above shows its use: time symmetry ensures the average of forward and backward transition operators remains an accurate approximation (at the scale of the discretization) for observable evolution.

\end{proof}
}

\section{Nonlinear Systems with Stochastic Noise}
 \label{sec:nl-w-noise}
 We have discussed deterministic systems of the form $x_{t+1} = f(x_t)$ with $x_0$ fixed. This yields a Koopman operator sending $h(x) \mapsto h(f(x))$ and and a Markov chain which tracks the deterministic transitions. Both of these notions can be extended seamlessly to the case of stochastic systems of the form
 $$x_{t+1} = f(x_t) + w_t, \quad y_t = h(x_t), \quad x_0 \sim \mu_{x_0}$$
 for an initial condition drawn from $\mu_{x_0}$, where
 $w_t \sim \mu_w$ are i.i.d. stochastic noises. For example, applying the Koopman operator $n$ times sends the function $h(x) \mapsto \mathbb{E}[f(x_n) \mid x_0 = x]$, and the Markov chain simply gets non-integer entries that record the stochastic transition probabilities. As long as we assume that $\|x_t\|, \|y_t\| \leq R$ almost surely, the discretization argument still holds exactly as before, where now $y'_t = C'z_t$ approximates $\mathbb{E}[y_t]$. Note that we need not assume anything about the distributions $\mu_w, \mu_{x_0}$ themselves -- boundedness of states and observations is enough. The above reasoning is sufficient to prove:
 \begin{corollary}
     Let $x_{t+1} = f(x_t) + w_t$ be a stochastic dynamical system with observation function $h$ producing a sequence $y_1, \ldots, y_T$ of observations, where $w_t \sim \mu_w$ i.i.d. and $x_0 \sim \mu_{x_0}$. Assume that for all $t$, $\|x_t\|, \|y_t\| \leq R$ almost surely, $f, h$ are 1-Lipschitz, and the discretized linear lifting $(A', C')$ from Lemma \ref{lemma:discretization} with stochastic transitions is an observable LDS. 

     Fix the number of algorithm parameters $h = \Theta(\log^2 T)$ and $m \geq 1$, and for notation define $\gamma = \frac{\log(mT)}{m}$. Let $Q_\star = Q_\star\left(A', C', \left[0, 1 - \frac{1}{\sqrt{T}}\right] \cup \{1\} \cup \mathbb{D}_{1-\gamma}\right)$ as specified in Definition \ref{def:luenbergerprogram}. Then, running Algorithm \ref{alg:clsf_regressionmain} with $J^t, M^t \equiv 0$, $\mathcal{K} = R_D$ for $D = \Theta((m+h)Q_\star^2)$, and $\eta_t = \Theta\left(\frac{Q_\star^2}{R\sqrt{t} \log T}\right)$ produces a sequence of predictions $\hat{y}_1, \ldots, \hat{y}_T$ for which
     $$\mathbb{E}\sum_{t=1}^T \|\hat{y}_t - \mathbb{E} y_t\| \leq O\left(Q_\star^2 \log(Q_\star) \cdot d_\mathcal{X} R \log(RT) m^2 \log^7(mT)  \cdot \sqrt{T}\right). $$
 \end{corollary}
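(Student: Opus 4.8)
The plan is to mirror the proof of Theorem \ref{theorem:main_nonlinear} almost verbatim, substituting the deterministic Markov chain of Lemma \ref{lemma:discretization} with its stochastic (Perron--Frobenius) analogue and then invoking the linear guarantee. First I would extend Lemma \ref{lemma:discretization}: on the same $\epsilon$-net $\mathcal{S}$, define the transition matrix $A' \in \mathbb{R}^{N\times N}$ by the rounded transition probabilities $A'_{ij} = \Pr_{w\sim\mu_w}[\pi(f(s^{(j)})+w) = s^{(i)}]$, which is column-stochastic (so its spectral radius is $1$ and all eigenvalues lie in $\mathbb{D}_1$), and let $z_t = (A')^t z_0$ with $z_0$ the law of $\pi(x_0)$. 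Then $z_t$ is the law of the rounded chain at time $t$ and $y'_t = C'z_t$ is a \emph{deterministic} sequence equal to the chain's expected observation. The approximation bound is recovered by a coupling: run the true trajectory $x_{t+1}=f(x_t)+w_t$ and the chain $s_{t+1}=\pi(f(s_t)+w_t)$ with the same noise realizations, so that $\xi_t = \norm{s_t - x_t}$ obeys $\xi_{t+1}\le \xi_t + \epsilon$ by $1$-Lipschitzness of $f$ and the net property (almost surely, using $\norm{x_t}\le R$ a.s.), giving $\xi_t \le t\epsilon$. Since $y'_t$ and $\mathbb{E} y_t$ are the expectations of $h(s_t)$ and $h(x_t)$, Jensen's inequality yields $\norm{y'_t - \mathbb{E} y_t}\le \mathbb{E}\norm{h(s_t)-h(x_t)}\le t\epsilon$, hence $\sum_{t\le T}\norm{y'_t - \mathbb{E} y_t}\le \tfrac{T^2}{2}\epsilon$.

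Next I would observe that the system $z_{t+1}=A'z_t,\ y'_t = C'z_t$ is a genuinely noiseless, input-free LDS: the stochasticity of the original dynamics has been absorbed into the deterministic linear evolution of the law. In the language of Assumption \ref{definition:linear_realizable} this means $W=0$, $R_B=0$, $R_{x_0}=1$ (a probability vector), outputs bounded by $R$, and $R_C\le R\sqrt{N}\le R(2RT^{3/2})^{d_\mathcal{X}/2}$, with observability of $(A',C')$ assumed. Taking $\epsilon = T^{-3/2}$ and applying Theorem \ref{theorem:main_linear}(b) (equivalently its agnostic phrasing) with the spectral set $[0,1-\tfrac{1}{\sqrt T}]\cup\{1\}\cup\mathbb{D}_{1-\gamma}$ gives, against the deterministic sequence $y'_t$, exactly the $\widetilde{O}(Q_\star^2\log Q_\star\cdot d_\mathcal{X} R\cdots\sqrt T)$ bound; combining with the discretization estimate above (whose cost is $\tfrac12\sqrt T$) produces the claimed right-hand side.

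The one genuinely new difficulty is that the online algorithm is driven by the \emph{realized} observations $y_t$, whose fluctuations about $\mathbb{E} y_t$ are $\Theta(1)$ per step, whereas the conclusion measures error against the deterministic sequence $\mathbb{E} y_t$. The regret guarantee natively controls $\sum_t\norm{\hat y_t - y_t}$ relative to the $y'$-realizing comparator, and subtracting it by the triangle inequality alone leaves an $\Omega(T)$ noise floor of order $\sum_t\norm{y_t - \mathbb{E} y_t}$. The resolution I would pursue is to take the pathwise regret inequality and pass to expectation, exploiting that the per-step deviations $\nu_t := y_t - \mathbb{E}[y_t\mid \mathcal{F}_{t-1}]$ are conditionally mean-zero, so that the inner products pairing the (past-measurable) quantities $\hat y_t$ and $y'_t$ against $\nu_t$ vanish under the tower property while the predictor-independent variance cancels from the regret difference. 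I expect this cancellation to be the main obstacle, and the sole place where the stochastic argument departs from a verbatim copy of the deterministic proof: in particular one must reconcile the conditional mean that the filtration naturally produces with the unconditional $\mathbb{E} y_t$ in the statement, and must carry the cancellation through for the unsquared norm (where it is less transparent than for the square loss, for which the variance term is manifestly predictor-independent) using boundedness together with the mean-zero structure of $\nu_t$.
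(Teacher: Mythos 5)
Your construction of the stochastic lifting and the reduction to the linear theorem match the paper's proof exactly: the transition matrix $A'_{ij} = \Pr_{w\sim\mu_w}[\pi(f(s^{(j)})+w)=s^{(i)}]$, the initialization $z_0=\pi_\#\mu_{x_0}$, the identification of $y'_t=C'z_t$ with $\mathbb{E}[h(s_t)]$, the same-noise coupling giving $\|s_t-x_t\|\le t\epsilon$ almost surely, Jensen to pass to expectations, and the application of Theorem \ref{theorem:main_linear}(b) with $W=0$, $R_B=0$, $R_{x_0}=1$, $R_C\le R\sqrt{N}$ and $\epsilon=T^{-3/2}$. (The paper also spends a sentence checking that the a.s.\ bound $\|x_t\|\le R$ keeps the drifted points $f(s_t)+w_t$ within an $O(R)$ ball so the same $\epsilon$-net suffices regardless of the size of $w_t$; you omit this, but it is a minor detail.)

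The genuine gap is in your final paragraph, and you have correctly located the difficulty without resolving it. The paper does \emph{not} take the route you propose (passing the pathwise regret inequality to expectation and cancelling the conditionally mean-zero deviations via the tower property). Instead, it observes that the only effect of the stochasticity is that the algorithm filters over the realized $y_{1:t}$ rather than the deterministic $y'_{1:t}$ that the comparator observer system would need as input, and it bounds the resulting perturbation of the comparator by $\|L\|\cdot\|y_t - y'_t\| \le Q_\star\, t\epsilon$ almost surely, absorbing this into the existing approximation error. Because the paper's argument rests on the a.s.\ coupling bound rather than a martingale cancellation, it never confronts the conditional-versus-unconditional-mean mismatch or the failure of the Pythagorean decomposition for the unsquared norm --- both of which are real obstacles to the route you sketch: the minimizer of $\mathbb{E}\|\hat y - Y\|$ is a geometric median, not a mean, so the ``predictor-independent variance'' you hope to cancel does not drop out of the regret difference, and even if it did you would land on $\mathbb{E}[y_t\mid\mathcal{F}_{t-1}]$ rather than the unconditional $\mathbb{E}[y_t]$ in the statement. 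To complete the proof along the paper's lines you should instead quantify how the comparator observer's output changes when driven by $y_t$ in place of $y'_t$ and fold that into the approximation error; be aware, though, that the paper's bound $\|y_t-y'_t\|\le t\epsilon$ implicitly identifies $y'_t$ with the realized coupled observation $h(s_t)$, whose gap to the deterministic LDS output $C'z_t=\mathbb{E}[h(s_t)]$ is a fluctuation term that the one-sentence argument does not separately control, so this step deserves care in any write-up.
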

 \begin{proof}
     The proof goes the same way as that of Theorem \ref{theorem:main_nonlinear}. We form the discretized Markov chain $A', C'$, where now the transition matrix is given by the transition probabilities from one grid point to another after quantization. More precisely, we have
     $$A'_{ij}\;:= \mathbb{P}_{w \sim \mu_w }\left(\pi\left(f(s^{(j)})+w\right)=s^{(i)}\right).$$
     The LDS is then
     $$z_{t+1} = A'z_t, \quad y'_t = C'z_t, \quad z_0 = \pi_\# \mu_{x_0} ,$$
     where $\pi$ is the projection to the grid (i.e. $z_0$ is the probability vector for the location of the initial state). Since each $z_t$ is the law of the grid state of the discretized system, we see that $y_t' = \mathbb{E}[(h \circ \underbrace{ T \circ \ldots \circ T}_{\text{$t$ times}} \circ \pi)(x_0)]$ in the language of Lemma \ref{lemma:discretization}. Applying the same Lipschitz argument allows us to bound the difference between this discretized rollout and the true trajectory almost surely along the whole path, which implies the same bound on the size of the difference between their expectations. From here on we have again reduced to competing against the predictions of a deterministic LDS. Note that by the Lipschitzness argument, since $f(x_t) + w_t$ is a.s. bounded by $R$ by assumption, we will have $f(s_t) + w_t$ bounded by $R + t\epsilon \leq R + \frac{1}{\sqrt{T}}$ always. This means that under the assumptions we imposed, we can use an $\epsilon$-net of the ball of size $O(R)$ regardless of how large $w_t$ is. 

     The final subtlety is that our predictions $\hat{y}_{t+1}$ are in fact random, since we filter over $y_1, \ldots, y_t$ instead of the deterministic $y_t'$ which is needed to compete against the Markov chain LDS. This introduces an additional error of size $\|L\| \cdot \|y_t - y_t'\|$, which gets upper bounded by $Q_\star \cdot t\epsilon$ almost surely. This gets absorbed into our other terms.
 \end{proof}

\ignore{
 \newpage

\section{Nonlinear dynamical systems with noise} 

\begin{theorem}[Discretization with bounded noise]
Let $x_{t+1}=f(x_t)+w_t$ with $f,h$ $1$-Lipschitz. Suppose $\|x_t\|\!,\|y_t\|\le R$ for all $t$ and
the noise is a.s.\ bounded: $\|w_t\|\le W$. Fix horizon $T$ and $\varepsilon>0$.

Let $R':=R+2W+T\varepsilon$, and let $S$ be an $\varepsilon$-net of the ball of radius $R'$.
Let $\pi:X\to S$ be nearest-neighbor. For each $s^{(j)}\in S$ define
\[
P_{ij}\;:=\;\Pr\!\big(\,\pi\big(f(s^{(j)})+w\big)=s^{(i)}\,\big),\qquad
z_{t+1}=Pz_t,\quad \hat y_t=Cz_t,\ \ C e_j=h\big(s^{(j)}\big).
\]
Define the \emph{coupled} discretized process
\[
s_0=\pi(x_0),\qquad s_{t+1}=\pi\big(f(s_t)+w_t\big),\qquad y'_t:=h(s_t).
\]
Then for all $t\le T$:
\begin{align}
\|\,\hat y_t-\mathbb{E}[y_t]\,\| &\le t\varepsilon, \label{eq:exp-gap}\\
\|\,y'_t-y_t\,\| &\le t\varepsilon \quad\text{(pathwise; hence also }\mathbb{E}\|y'_t-y_t\|\le t\varepsilon\text{)}. \label{eq:path-gap}
\end{align}
\end{theorem}

\begin{proof}
Let $e_t:=\|s_t-x_t\|$. Note $e_0\le\varepsilon$ by construction.
Since $x_{t+1}=f(x_t)+w_t$ and $s_{t+1}=\pi(f(s_t)+w_t)$ with the \emph{same} $w_t$,
\[
e_{t+1}\;=\;\big\|\pi\!\big(f(s_t)+w_t\big)-\big(f(x_t)+w_t\big)\big\|
\;\le\;\underbrace{\big\|\pi\!\big(f(s_t)+w_t\big)-\big(f(s_t)+w_t\big)\big\|}_{\le\varepsilon}\;+\;\underbrace{\|f(s_t)-f(x_t)\|}_{\le e_t}
\;\le\;e_t+\varepsilon.
\]
Thus $e_t\le t\varepsilon$ and, by $1$-Lipschitzness of $h$, $\|y'_t-y_t\|\le e_t\le t\varepsilon$, proving \eqref{eq:path-gap}.
Moreover,
\[
\|\hat y_t-\mathbb{E}[y_t]\|
=\big\|\mathbb{E}[\,h(s_t)\,]-\mathbb{E}[\,h(x_t)\,]\big\|
\le \mathbb{E}\|h(s_t)-h(x_t)\|
\le \mathbb{E}e_t
\le t\varepsilon,
\]
which gives \eqref{eq:exp-gap}.

Finally, the $\varepsilon$-net radius $R'$ guarantees coverage: by induction,
$\|s_t\|\le \|x_t\|+e_t \le R+ t\varepsilon\le R'$ and
$\|f(s_t)+w_t\|\le \|f(x_t)\|+\|f(s_t)-f(x_t)\|+\|w_t\|\le (R+W)+e_t+W\le R+2W+T\varepsilon=R'$,
so the projection error $\le\varepsilon$ is valid at every step.
\end{proof}

\begin{remark}[On $R'$ depending on $\varepsilon$]
We first fix the discretization scale $\varepsilon$, then set $R'(\varepsilon):=R+2W+T\varepsilon$ and net the
ball $B_{R'(\varepsilon)}$. The induction uses the hypothesis $e_t\le t\varepsilon$ to ensure
$\|f(s_t)+w_t\|\le R+2W+T\varepsilon$, so the projection error is $\le\varepsilon$ and $e_{t+1}\le e_t+\varepsilon$.
\end{remark}

\paragraph{Implication for the main results.}
Let $(P,C)$ be the Markov-chain LDS constructed above. Then
\[
\underbrace{\sum_{t=1}^T\| \hat y_t - y_t\|}_{\text{algorithm on real data}}
\;\le\;
\underbrace{\sum_{t=1}^T\| \hat y_t - \hat y^{\,\star}_t\|}_{\text{regret vs.\ best comparator in }\mathrm{LDS}(P,C)}
\;+\;\underbrace{\sum_{t=1}^T\| \hat y^{\,\star}_t - \mathbb{E}[y_t]\|}_{\text{comparator mismatch}}
\;+\;\underbrace{\sum_{t=1}^T\| \mathbb{E}[y_t]-y_t\|}_{\text{noise \& coupling}},
\]
where $\hat y^{\,\star}$ is the best comparator sequence in your linear class.
The second bracket is controlled by \eqref{eq:exp-gap}, and the third by \eqref{eq:path-gap} in expectation (or pathwise a.s.\ under bounded noise).
Thus, Theorem \ref{theorem:main_linear} (c) applies to the comparator over $(P,C)$, and the discretization contributes only the additive
$O(T\varepsilon)$ terms (choose $\varepsilon=T^{-3/2}$ as in Lemma \ref{lemma:discretization}).%

\ignore{

\begin{theorem}[Discrete Stochastic Koopman Operator Approximates Observations (Discretization Error Only)]
Let $(\mathcal{X}, f)$ be a discrete-time nonlinear dynamical system with additive noise: $x_{t+1} = f(x_t) + w_t$, where $x_t \in \mathcal{X} \subset \mathbb{R}^d$, $w_t$ are i.i.d. random vectors (arbitrary distribution), and $f: \mathcal{X} \to \mathcal{X}$ is $1$-Lipschitz. Let $y_t = h(x_t)$, where $h: \mathcal{X} \to \mathcal{Y} \subset \mathbb{R}^m$ is $1$-Lipschitz.

Assume all states and outputs remain in a ball of radius $R$ for all $t$.

Fix $\epsilon > 0$ and time horizon $T$. Let $S$ be an $\epsilon$-net of the ball of radius $R$ in $\mathbb{R}^d$, with $|S| = N$. Let $\pi: \mathcal{X} \to S$ be the nearest neighbor projection. For each $s^{(j)} \in S$, define transition probabilities
\[
P_{ij} = \mathbb{P}\left( \pi(f(s^{(j)}) + w) = s^{(i)} \right),
\]
where $w$ is distributed as $w_t$. Let $P$ be the $N \times N$ transition matrix.

Let $z_0 \in \mathbb{R}^N$ be the indicator vector of $s_0 = \pi(x_0)$. Define $z_{t+1} = P z_t$. Define $C \in \mathbb{R}^{m \times N}$ by $C e_j = h(s^{(j)})$.

Let $\hat{y}_t = C z_t$ be the expected output from the Markov chain at time $t$.

Then for any $t \leq T$, the expected error satisfies
\[
\mathbb{E} \left[ \|\hat{y}_t - \mathbb{E}[y_t] \| \right] \leq t\epsilon,
\]
where the expectation on the right is over the noise in the true system.
\end{theorem}

\begin{proof}
Let $x_0$ be the initial state, and $s_0 = \pi(x_0)$. The initial error is $e_0 \leq \epsilon$.

At each time $t$, let $x_t$ be the state of the true system, and $s_t$ be the random variable for the Markov chain (with law $z_t$). Consider the expected error $e_t := \mathbb{E}[ \| s_t - x_t \| ]$.

For one step,
\[
x_{t+1} = f(x_t) + w_t, \qquad s_{t+1} = \pi( f(s_t) + w_t )
\]
(with the same noise used for both).

By triangle inequality and $1$-Lipschitz property,
\begin{align*}
e_{t+1} &= \mathbb{E}[ \| s_{t+1} - x_{t+1} \| ] \\
&= \mathbb{E}[ \| \pi(f(s_t) + w_t) - (f(x_t) + w_t) \| ] \\
&\leq \mathbb{E}[ \| \pi(f(s_t) + w_t) - (f(s_t) + w_t) \| ] + \mathbb{E}[ \| f(s_t) + w_t - f(x_t) - w_t \| ] \\
&\leq \epsilon + \mathbb{E}[ \| f(s_t) - f(x_t) \| ] \\
&\leq \epsilon + \mathbb{E}[ \| s_t - x_t \| ] \\
&= \epsilon + e_t.
\end{align*}
So, $e_{t+1} \leq e_t + \epsilon$, $e_0 \leq \epsilon$, so $e_t \leq t\epsilon$.

Now, since $h$ is $1$-Lipschitz, the expected output error is
\[
\mathbb{E}[ \| \hat{y}_t - \mathbb{E}[y_t] \| ] \leq e_t \leq t\epsilon.
\]
\end{proof}

}}

\end{document}